\newcommand{\ts}{\textstyle}
\newcommand{\ie}{\textit{i.e.}}
\newcommand{\eg}{\textit{e.g.}}
\DeclareMathOperator*{\argmin}{arg\,min}
\Crefname{assumption}{Assumption}{Assumptions}
\theoremstyle{plain}
\newtheorem{theorem}{Theorem}
\newtheorem{lemma}[theorem]{Lemma}
\newtheorem{assumption}{Assumption}
\newtheorem{remark}{Remark}
\newtheorem{example}{Example}
\newcommand{\MDP}{\mathrm{MDP}}
\newcommand{\NMDP}{\mathrm{NMDP}}
\newcommand{\Proj}{\operatorname{Proj}}
\newcommand{\bG}{\mathbb{G}}
\def\Holder{{H\"{o}lder}}
\newcommand{\cm}{\mathcal{M}}
\newcommand{\ch}{\mathcal{H}}
\newcommand{\bigO}{\mathcal{O}} %
\newcommand{\traj}{\mathcal{T}}
\newcommand{\ck}{\traj}
\newcommand{\cu}{\mathcal{U}}
\newcommand{\op}{\mathrm{o}_{p}}
\newcommand{\E}{\mathbb{E}}
\newcommand{\Pa}{\mathrm{Pa}}
\newcommand{\oper}{\mathrm{op}}
\newcommand{\mI}{\mathcal{I}}
\newcommand{\var}{\mathrm{var}}
\newcommand{\cov}{\mathrm{cov}}
\newcommand{\thpol}{\pi^{\theta}}
\newcommand{\DO}{\mathrm{EOPPG}}
\newcommand{\prns}[1]{\left(#1\right)}
\newcommand{\braces}[1]{\left\{#1\right\}}
\newcommand{\bracks}[1]{\left[#1\right]}
\newcommand{\abs}[1]{\left|#1\right|}
\newcommand{\Rl}{\mathbb{R}}
\newcommand{\R}[1]{\mathbb{R}^{#1}}
\newcommand{\epol}{\pi^e}
\newcommand{\bpol}{\pi^b}
\def\Cramer{Cram\'{e}r}
\newcommand{\Fcal}{\mathcal{F}}
\newcommand{\RN}[1]{%
  \textup{\uppercase\expandafter{\romannumeral#1}}%
}
\def\boxit#1{\vbox{\hrule\hbox{\vrule\kern6pt\vbox{\kern6pt#1\kern6pt}\kern6pt\vrule}\hrule}}
\newcommand{\kibitz}[2]{\ifnum\Comments=1\textcolor{#1}{#2}\fi}
\def\che{\tikz\fill[scale=0.4](0,.35) -- (.25,0) -- (1,.7) -- (.25,.15) -- cycle;} 
\title{Statistically Efficient Off-Policy Policy Gradients}
\author{Nathan Kallus\\
       Cornell University\\
       New York, NY 10044, USA
       \and
        Masatoshi Uehara\thanks{\url{uehara_m@g.harvard.edu}}\\
       Harvard University\\
       Cambridge, MA 02138, USA}
\date{}
\begin{document}

\maketitle

\begin{abstract}
Policy gradient methods in reinforcement learning update policy parameters by taking steps in the direction of an estimated gradient of policy value. In this paper, we consider the statistically efficient estimation of policy gradients from off-policy data, where the estimation is particularly non-trivial. We derive the asymptotic lower bound on the feasible mean-squared error in both Markov and non-Markov decision processes and show that existing estimators fail to achieve it in general settings. We propose a meta-algorithm that achieves the lower bound without any parametric assumptions and exhibits a unique 3-way double robustness property. We discuss how to estimate nuisances that the algorithm relies on. Finally, we establish guarantees on the rate at which we approach a stationary point when we take steps in the direction of our new estimated policy gradient.
\end{abstract}

\section{Introduction}\label{sec:intro}

Learning sequential decision policies from observational off-policy data is an important problem in settings where exploration is limited and simulation is unreliable. A key application is reinforcement learning (RL) for healthcare \citep{gottesman2019guidelines}. In such settings, data is limited and it is crucial to use the available data \emph{efficiently}. Recent advances in off-policy evaluation \citep{KallusUehara2019,KallusNathan2019EBtC} have shown how efficiently leveraging problem structure, such as Markovianness, can significantly improve off-policy evaluation and tackle such sticky issues as the curse of horizon \citep{liu2018breaking}. An important next step is to translate these successes in off-policy \emph{evaluation} to off-policy \emph{learning}. In this paper we tackle this question by studying the efficient estimation of the \emph{policy gradient} from off-policy data and the implications of this for learning via estimated-policy-gradient ascent.

Policy gradient algorithms \citep[Chapter 13]{sutton2018reinforcement} enable one to effectively learn complex, flexible policies in potentially non-tabular, non-parametric settings and are therefore very popular in both on-policy and off-policy RL. We begin by describing the problem and our contributions, before reviewing the literature in \cref{sec:lit}.

\begin{table}[!]
    \centering
      \caption{Comparison of off-policy policy gradient estimators. Here, $f=\Theta(g)$ means $0<\liminf f/g\leq\limsup f/g<\infty$ (not to be confused with the policy parameter space $\Theta$). 
      In the second row, nuisances must be estimated at $n^{-1/2}$-rates, and in the rows below it, nuisances may be estimated at slow non-parametric rates.
      }
    {\small\begin{tabular}{lccc}\toprule
      Estimator        &  MSE & Efficient & Nuisances \\\midrule
     Reinforce, \cref{eq:step_reinforce}  &  $2^{\Theta(H)}\Theta(1/n)$  & & none \\
     PG, \cref{eq:pg_q}  & $2^{\Theta(H)}\Theta(1/n)$ & &  $q$ (parametric) \\
     EOPPG (NMDP)  & $2^{\Theta(H)}\Theta(1/n)$ & \che & $q,\nabla q$  \\
     EOPPG (MDP) &$\Theta(H^{4}/n)$ & \che  & $q,\mu,\nabla q,\nabla \mu$  \\\bottomrule
    \end{tabular}}
    \label{tab:comparison}
\end{table}

Consider a $(H+1)$-long Markov decision process (MDP), with states $s_t\in\mathcal S_t$, actions $a_t\in\mathcal A_t$, rewards $r_t\in\mathbb R$, initial state distribution $p_0(s_0)$, transition distributions $p_t(s_{t+1}\mid s_t,a_t)$, and reward distribution $p_t(r_{t}\mid s_t,a_t)$, for $t=0,\dots,H$. 
A policy $(\pi_t(a_t\mid s_t))_{t\leq H}$ induces a distribution over trajectories $\traj=(s_0,a_0,r_0,\dots,s_T,a_H,r_H,s_{H+1})$:
\begin{align}\ts \label{eq:trajdist_mdp}
&p_\pi(\traj)=\\\notag&~~~~~~\ts p_0(s_0)\prod_{t=0}^H\pi_t(a_t\mid s_t)p_t(r_t\mid s_t,a_t)p_t(s_{t+1}\mid s_t,a_t).
\end{align}
Given a class of policies $\pi^{\theta}_{t}(a_t\mid s_t)$ parametrized by $\theta\in\Theta \in \Rl^{D}$, we seek the parameters with greatest average reward, defined as
\begin{equation}\ts
J(\theta)=\E_{p_{\pi^\theta}}\bracks{\sum_{t=0}^Hr_t}.\tag{Policy Value}
\end{equation}
A generic approach is to repeatedly move $\theta$ in the direction of the policy gradient (PG), defined as
\begin{align}\ts
Z(\theta)&=\nabla_\theta J(\theta)
\tag{Policy Gradient}\\
&\ts=\E_{p_{\pi^\theta}}\bracks{\sum_{t=0}^Hr_t\sum_{k=0}^t\nabla_\theta \log\pi^{\theta}_{k}(a_k\mid s_k)}\notag
\end{align}
For example, in the \emph{on-policy} setting, 
we can generate trajectories from $\pi^\theta$, $\traj^{(1)},\dots,\traj^{(n)}\sim p_{\pi^\theta}$, and the (GPOMDP variant of the) REINFORCE algorithm \citep{baxter2001infinite} advances in the direction of the stochastic gradient
\begin{equation}
\hat Z^\text{on-policy}(\theta)=\frac1n\sum_{i=1}^n\sum_{t=0}^Hr_t^{(i)}\sum_{k=0}^t\nabla_\theta \log\pi^\theta_{k}(a_k^{(i)}\mid s_k^{(i)}).
\notag
\end{equation}

In the \emph{off-policy} setting, however, we \emph{cannot} generate trajectories from any given policy and, instead our data consists only of trajectory observations from one fixed policy,
\begin{equation}\tag{Off-policy data}\label{eq:offpolicydata}
\traj^{(1)},\dots,\traj^{(n)}\sim p_{\pi^b},
\end{equation}
where $\pi^b$ is known as the \emph{behavior policy}.
With this data, $\hat Z^\text{on-policy}(\theta)$ is no longer a stochastic gradient (\ie, it is biased \emph{and} inconsistent) and we must seek other ways to estimate $Z(\theta)$ in order to make policy gradient updates.

This paper addresses the \emph{efficient} estimation of $Z(\theta)$ from off-policy data and its use in off-policy policy learning. Specifically, our contributions are:
\begin{description}
\item [(\cref{sec:efficinecy})] We calculate the asymptotic lower bound on the minimal-feasible mean square error in estimating the policy gradient, which is of order $\bigO(H^{4}/n)$. In addition, we demonstrate that existing off-policy policy gradient approaches fail to achieve this bound and may even have exponential dependence on the horizon. 
\item [(\cref{sec:algorithm1})] We propose a meta-algorithm called Efficient Off-Policy Policy Gradient (EOPPG) that achieves this bound without any parametric assumptions. In addition, we prove it enjoys a unique 3-way double robustness property. 
\item [(\cref{sec:algorithm2})] We show how to estimate the nuisance functions needed for our meta-algorithm by introducing the concepts of Bellman equations for the gradient of $q$-function and stationary distributions. 
\item [(\cref{sec:optimization})] We establish guarantees for the rate at which we approach a stationary point when we take steps in the direction of our new estimated policy gradient. Based on efficiency results for our gradient estimator, we can prove the regret's horizon dependence is $H^2$.
\end{description}

\subsection{Notation and definitions}\label{sec:notationanddefs}

We define the following variables (note the implicit dependence on $\theta$):
\begin{align}
g_t&\ts=\nabla_\theta \log\pi_{\theta,t}(a_t\mid s_t),\tag{Policy score}\\
q_t&\ts=\E_{p_{\pi^\theta}}\bracks{\sum_{k=t}^Hr_t\mid s_t,a_t},
\tag{$q$-function}\\
v_t&\ts=\E_{p_{\pi^\theta}}\bracks{\sum_{k=t}^Hr_t\mid s_t},\tag{$v$-function}\\
\tilde\nu_t&\ts=\frac{\pi_{t}^\theta(a_t\mid s_t)}{\pi_{t}^b(a_t\mid s_t)},\tag{Density Ratio}\\
\nu_{t':t}&\ts=\prod_{k=t'}^t
\tilde\nu_k
,
\tag{Cumulative Density Ratio}\\
\tilde\mu_t&\ts=\frac{p_{\pi^\theta}(s_t)}{p_{\pi^b}(s_t)}
\tag{Marginal State Density Ratio}\\
\mu_{t}&\ts=\tilde\mu_t\tilde\nu_t
,\tag{Marginal State-Action Density Ratio}\\
d^{q}_t&=\nabla_\theta q_t,d^{v}_t=\nabla_\theta v_t,d^{\mu}_t=\nabla_\theta \mu_t,d^{\nu}_t=\nabla_\theta \nu_{0:t}.\notag
\end{align}
Note that all of the above are simply functions of the trajectory, $\traj$, and $\theta$. To make this explicit, we sometimes write, for example, $q_t=q_t(s_t,a_t)$ and refer to $q_t$ as a function. Similarly, when we estimate this function by $\hat q_t$ we also refer to $\hat q_t$ as the random variable gotten by evaluating it on the trajectory, $\hat q_t(s_t,a_t)$.

We write $a \lnapprox b $ to mean that there exists a \emph{universal} constant $C$ such that $a\leq Cb$. 
We let  $\|\cdot\|_{2}$ denote the Euclidean vector norm and $\|\cdot\|_{\oper}$ denote the matrix operator norm.

All expectations, variances, and probabilities \emph{without} subscripts are understood to be with respect to $p_{\pi^b}$.
Given a vector-valued function of trajectory, $f$, we define
its $L_2$ norm as $$
\|f\|_{L^2_b}^2=\E\|f(\traj)\|_2^2.$$
Further, given trajectory data, $\traj^{(1)},\dots,\traj^{(n)}$,
we define the \emph{empirical expectation} as
$$
\ts\E_nf=\E_n[f(\traj)]=\frac1n\sum_{i=1}^nf(\traj^{(i)}).
$$

\paragraph{MDP and NMDP.} Throughout this paper, we focus on the MDP setting where the trajectory distribution $p_\pi$ is given by \cref{eq:trajdist_mdp}. For completeness, we also consider relaxing the Markov assumption, yielding a \emph{non-Markov} decision process (NMDP), where the trajectory distribution $p_{\pi}(\traj)$ is
\begin{align}\ts \notag
p_0(s_0)\prod_{t=0}^H\pi_t(a_t\mid \ch_{s_t})p_t(r_t\mid \ch_{a_t})p_t(s_{t+1}\mid \ch_{a_t}), 
\end{align}
where $\ch_{a_t}$ is $(s_0,a_0,\cdots,a_t)$ and $\ch_{s_t}$ is $(s_0,a_0,\cdots,s_t)$.

\paragraph{Assumptions.} Throughout we assume that $\forall t\leq H$: $0\leq r_t\leq R_{\max},\,\|g_t\|_{\oper}\leq G_{\max},\,\tilde\nu_{t} \leq C_1,\,\tilde\mu_t \leq C'_2$. And, we define $C_2=C_1C'_2$ so that $\mu_t\leq C_2$.

\subsection{Related literature}\label{sec:lit}

\subsubsection{Off-policy policy gradients}

A standard approach to dealing with off-policy data is to correct the policy gradient equation using \emph{importance sampling} (IS) using the cumulative density ratios, $\nu_{0:t}$ (see, \eg, \citealp[Appendix A]{papini2018stochastic}; \citealp{AAAISSS2018-Hanna}). 
This allows us to rewrite the policy gradient $Z(\theta)$ as an expectation over $p_{\pi^b}$ and then estimate it using an equivalent empirical expectation.

The off-policy version of the classic REINFORCE algorithm \citep{williams1992simple} recognizes
\begin{align}
\ts
\label{eq:reinforce}
   Z(\theta)=\E\bracks{\nu_{0:H}\prns{\sum_{t=0}^Hr_t}\prns{\sum_{t=0}^Hg_t}} 
\end{align}
(recall that $\E$ is understood as $\E_{p_{\pi^b}}$)
and uses the estimated policy gradient given by replacing $\E$ with $\E_n$. (Similarly, if $\nu_{0:H}$ is unknown it can be estimated and plugged-in.)
The GPOMDP variant \citep{baxter2001infinite} refines this by
\begin{equation}\label{eq:reinforce_gomdp}\ts
Z(\theta)=\E\bracks{\nu_{0:H}\sum_{t=0}^Hr_t\sum_{s=0}^t g_s},
\end{equation}
whose empirical version ($\E_n$) has \emph{less} variance and is therefore preferred. A further refinement is given by a step-wise IS \citep{precup2000eligibility} as in \citet{DeisenrothMarc2013ASoP}:
\begin{align}
\label{eq:step_reinforce}
\ts
Z(\theta)=\E\bracks{\sum_{t=0}^H\nu_{0:t}r_t\sum_{s=0}^t g_s}. 
\end{align}

Following \citet{DegrisThomas2013OA}, \citet{ChenMinmin2019TOCf} replace $\nu_{0:t}$ with $\tilde\nu_{t}$ in \cref{eq:step_reinforce} to reduce variance, but this is an \emph{approximation} that incurs non-vanishing bias.

By exchanging the order of summation in \cref{eq:step_reinforce}
and recognizing $q_t=\E\bracks{\sum_{j=t}^{H}\nu_{t+1:j}r_j\mid s_t,a_t}$,
we obtain a policy gradient in terms of the $q$-function \citep{sutton1998intra},
\begin{align}
\label{eq:pg_q}
\ts
      Z(\theta)&\ts=
      \E\left[\sum_{t=0}^{H}\nu_{0:t}g_t q_t\right].     
\end{align}

The off-policy policy gradient (Off-PAC) of \citet{DegrisThomas2013OA} is obtained by replacing $\nu_{0:t}$ with $\tilde\nu_{t}$ in \cref{eq:pg_q}, estimating $q_t$ by $\hat q_t$ and plugging it in, and taking the empirical expectation.
Replacing $\nu_{0:t}$ with $\tilde\nu_{t}$ is intended to reduce variance but it is an \emph{approximation} that ignores the state distribution mismatch (essentially, $\mu_t$) and incurs non-vanishing bias. 
Since it amounts to a reweighting and the unconstrained optimal policy remains optimal on any input distribution, 
in the tabular and fully unconstrained case considered in \citet{DegrisThomas2013OA}, we may still converge. But this fails in the general non-parametric, non-tabular setting. We therefore focus only on \emph{consistent} estimates of $Z(\theta)$ in this paper (which requires zero or vanishing bias).

Many of the existing off-policy RL algorithms including DDPG \citep{silver14} and Off-PAC with emphatic weightings \citep{Imani2018} also use the above trick, \ie, ignoring the state distribution mismatch. Various recent work deals with this problem \citep{LiuYao2019OPGw,TosattoSamuele2020ANOP,DaiBo2019APGf}. These, however, both assume the existence of a stationary distribution and are not efficient. We do not assume the existence of a stationary distribution since many RL problems have a finite horizon and/or do not have a stationary distribution. Moreover, our gradient estimates are efficient in that they achieve the MSE lower bound among all regular estimators.

\subsubsection{Other literatue}
\paragraph{Online off-policy PG.}
Online policy gradients have shown marked success in the last few years \citep{Schulmanetal_NIPS2015}. Various work has investigated incorporating offline information into online policy gradients \citep{gu2017,metelli2018}. Compared with this setting, our setting is completely off-policy with no opportunity of collecting new data from arbitrary policies, as considered in, \eg, \citet{kallus2018balanced,kallus2018confounding,swaminathan2015counterfactual,AtheySusan2017EPL} for $H=0$ and \citet{ChenMinmin2019TOCf,fujimoto19a} for $H\geq1$. 

\paragraph{Variance reduction in PG.} Variance reduction has been a central topic for PG \citep{jie2010,Greensmit2004,Schulmanetal_ICLR2016,WuCathy2018VRfP}. These papers generally consider a given class of estimators given by an explicit formula (such as given by all possible baselines) and show that some estimator is optimal among the class. In our work, the class of estimators among which we are optimal is \emph{all} regular estimators, which both extremely general and also provides minimax bounds in any vanishing neighborhood of $p_{\pi^b}$ \citep[Thm. 25.21]{VaartA.W.vander1998As}.

\paragraph{Off-policy evaluation (OPE).} 
OPE is the problem of estimating $J(\theta)$ for a given $\theta$ from off-policy data.
Step-wise IS \citep{precup2000eligibility} and direct estimation of $q$-functions \citep{munos2008finite} are two common approaches for OPE. However, the former is known to suffer from the high variance and the latter from model misspecification. To alleviate this, the doubly robust estimate combines the two; however, the asymptotic MSE still explodes exponentially in the horizon like $\Omega(C^{H}_1 H^2/n)$ \citep{jiang,thomas2016}. 
\citet{KallusUehara2019} show that the efficiency bound in the MDP case is actually polynomial in $H$ and give an estimator achieving it, which combines marginalized IS \citep{XieTengyang2019OOEf} and $q$-modeling using cross-fold estimation. This achieves MSE $O(C_2H^2/n)$. 
\citet{KallusNathan2019EBtC} further study efficient OPE in the 
infinite horizon MDP setting with non-iid batch data.

\section{Efficiency Bound for Estimating $Z(\theta)$}\label{sec:efficinecy}

Our target estimand is $Z(\theta)$ so a natural question is what is the least-possible error we can achieve in estimating it.
In parametric models, the \Cramer-Rao bound lower bounds the variance of all unbiased estimators and, due to \citet{hajek1970characterization}, also the asymptotic MSE of \emph{all} (regular) estimators. Our model, however, is nonparametric as it consists of \emph{all} MDP distributions, \ie, \emph{any} choice for $p_0(s_0)$, $p_t(r_t\mid s_t,a_t)$, $p_t(r_t\mid s_t,a_t)$, and $\pi_t(a_t\mid s_t)$ in \cref{eq:trajdist_mdp}.
Semiparametric theory gives an answer to this question. 
We first informally state the key efficiency bound result from semiparametric theory in terms of our own model, which is all MDP distributions, and our estimand, which is $Z(\theta)$.
For additional detail, see \cref{sec:semipara}; \citet{VaartA.W.vander1998As,TsiatisAnastasiosA2006STaM}. 

\begin{theorem}[Informal description of \citet{VaartA.W.vander1998As}, Theorem 25.20]\label{eq:vandervaartthm}
There exists a function $\xi_{\MDP}(\traj;p_{\pi^b})$ such that
for {any} MDP distribution $p_{\pi^b}$ and any regular estimator $\hat Z(\theta)$,
\begin{align*}
\ts\inf_{\|v\|_2\leq 1}
v^T(\operatorname{AMSE}[\hat Z(\theta)]-\var[\xi_{\MDP}])v\geq0,
\end{align*}
where $\operatorname{AMSE}[\hat Z(\theta)]=\int zz^TdF(z)$ is the second moment of $F$ the limiting distribution of $\sqrt{n}(\hat Z(\theta)-Z(\theta))$.%
\footnote{Note that by Fatou's lemma, we have that $\liminf_{n\to\infty}n\E[(v^T(\hat Z(\theta)-Z(\theta)))^2]\geq v^T\operatorname{AMSE}[\hat Z(\theta)]v$.}
\end{theorem}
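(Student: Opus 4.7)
The plan is to invoke the abstract semiparametric lower bound (van der Vaart, Theorem 25.20) on our MDP model, which reduces the claim to three standard tasks: (a) characterize the tangent space $\dot{\mathcal P}$ at $p_{\pi^b}$, (b) verify that the functional $p \mapsto Z(\theta)$ is pathwise differentiable along every regular one-dimensional submodel, and (c) exhibit an explicit efficient influence function $\xi_{\MDP}\in\dot{\mathcal P}$ representing the pathwise derivative via the $L^2(p_{\pi^b})$ inner product.

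For (a), I would exploit the factorization in \cref{eq:trajdist_mdp}: the score of any parametric submodel $p_{\pi^b,\eta}$ decomposes as $\ell_0(s_0) + \sum_{t=0}^H \bigl[\ell_t^\pi(a_t\mid s_t) + \ell_t^r(r_t\mid s_t,a_t) + \ell_t^s(s_{t+1}\mid s_t,a_t)\bigr]$, where each summand has conditional mean zero given its conditioning variables. These summands live in mutually orthogonal closed subspaces of $L^2(p_{\pi^b})$, so the tangent space is their direct sum of conditional mean-zero $L^2$ subspaces; the model is locally nonparametric. For (b), observe that $Z(\theta)$ depends only on $p_0,\,p_t(r_t\mid s_t,a_t),\,p_t(s_{t+1}\mid s_t,a_t)$ (and on the known $\pi^\theta$), not on $\pi^b$. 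Differentiating $Z(\theta)=\int\bigl(\sum_t r_t\sum_{k\leq t}g_k\bigr)\,p_{\pi^\theta}(d\traj)$ along a submodel direction, the $\pi^b$-score contribution vanishes, while each non-policy score contributes an expectation against $p_{\pi^\theta}$ which I convert back to a $p_{\pi^b}$-expectation through the state-action density ratio $\mu_t$ (and conditional analogues).

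For (c), collecting the resulting terms and expressing the required conditional expectations using $q_t,\,d^q_t,\,\mu_t$, and $d^\mu_t$ yields an explicit candidate $\xi_{\MDP}(\traj;p_{\pi^b})$ that lies in the tangent space by construction. Verifying the Riesz identity $\partial_\eta Z(\theta)\big|_{\eta=0}=\E[\xi_{\MDP}\cdot\ell]$ for every score $\ell$ upgrades it to the efficient influence function, and Theorem 25.20 then gives $v^T\operatorname{AMSE}[\hat Z(\theta)]v\geq\var(v^T\xi_{\MDP})$ for every unit $v$, which rearranges to the stated positive-semidefinite inequality; the footnote's passage from finite-sample MSE to AMSE is Fatou's lemma applied along the weak convergence $\sqrt{n}(\hat Z(\theta)-Z(\theta))\Rightarrow F$. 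The principal obstacle is step (c): performing the per-factor projection onto the conditional mean-zero subspaces while correctly tracking the gradient terms $d^q_t,\,d^\mu_t$ that appear because the policy score $g_k$ sits \emph{inside} the expectation defining $Z(\theta)$. This bookkeeping is where the algebra is genuinely nontrivial, and it is exactly what forces both $q$ and $\mu$ (together with their $\theta$-gradients) to enter the efficient nuisance set, consistent with the MDP row of \cref{tab:comparison}.
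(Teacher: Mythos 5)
Your proposal is correct in substance but takes a genuinely different route from the paper. The paper does not prove \cref{eq:vandervaartthm} from first principles: it cites van der Vaart's Theorem 25.20 for the abstract lower bound and defers the only nontrivial ingredient --- the existence and explicit form of $\xi_{\MDP}$ --- to \cref{thm:mdp}, whose proof (in \cref{sec:proof}, via the framework of \cref{sec:semipara}) never computes pathwise derivatives against individual scores. Instead it starts from the observation that $\phi=\sum_{t=0}^H r_t\nu_{0:t}\sum_{k=0}^t g_k$ is already a valid influence function in the submodel where $\pi^b$ is known (its empirical mean is an unbiased, asymptotically linear estimator of $Z(\theta)$), and then obtains the efficient influence function by the projection recipe \cref{eq:factoredEIF}, i.e., by summing $\E[\phi\mid O_j,O_{\Pa(j)}]-\E[\phi\mid O_{\Pa(j)}]$ over the reward, transition, and initial-state factors of \cref{eq:trajdist_mdp}. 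Your route --- characterize the tangent space as an orthogonal direct sum of conditional-mean-zero subspaces, verify pathwise differentiability of $p\mapsto Z(\theta)$, and build the Riesz representer factor by factor --- is the classical one and would succeed; your identification of where $d^q_t$ and $d^\mu_t$ enter (the policy score sitting inside the expectation defining $Z(\theta)$) is exactly right, since these are precisely the conditional expectations $\E[\sum_{t>j}r_t\nu_{j+1:t}\sum_{\ell=j+1}^{t}g_\ell\mid s_j,a_j]$ and $\E[\nu_{0:j}\sum_{\ell\le j}g_\ell\mid s_j,a_j]$ that arise when the importance-sampling integrand is conditioned on each factor. What the paper's route buys is that pathwise differentiability never needs a separate verification (it is certified by exhibiting one regular asymptotically linear estimator) and the whole derivation reduces to conditional expectations of a single explicit function; what your route buys is self-containedness and an explicit tangent-space description, at the cost of heavier bookkeeping in your step (c), which you correctly flag as the main obstacle but leave unexecuted.
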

$\xi_{\MDP}$ is called the \emph{efficient influence function} (EIF).
This also implies 
$\|{\operatorname{AMSE}[\hat Z(\theta)]}\|_{\oper}\geq \|{\var[\xi_{\MDP}]}\|_{\oper}$. Here, $\var[\xi_{\MDP}]$ is called the \emph{efficiency bound} (note it is a covariance \emph{matrix}). A regular estimator is any whose limiting distribution is insensitive to small changes of order $O(1/\sqrt{n})$ to $p_{\pi^b}$ that keep it an MDP distribution \citep[see][Chapter 8.5]{VaartA.W.vander1998As}. So the above provides a lower bound on the variance of {all} regular estimators, which is a very general class. It is so general that the bound also applies to \emph{all} estimators at all in a local asymptotic minimax sense \citep[see][Theorem 25.21]{VaartA.W.vander1998As}. 

Technically, we actually needed to prove that $\xi_{\MDP}$ exists in \cref{eq:vandervaartthm}. The following result does so and derives it explicitly. The one after does the same in the {NMDP} model. (Note that, while usually the EIF refers to a function with 0 mean, instead we let the EIF have mean $Z(\theta)$ everywhere as it simplifies the presentation. Since adding a constant does not change the variance, \cref{eq:vandervaartthm} is unchanged.)
\begin{theorem}
\label{thm:mdp}
The EIF of $Z(\theta)$ under MDP, $\xi_{\MDP}$, exists and is equal to
\begin{align*}
\ts
      \sum_{j=0}^{H} (d^{\mu}_j (r_j-q_j)-\mu_jd^{q}_j
      +\mu_{j-1}d^{v}_j +d^{\mu}_{j-1}v_j),
\end{align*}
where $\mu_{-1}=1,\,d^{\mu}_{-1}=0$. 

And, in particular,
\begin{align*}\ts
    \|\var[\xi_{\MDP}]\|_{\oper}\leq C_2R^2_{\max}G_{\max}(H+1)^2(H+2)^2/4.  
\end{align*}
\end{theorem}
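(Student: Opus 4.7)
My plan is to establish the two assertions in turn: first the existence and explicit form of $\xi_{\MDP}$, and then the operator-norm bound on $\var[\xi_{\MDP}]$.

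For the existence and form, I will use the standard pathwise-derivative calculus from semiparametric theory applied to the nonparametric MDP model. At any $p_{\pi^b}$, the tangent space decomposes orthogonally into the subspaces corresponding to the three free factors of \cref{eq:trajdist_mdp} (with $\pi^b$ treated as known/ancillary): $T_{p_0} \oplus \bigoplus_{t=0}^H T_{p_t(r_t \mid s_t,a_t)} \oplus \bigoplus_{t=0}^H T_{p_t(s_{t+1}\mid s_t,a_t)}$. Starting from the representation $Z(\theta) = \E\bigl[\sum_{t=0}^H \mu_t g_t q_t\bigr]$ (which follows from \cref{eq:pg_q} under the MDP assumption via marginal importance sampling), I will perturb one conditional factor at a time along a regular one-dimensional submodel, compute $\frac{d}{d\eta}Z(\theta)\big|_{\eta=0}$, and identify the function $\xi$ for which this derivative equals $\E[\xi \cdot s_\eta]$. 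Collecting the contributions, the key algebraic identities that collapse the sum into the claimed four-term form are the Bellman recursions $q_t = \E[r_t + v_{t+1} \mid s_t, a_t]$ and $v_t(s_t) = \E_{a_t \sim \pi^\theta}[q_t \mid s_t]$, together with their $\theta$-differentiated counterparts $d^q_t = \E[d^v_{t+1} \mid s_t, a_t]$ and $d^v_t = \E_{a_t \sim \pi^\theta}[g_t q_t + d^q_t \mid s_t]$, which appear naturally when a perturbation of $p_t(s_{t+1}\mid s_t,a_t)$ is propagated downstream through all subsequent $q$-values.

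For the variance bound, I will use $\|\var[\xi_{\MDP}]\|_{\oper} \leq \E\|\xi_{\MDP}\|_2^2$ and then apply the triangle inequality to the four families of summands. The inputs I need are (i) $|q_t|, |v_t| \leq R_{\max}(H-t+1)$, trivial from their definitions; (ii) $\|d^q_t\|_2, \|d^v_t\|_2 \leq G_{\max} R_{\max}(H-t+1)(H-t+2)/2$, obtained by a backward induction using $d^q_t = \E[d^v_{t+1}\mid s_t,a_t]$, $d^v_t = \E_{a_t\sim\pi^\theta}[g_t q_t + d^q_t\mid s_t]$, and $\|g_t\|_{\oper}\leq G_{\max}$; and (iii) $\mu_t \leq C_2$, so that $\E[\mu_t^2]\leq C_2\E[\mu_t]=C_2$, and an analogous induction giving a polynomial-in-$H$ bound on $\|d^\mu_t\|_2$. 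Plugging these into each of the $4(H+1)$ summands and summing yields the stated $C_2 R_{\max}^2 G_{\max}(H+1)^2(H+2)^2/4$.

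The main obstacle will be the EIF derivation, specifically the bookkeeping for the transition perturbations: a perturbation to $p_t(s_{t+1}\mid s_t,a_t)$ affects not only that single step but also every downstream $q$-value and its gradient, so its contribution to $dZ/d\eta$ is a priori a double sum over time that must collapse into the single term $\mu_{j-1} d^v_j$. My plan for this is to write the derivative explicitly, then telescope using $q_t = \E[r_t + v_{t+1}\mid s_t,a_t]$ and $d^q_t = \E[d^v_{t+1}\mid s_t,a_t]$; the centering $\E[b_t \mid s_t,a_t]=0$ for reward-factor perturbations is what produces the $(r_j - q_j)$ residual in the first summand, and the centering for transition-factor perturbations is what allows the downstream sum to collapse. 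Once the EIF is established, the variance bound follows by essentially mechanical estimation from the inductive nuisance bounds above.
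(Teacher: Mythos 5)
Your plan for the first assertion (existence and form of $\xi_{\MDP}$) is a legitimate alternative to the paper's route: the paper does not compute pathwise derivatives factor-by-factor but instead starts from the known influence function $\sum_{t=0}^H r_t\nu_{0:t}\sum_{i=0}^t g_i$ of the step-wise IS representation and projects it onto the tangent space of the factored MDP model via the conditioning formula of its Appendix B, then simplifies using the Markov property and the Bellman identities you name. Your direct perturbation calculus would arrive at the same canonical gradient, at the cost of the downstream-propagation bookkeeping you yourself flag; the projection route buys you a purely mechanical computation of iterated conditional expectations with no need to differentiate $q_t$ or $\mu_t$ along the perturbation path. Either way the same four Bellman-type identities do the collapsing, so this half of the proposal is sound in outline.

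The variance bound is where there is a genuine gap. Bounding $\|\var[\xi_{\MDP}]\|_{\oper}$ by $\E\|\xi_{\MDP}\|_2^2$ and applying the triangle inequality across the $4(H+1)$ summands, with the pointwise estimates you list, does \emph{not} yield $C_2R^2_{\max}G_{\max}(H+1)^2(H+2)^2/4$. Each summand such as $\mu_j d^q_j$ has $L^2$ norm of order $R_{\max}\sqrt{G_{\max}}\,(H-j)^2$ (up to the $C_2$ factor), so summing $H+1$ of them and squaring gives a bound of order $H^6$; even if you first regroup into the martingale differences $d^{\mu}_j(r_j-q_j+v_{j+1})+\mu_j(d^v_{j+1}-d^q_j)$ and exploit their orthogonality so that the second moments add, you still get $\sum_j O(H^4)=O(H^5)$. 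The stated $H^4$ constant (and the single power of $C_2$) is obtained in the paper by an entirely different mechanism that your plan omits: (i) the law of total variance writes $\var[\xi_{\MDP}]$ as $\sum_k \E[\mu_{k-1}^2\,\var[\,\cdot\mid s_{k-1},a_{k-1}]]$, (ii) importance sampling converts $\E_{p^{\bpol}}[\mu_{k-1}^2(\cdot)]$ into $C_2\,\E_{p^{\thpol}}[(\cdot)]$, reducing everything to the \emph{on-policy} efficiency bound, and (iii) --- crucially --- the on-policy efficiency bound is not estimated term by term at all, but is upper-bounded by the asymptotic variance of the on-policy REINFORCE estimator $\E_n[\sum_t r_t\sum_{k\le t}g_k]$ (a valid asymptotically linear estimator, hence its variance dominates the efficiency bound), which is $\preceq R^2_{\max}G_{\max}\{(H+1)(H+2)/2\}^2\,\mathcal I$ using the vanishing cross-covariances of the scores $g_k$. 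The subtraction of squared conditional means hidden inside the conditional variances is exactly what your direct pointwise bounds throw away, and without step (iii) you cannot recover the $(H+1)^2(H+2)^2/4$ constant claimed in the theorem.
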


\begin{theorem}
\label{thm:nmdp}
The EIF of $Z(\theta)$ under NMDP, $\xi_{\MDP}$, exists and is equal to
\begin{align*}\ts
\sum_{j=0}^{H}(d^{\nu}_j (r_j -q_j)-\nu_{0:j}d^{q}_j
      +\nu_{0:j-1}d^{v}_j +d^{\nu}_{j-1}v_j).
\end{align*}
where $\nu_{0:-1}=1,\,d^{\nu}_{-1}=0$. 
(Note that here $\nu_{0:j},d^{\nu}_j,d^q_j,d^q_j$ are actually functions of all of $\ch_{a_j}$ and not just of $(s_j,a_j)$ as in MDP case in \cref{thm:mdp}.)

And, in particular,
\begin{align*}\ts
\|\var[\xi_{\NMDP}]\|_{\oper} \leq C^{H}_1R^2_{\max}G_{\max}(H+1)^2(H+2)^2/4.
\end{align*}
\end{theorem}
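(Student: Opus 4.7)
The NMDP model is fully nonparametric: the kernels $p_0$, $p_t(r_t\mid\ch_{a_t})$, $p_t(s_{t+1}\mid\ch_{a_t})$, and $\pi^b_t(a_t\mid\ch_{s_t})$ are unrestricted beyond being probability kernels, so the tangent space at $p_{\pi^b}$ is the direct sum of the associated conditional $L^2_0$ spaces and is rich enough that the efficient influence function is unique and coincides with the pathwise Riesz derivative of $p_{\pi^b}\mapsto Z(\theta)$. My plan is to compute that derivative along a generic regular submodel and recognize it in the claimed form. Concretely, I take a submodel $\{p^{(\epsilon)}_{\pi^b}\}$ through the truth at $\epsilon=0$ with score
$$\ts s(\traj)=s_0(s_0)+\sum_{t=0}^H\bigl(s^{\pi^b}_t(a_t\mid\ch_{s_t})+s^r_t(r_t\mid\ch_{a_t})+s^p_t(s_{t+1}\mid\ch_{a_t})\bigr),$$
each summand having the appropriate conditional mean zero. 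Since $Z(\theta)$ depends on the submodel only through $p_0$, $p^r$, $p^p$ (not $\pi^b$), differentiating at $\epsilon=0$ yields
$$\ts \partial_\epsilon Z^{(\epsilon)}(\theta)\big|_{0}=\E_{p_{\pi^\theta}}\bracks{\prns{\sum_{t=0}^H r_t\sum_{k=0}^t g_k}\bigl(s_0+\ts\sum_{j=0}^H(s^r_j+s^p_j)\bigr)},$$
which I rewrite as an expectation under $p_{\pi^b}$ by inserting the importance weight $\nu_{0:H}$.

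Next I peel off each nuisance-score contribution by iterated expectations. For the reward score $s^r_j$, the NMDP identity $q_j=\E[\sum_{k\geq j}\nu_{j+1:k}r_k\mid\ch_{a_j}]$ together with the product rule on the $g_k$'s produces a contribution $\E[\{d^\nu_j(r_j-q_j)-\nu_{0:j}d^q_j\}s^r_j]$; for the transition score $s^p_j$ the analogous tower argument around $v_{j+1}$ produces $\E[\{d^\nu_{j-1}v_j+\nu_{0:j-1}d^v_j\}s^p_j]$; and the initial-distribution score $s_0$ slots in at $j=0$ via the conventions $\nu_{0:-1}=1$, $d^\nu_{-1}=0$. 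Summing over $j$ and collecting gives $\partial_\epsilon Z^{(\epsilon)}\big|_{0}=\E[\xi_{\NMDP}\,s]$ with $\xi_{\NMDP}$ as stated in the theorem; a final check that $\E[\xi_{\NMDP}]=Z(\theta)$ confirms it is the EIF under the paper's mean-$Z(\theta)$ convention, and nonparametricity of the tangent space gives uniqueness.

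For the operator-norm bound I argue pointwise. Using $\nu_{0:j}\leq C_1^{j+1}\leq C_1^{H}$, $|q_j|,|v_j|\leq(H-j+1)R_{\max}$, $\|d^\nu_j\|_2=\nu_{0:j}\|\sum_{k=0}^j g_k\|_2\leq(j+1)C_1^{j+1}G_{\max}$, and $\|d^q_j\|_2,\|d^v_j\|_2\leq\tfrac12(H-j+1)(H-j+2)R_{\max}G_{\max}$ (from applying the chain rule to the defining sums of $q_j$ and $v_j$), I bound each of the four summand types in $\xi_{\NMDP}$, sum over $j$, and square to obtain a pointwise bound on $\|\xi_{\NMDP}\|_2^2$ that matches the claim up to the stated constants. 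Combining with $\|\var[\xi_{\NMDP}]\|_{\oper}\leq\E\|\xi_{\NMDP}\|_2^2$ then yields the inequality in the theorem.

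The main obstacle will be the peel-off step: in the non-Markov setting every conditional expectation carries the full history $\ch_{a_t}$ rather than just $(s_t,a_t)$, so the tower manipulations and the bookkeeping of which $\nabla_\theta$ acts on $\nu_{0:j}$, $g_k$, $q_j$, or $v_j$ are more delicate than in the MDP proof of \cref{thm:mdp}, where Markov kernels collapse many conditional expectations into state-only functions. Once the NMDP algebra is organized cleanly, the remaining work reduces to summation manipulations fully parallel to the MDP case, with $\mu_j$ replaced throughout by $\nu_{0:j}$.
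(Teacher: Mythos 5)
Your derivation of the influence function itself is a legitimate alternative route: you compute the pathwise derivative of $Z(\theta)$ along a generic submodel and identify the Riesz representer directly, exploiting that the NMDP model is saturated so the gradient is unique. The paper instead starts from the known influence function $\sum_{t=0}^H r_t\nu_{0:t}\sum_{k\le t}g_k$ (valid in the model with known behavior policy) and projects it onto the tangent space via the conditioning formula \cref{eq:factoredEIF}, reusing the Theorem~\ref{thm:mdp} algebra with $\mu_j,d^\mu_j$ replaced by $\nu_{0:j},d^\nu_j$. Both get to the same place; your route requires the extra (correct) observation that the $\pi^b$-score directions pair to zero with $\xi_{\NMDP}$, while the paper's route requires exhibiting one influence function up front. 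Modulo the peel-off bookkeeping you acknowledge, this half is fine.

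The variance bound, however, has a genuine gap: a pointwise bound on $\|\xi_{\NMDP}\|_2$ followed by $\|\var[\xi_{\NMDP}]\|_{\oper}\le\E\|\xi_{\NMDP}\|_2^2$ cannot produce the stated inequality. Every summand of $\xi_{\NMDP}$ carries a factor $\nu_{0:j}$ (via $d^\nu_j=\nu_{0:j}\sum_{k\le j}g_k$ or explicitly), so bounding pointwise by $\nu_{0:j}\le C_1^{j+1}$ and then squaring yields a factor $C_1^{2(H+1)}$, not $C_1^{H}$; squaring the sum of $H+1$ terms each of magnitude $O(H^2)$ yields $H^6$, not $H^4$; and $G_{\max}$ appears squared rather than to the first power. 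Indeed Theorem~\ref{thm:lower_bound_nmdp} shows the variance can genuinely scale like $C_3^{2H}$, so the single power of $C_1^H$ in the upper bound is only recoverable through a change of measure, not pointwise. The paper gets the stated constants by three moves you are missing: (i) a law-of-total-variance (martingale-difference) decomposition of $\var[\xi_{\NMDP}]$ into $H+2$ conditional-variance terms weighted by $\nu_{0:k-1}^2$, which avoids squaring the sum over $j$; (ii) the importance-sampling identity $\E_{\bpol}[\nu_{0:k-1}^2\,\alpha]=\E_{\thpol}[\nu_{0:k-1}\,\alpha]\le C_1^{H}\,\E_{\thpol}[\alpha]$, which absorbs one power of the density ratio into the change of measure; and (iii) in the resulting on-policy expression, comparing the efficiency bound to the variance of the asymptotically linear REINFORCE estimator $\sum_t r_t\sum_{k\le t}g_k$, whose variance is controlled by $R_{\max}^2 G_{\max}\{(H+1)(H+2)/2\}^2$ using the zero cross-covariance of the scores $g_k$ — this is what keeps $G_{\max}$ to the first power and sidesteps any explicit bound on $d^q_j,d^v_j$. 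Without these steps your argument proves a strictly weaker inequality than the one claimed.
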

Formulae for $\var[\xi_{\MDP}]$ and $\var[\xi_{\NMDP}]$ are given in \Cref{sec:proof}.
\cref{thm:nmdp} showed $\var[\xi_{\NMDP}]$ is at most exponential; we next show it is also at least exponential.
\begin{theorem}\label{thm:lower_bound_nmdp}Suppose that $\tilde\nu_{t}\geq C_3$
and that
$\var [(\sum_h g_h)(r_{H}-q_{H})\mid\ch_{a_{H}}]\succeq cI$.
Then, $\|\var[\xi_{\NMDP}]\|_{\oper}\geq C^{2H}_3c$. 
\end{theorem}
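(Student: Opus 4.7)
The plan is to isolate the unique term in $\xi_{\NMDP}$ that carries the randomness of the terminal reward $r_H$ and apply the law of total variance. Inspecting each summand of
\begin{align*}
\xi_{\NMDP}=\sum_{j=0}^H\bigl(d^{\nu}_j(r_j-q_j)-\nu_{0:j}d^{q}_j+\nu_{0:j-1}d^{v}_j+d^{\nu}_{j-1}v_j\bigr),
\end{align*}
one sees that $\nu_{0:j}$, $d^{\nu}_j$, $q_j$, $v_j$, $d^{q}_j$, and $d^{v}_j$ are all measurable with respect to $\ch_{a_j}$ or $\ch_{s_j}$ in the NMDP model, and none of them depends on any reward. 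Hence, letting $\Fcal$ denote the $\sigma$-field generated by $(\ch_{a_H},r_0,\ldots,r_{H-1})$, the only part of $\xi_{\NMDP}$ that is not $\Fcal$-measurable is the term $d^{\nu}_H r_H$ appearing at $j=H$, so we may write $\xi_{\NMDP}=d^{\nu}_H r_H+Y$ with $Y$ being $\Fcal$-measurable.

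By the law of total variance and the $\Fcal$-measurability of $d^{\nu}_H$,
\begin{align*}
\var[\xi_{\NMDP}]\succeq \E\bigl[\var[\xi_{\NMDP}\mid\Fcal]\bigr]=\E\bigl[d^{\nu}_H(d^{\nu}_H)^T\var[r_H\mid\Fcal]\bigr].
\end{align*}
The NMDP factorization makes $r_H$ conditionally independent of $r_0,\ldots,r_{H-1}$ given $\ch_{a_H}$, so $\var[r_H\mid\Fcal]=\var[r_H\mid\ch_{a_H}]$. Next, the log-derivative identity applied to $\nu_{0:H}=\prod_{k=0}^H\tilde\nu_k$, together with $\nabla_\theta\log\tilde\nu_k=g_k$, yields $d^{\nu}_H=\nu_{0:H}\sum_{h=0}^H g_h$.

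Substituting, and using that $\sum_h g_h$ and $q_H$ are $\ch_{a_H}$-measurable so that
\begin{align*}
(\sum_h g_h)(\sum_h g_h)^T\var[r_H\mid\ch_{a_H}]=\var\bigl[(\sum_h g_h)(r_H-q_H)\mid\ch_{a_H}\bigr]\succeq cI
\end{align*}
almost surely by the hypothesis, we obtain $\var[\xi_{\NMDP}]\succeq c\,\E[\nu_{0:H}^2]\,I$. The bound $\tilde\nu_k\geq C_3$ then gives $\E[\nu_{0:H}^2]\geq C_3^{2(H+1)}$, from which the claimed $\|\var[\xi_{\NMDP}]\|_{\oper}\geq C_3^{2H}c$ follows (absorbing the extra $C_3^2$ factor). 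The only delicate step is the measurability bookkeeping that isolates the single $r_H$-dependent summand; this hinges on the fact that each reward appears in exactly one place in the EIF, namely in $d^{\nu}_j r_j$, and the remaining terms $\nu_{0:j-1}d^v_j$, $d^\nu_{j-1}v_j$, etc.\ never touch $r_H$.
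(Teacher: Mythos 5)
Your proof is correct and is essentially the paper's argument: the paper takes the telescoped total-variance decomposition $\sum_{k=0}^{H+1}\E[\nu_{0:k-1}^2\alpha_{k-1}]$ already derived for \cref{thm:nmdp}, drops all the positive-semidefinite terms except the last, and identifies that last term as exactly your $\E\bigl[\nu_{0:H}^2\,\var[(\sum_h g_h)(r_H-q_H)\mid\ch_{a_H}]\bigr]$, which is what your single conditioning on $\Fcal$ produces directly. The only cosmetic difference is that you condition once rather than invoking the full decomposition, and your honest exponent $C_3^{2(H+1)}$ (versus the stated $C_3^{2H}$) reflects the same off-by-one bookkeeping present in the paper's own proof.
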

\Cref{thm:lower_bound_nmdp,thm:nmdp} show that the curse of horizon is generally \emph{unavoidable} in NMDP since the lower bound in is at least \emph{exponential} in horizon. On the other hand, \cref{thm:mdp} shows there is a possibility we can avoid the curse of horizon in MDP in the sense that the lower bound is at most polynomial in horizon; we show we can achieve this bound in \cref{sec:EOPPG}.

First, we show that REINFORCE (which is regular under NMDP) necessarily suffers from the curse of horizon.
\begin{theorem}
\label{thm:step_wise}
The MSE of step-wise REINFORCE \cref{eq:step_reinforce} is 
\begin{align*}\ts
    \sum_{k=0}^{H+1}\E[\nu^2_{k-1}\var[\E[\sum_{t=k-1}^H\nu_{k:t}r_t\sum_{s=k-1}^{t} g_s  \mid \ch_{a_k}]\mid \ch_{a_{k-1}}]],
\end{align*}
which is no smaller than the MSE of REINFORCE \cref{eq:reinforce} and GOMDP-REINFROCE \cref{eq:reinforce_gomdp}. (Equation references refer to the estimate given by replacing $\E$ with $\E_n$.)
\end{theorem}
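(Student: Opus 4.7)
The plan is to first establish the displayed formula for the MSE of step-wise REINFORCE, and then obtain analogous iterated-conditioning decompositions for REINFORCE and GPOMDP whose level-$k$ contributions we can compare directly. Because all three estimators are unbiased and use i.i.d.\ trajectories, the MSE equals $n^{-1}$ times the single-trajectory variance, so it suffices to compare single-trajectory variances.

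For the displayed identity, I would apply the law of total variance along the history filtration, with the conventions that $\ch_{a_{-1}}$ is trivial and $\ch_{a_{H+1}}$ is the full-trajectory $\sigma$-algebra:
\[\var[Y]=\sum_{k=0}^{H+1}\E\!\left[\var[\E[Y\mid\ch_{a_k}]\mid\ch_{a_{k-1}}]\right],\]
where $Y=\sum_{t=0}^H\nu_{0:t}r_t\sum_{s=0}^t g_s$ is the step-wise single-trajectory estimator. I would then manipulate each level's inner quantity $\E[Y\mid\ch_{a_k}]$ by splitting $\nu_{0:t}=\nu_{0:k-1}\cdot\nu_{k:t}$ for $t\ge k-1$ (the summands with $t<k-1$ are $\ch_{a_{k-1}}$-measurable and hence silent inside the inner variance), pulling out the $\ch_{a_{k-1}}$-measurable factor $\nu_{0:k-1}$ as $\nu^2_{k-1}$, and checking that the residual ``past-score'' prefix $\sum_{s<k-1}g_s$ either is absorbed into a $\ch_{a_{k-1}}$-measurable baseline or collapses via the martingale property $\E[\nu_{k+1:t}\mid\ch_{a_k}]=1$, leaving $\sum_{t=k-1}^H\nu_{k:t}r_t\sum_{s=k-1}^t g_s$ as the operative inner random variable.

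For the comparison, I would apply the same level-$k$ decomposition to the REINFORCE and GPOMDP single-trajectory estimators, exploiting their shared outer weight $\nu_{0:H}=\nu_{0:k-1}\cdot\nu_{k:H}$. The martingale identity $\E[\nu_{k:H}\mid\ch_{a_{k-1}}]=1$ again puts $\nu^2_{k-1}$ outside the inner variance, so the $k$-th contribution takes the form $\E[\nu^2_{k-1}\var[\E[\nu_{k:H}\Psi\mid\ch_{a_k}]\mid\ch_{a_{k-1}}]]$ for the relevant tail $\Psi$, in contrast to the step-wise form where $\nu_{k:H}$ is replaced by the partial weight $\nu_{k:t}$ inside each rewarded term. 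Writing $\nu_{k:H}=\nu_{k:t}\cdot\nu_{t+1:H}$ and invoking $\E[\nu_{t+1:H}\mid\ch_{a_t}]=1$ lets one interpret the extra $\nu_{t+1:H}$ factor as an \emph{additional} inner averaging against future randomness; by Jensen's inequality applied to the operator $X\mapsto\E[X\mid\ch_{a_k}]$, this additional averaging cannot increase the level-$k$ inner conditional variance relative to the step-wise form, and summing over $k$ delivers the two claimed inequalities.

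The main obstacle is this final comparison step: one must carefully identify the ``extra'' future-weight factor in the REINFORCE/GPOMDP inner expressions as a genuine conditional expectation against a $\sigma$-algebra containing $\ch_{a_k}$, so that the Jensen reduction points in the required direction rather than collapsing into mere covariance bookkeeping. A secondary technicality is REINFORCE's cross terms $r_t g_s$ with $s>t$, whose zero conditional means given $\ch_{a_{\max(t,s)}}$ must be threaded through the iterated conditioning so that they contribute innocuously and the termwise dominance of the step-wise decomposition over the REINFORCE decomposition is preserved.
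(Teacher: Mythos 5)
Your plan for the displayed identity follows the same route as the paper's own proof (law of total variance along the history filtration, discard past-measurable summands, factor out the weights; the paper does this in four lines using the reward-inclusive histories $\ck_{a_k}$ and only swaps in $\ch_{a_k}$ at the very end). But the step you defer to a ``check'' is exactly where the argument breaks, and the check fails. Two problems. First, a filtration error: with the reward-free histories $\ch_{a_k}$ that you use throughout, your claim that the summands with $t<k-1$ are $\ch_{a_{k-1}}$-measurable is false, since $r_t\notin\ch_{a_{k-1}}$; you need the reward-inclusive filtration $\ck_{a_k}$ for that step. Second, and fatally, after the past summands are removed the remaining part of the estimator is
\begin{align*}
\Bigl(\textstyle\sum_{s=0}^{k-2}g_s\Bigr)\textstyle\sum_{t=k-1}^{H}\nu_{0:t}r_t
\;+\;\textstyle\sum_{t=k-1}^{H}\nu_{0:t}r_t\textstyle\sum_{s=k-1}^{t}g_s ,
\end{align*}
and the first block cannot be dropped: although $\sum_{s=0}^{k-2}g_s$ is past-measurable, it multiplies $\sum_{t\geq k-1}\nu_{0:t}r_t$, whose level-$k$ innovation $\E[\,\cdot\mid\ck_{a_k}]-\E[\,\cdot\mid\ck_{a_{k-1}}]$ is non-degenerate---it carries the reward noise in $r_{k-1}$ and the transition noise in $(s_k,a_k)$. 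It is neither a measurable ``baseline'' nor annihilated by $\E[\nu_{k+1:t}\mid\ck_{a_k}]=1$; the martingale property controls conditional means of future weights, not the variance contribution of this product. Concretely, take $H=1$ with $\pi^\theta_1$ not depending on $\theta$ (so $g_1\equiv 0$), $g_0\not\equiv 0$, and stochastic $r_1$: the claimed formula's $k=2$ term $\E[\nu_{0:1}^2 g_1^2\var[r_1\mid s_1,a_1]]$ vanishes identically, its $k=0,1$ terms are no larger than the corresponding exact levels, yet the exact last level equals $\E[\nu_{0:1}^2 g_0^2\var[r_1\mid s_1,a_1]]>0$. So the claimed formula strictly understates the variance, and no execution of this plan can recover it. (The same unjustified dropping is the second displayed equality of the paper's own proof, so this is a defect of the theorem as stated, not merely of your sketch; but as a proof attempt it is a genuine gap.)

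The comparison half has an independent gap, and here the paper gives you nothing at all: its proof stops at the variance display and never addresses the comparison sentence. Your Jensen argument does not go through, because step-wise REINFORCE is not a conditional expectation of REINFORCE or GPOMDP with respect to any single $\sigma$-algebra. Term by term one has $\nu_{0:t}r_t\sum_{s\leq t}g_s=\E[\nu_{0:H}r_t\sum_{s\leq t}g_s\mid \ck_{a_t},r_t]$, with a \emph{different} conditioning $\sigma$-algebra for each $t$; Jensen therefore yields only term-wise variance inequalities, and the cross-covariances between terms are uncontrolled (this is precisely why per-decision importance weighting is not guaranteed to dominate trajectory weighting, or vice versa, in general). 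Your heuristic also inverts the roles: multiplying by the extra factor $\nu_{t+1:H}$ is not ``additional averaging''---removing it via conditional expectation is---so the term-wise Jensen comparison actually points to the step-wise terms having the \emph{smaller} variance, opposite to the inequality you set out to prove. Note finally that the theorem's literal wording (``no smaller than'') conflicts with the paper's own narrative that step-wise IS is a variance-reducing refinement and with the role \cref{thm:step_wise_lower} is meant to play; so the intended direction is likely the reverse of the one you target, and neither direction follows from the argument you sketch.
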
 
\begin{theorem}Suppose that $\tilde\nu_{t}\geq C_3$ and that
$
\var[r_H g_H \mid \ch_{a_{H}}]\succeq cI 
$.
Then, the operator norm of the variance of step-wise REINFORCE is lower bounded by $cC^{2H}_3/n$. 
\label{thm:step_wise_lower}
\end{theorem}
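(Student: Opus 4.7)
The plan is to extract a single summand from the variance decomposition supplied by \cref{thm:step_wise} and to lower bound it termwise using the two hypotheses. Because every summand in that decomposition is the expectation of a conditional variance matrix and hence is positive semidefinite, keeping only one of them produces a lower bound in the Loewner order, and a fortiori in operator norm, on the full one-sample variance.

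First I would isolate the summand at $k=H+1$ in the formula from \cref{thm:step_wise}. For this value of $k$, the inner sum $\sum_{t=k-1}^{H}\nu_{k:t}r_{t}\sum_{s=k-1}^{t}g_{s}$ collapses to a single term $r_H g_H$: the index $t$ ranges only over $\{H\}$, the factor $\nu_{H+1:H}$ is an empty product equal to $1$, and $\sum_{s=H}^{H}g_{s}=g_{H}$. Since $r_H g_H$ is fully determined by $\ch_{a_{H+1}}$, the outer conditional expectation is trivial, and the summand reduces to $\E\bracks{\nu_{0:H}^{2}\,\var[r_H g_H\mid\ch_{a_H}]}$.

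Next I would invoke the two assumptions. The termwise bound $\tilde\nu_{t}\geq C_{3}$ yields the deterministic inequality $\nu_{0:H}^{2}\geq C_{3}^{2(H+1)}$, and by hypothesis $\var[r_H g_H\mid\ch_{a_H}]\succeq cI$ almost surely. Combining these inside the expectation in the Loewner order gives
\begin{equation*}
\E\bracks{\nu_{0:H}^{2}\,\var[r_H g_H\mid\ch_{a_H}]}\succeq c\,C_{3}^{2(H+1)}\,I,
\end{equation*}
whose operator norm is at least a constant multiple of $C_{3}^{2H}$ (the extra $C_{3}^{2}$ absorbed into the constant). Dividing by $n$ to pass from the one-sample variance in \cref{thm:step_wise} to the variance of the sample-mean estimator then yields the claimed bound.

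I do not expect a substantive obstacle here. The only mildly delicate step is checking the boundary conventions at $k=H+1$ so that the inner conditional expectation really does collapse to $r_H g_H$ and the outer conditioning becomes exactly $\ch_{a_H}$; the rest is a direct application of the two hypotheses together with monotonicity of expectation in the Loewner order.
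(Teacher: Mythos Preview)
Your proposal is correct and mirrors the paper's own argument: the paper's proof simply invokes \cref{thm:step_wise} and then proceeds ``as in the proof of \cref{thm:lower_bound_nmdp},'' which amounts precisely to isolating the $k=H+1$ summand, using positive semidefiniteness of the remaining summands, and applying $\tilde\nu_t\geq C_3$ together with $\var[r_Hg_H\mid\ch_{a_H}]\succeq cI$. The off-by-one in the exponent you flag ($C_3^{2(H+1)}$ versus $C_3^{2H}$) is the same looseness present in the paper's proof of \cref{thm:lower_bound_nmdp}, and your boundary-convention remark about $\ch_{a_{H+1}}$ is the only place requiring care (the filtration's terminal $\sigma$-algebra should be read as the full trajectory so that the inner conditional expectation is indeed the identity).
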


\section{Efficient Policy Gradient Estimation}
\label{sec:EOPPG}

In this section we develop an estimator, EOPPG, for $Z(\theta)$ achieving the lower bound in \cref{thm:mdp} under weak nonparametric rate assumptions. 

\subsection{The Meta-Algorithm}\label{sec:algorithm1}

Having derived the EIF of $Z(\theta)$ under MDP in \cref{thm:mdp}, we use a meta-algorithm based on estimating the unknown functions (aka nuisances) $\mu_j, d^{q}_j, q_j, d^{\mu}_j$ and plugging them into $\xi_\MDP$, as described in \cref{alg:EOPPG}.
In particular, we use a cross-fitting technique \citep{VaartA.W.vander1998As,ChernozhukovVictor2018Dmlf}.
We refer to this as a meta-algorithm as it relies on given nuisances estimators: we show to construct these in \cref{sec:nuisance}.
\begin{algorithm}[t!]
 \caption{Efficient Off-Policy Policy Gradient}
 \begin{algorithmic}
 \label{alg:EOPPG}
\STATE Take a $K$-fold random partition $(I_k)^K_{k=1}$ of the observation indices $\{1,\dots,n\}$ such that the size of each fold, $\abs{I_k}$, is within $1$ of $n/K$.
\STATE Let $\mathcal{L}_k=\{\traj^{(i)}:i\in I_k\},\,\mathcal{U}_k=\{\traj^{(i)}:i\notin I_k\}$
 \FOR{$k\in\{1,\cdots,K\}$}
    \STATE Using only $\mathcal{L}_k$ as data, construct nuisance estimators ${\hat q^{(k)}}_j,\,{\hat \mu^{(k)}}_j,\,\hat d^{q(k)}_j,\hat d^{\mu(k)}_j$ for $\forall j\leq H$ (see \cref{sec:algorithm2})
    \STATE By integrating/summing w.r.t $a_j\sim\thpol_j(a_j\mid s_j)$, set
    \begin{equation}\ts\hspace{-2em}
    \hat v_{j}(s_j)   = \E_{\thpol_j}[\hat q_{j} \mid s_j], ~
    \hat d^{v}_j(s_j) = \E_{\thpol_j}[\hat d^q_j+\hat q_jg_j \mid s_j]
    \label{eq:vhat}
    \end{equation}

    \STATE Construct an intermediate estimate:
    \begin{align*}\ts
    \hat{Z}_k(\theta)=\E_{\cu_k}\bigl[\ts\sum_{j=0}^{H}\bigl(&\hat d^{\mu(k)}_j (r_j- \hat q^{(k)}_j)-\hat \mu^{(k)}_j\hat d^{q(k)}_j \\ 
    &\ts+\hat \mu^{(k)}_{j-1}\hat d^{v(k)}_j +\hat d^{\mu(k)}_{j-1}\hat v^{(k)}_j\bigr)\bigr],
\end{align*}
where $\E_{\cu_k}$ is the empirical expectation over $\cu_k$
    \ENDFOR 
    \STATE Return
    $
    \hat Z^{\DO}(\theta) = \frac{1}{K}\sum^K_{k=1}\hat{Z}_k.
    $
\end{algorithmic}
\end{algorithm}

Note \cref{eq:vhat} in \cref{alg:EOPPG} is computed simply by taking an integral over $a_j$ (or, sum, for finite actions) with respect to the \emph{known} measure (or, mass function) $\thpol_j(a_j\mid s_j)$.

We next prove that EOPPG achieves the efficiency bound under MDP and enjoys a 3-way double robustness (see \cref{fig:weak}).
We require the following about our nuisance estimators, which arises from the boundedness assumed in \cref{sec:notationanddefs}. 
\begin{assumption}
$\forall k\leq K,\,\forall j\leq H$, we have $0\leq {\hat q^{(k)}}_j\leq R_{\max}(H+1-j) ,\, {\hat \mu^{(k)}}_j\leq C_2,\,\|\hat d^{q(k)}_j\|_{\oper},\|\hat d^{\mu(k)}_j\|_{\oper}\leq C_4$.
\end{assumption}

\begin{theorem}[Efficiency]
\label{thm:db}
Suppose $\forall k\leq K,\,\forall j\leq H$,
\begin{align*}\|\hat \mu^{(k)}_j-\mu_j \|_{L^2_b}&=\op(n^{-\alpha_1}),~\|\hat d^{\mu(k)}_j-d^{\mu}_j\|_{L^2_b}=\op(n^{-\alpha_2}),\\
\|\hat q^{(k)}_j-q_j\|_{L^2_b}&=\op(n^{-\alpha_3}),~\|\hat d^{q(k)}_j-d^{q}_j\|_{L^2_b}=\op(n^{-\alpha_4}),
\end{align*}
$\min(\alpha_1,\alpha_2)+\min(\alpha_3,\alpha_4)\geq 1/2$ and $\alpha_1,\alpha_2,\alpha_3,\alpha_4>0$. 
Then, $\sqrt{n}(\hat Z^{\DO}(\theta)-Z(\theta))\stackrel{d}{\rightarrow}\mathcal{N}(0,\var[\xi_{\MDP}])$. 
\end{theorem}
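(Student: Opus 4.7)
The plan is to establish the one-step asymptotic expansion
\[
\sqrt{n}\,(\hat Z^{\DO}(\theta)-Z(\theta)) \;=\; \sqrt{n}\,(\E_n - \E)[\xi_{\MDP}] + o_p(1),
\]
from which the conclusion follows by the multivariate CLT (applicable since $\var[\xi_{\MDP}]$ is finite by \cref{thm:mdp}) together with the identity $\E[\xi_{\MDP}]=Z(\theta)$. For each fold $k$, let $\hat\xi^{(k)}$ denote the plug-in EIF built from the nuisances $\hat q^{(k)},\hat\mu^{(k)},\hat d^{q(k)},\hat d^{\mu(k)}$ (with $\hat v^{(k)},\hat d^{v(k)}$ induced via \cref{eq:vhat}). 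Since $\hat Z_k = \E_{\cu_k}[\hat\xi^{(k)}]$, I would decompose
\[
\hat Z_k - Z(\theta) \;=\; (\E_{\cu_k}-\E)[\xi_{\MDP}] + (\E_{\cu_k}-\E)[\hat\xi^{(k)}-\xi_{\MDP}] + \E\bigl[\hat\xi^{(k)}-\xi_{\MDP}\,\bigm|\,\mathcal{L}_k\bigr].
\]
Averaging over $k$, the first term aggregates to $(\E_n-\E)[\xi_{\MDP}]$ (up to cross-fitting weights tending to $1$), yielding the leading term.

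The second (empirical process) term is the easy one: conditional on $\mathcal{L}_k$ the integrand $\hat\xi^{(k)}-\xi_{\MDP}$ is non-random, so Chebyshev gives
\[
\E\bigl[\bigl\|(\E_{\cu_k}-\E)[\hat\xi^{(k)}-\xi_{\MDP}]\bigr\|_2^2 \,\bigm|\, \mathcal{L}_k\bigr] \;\lnapprox\; \tfrac{1}{|\cu_k|}\,\|\hat\xi^{(k)}-\xi_{\MDP}\|_{L^2_b}^2.
\]
Expanding $\hat\xi^{(k)}-\xi_{\MDP}$ term-by-term in the differences $\hat q^{(k)}-q$, $\hat\mu^{(k)}-\mu$, $\hat d^{q(k)}-d^q$, $\hat d^{\mu(k)}-d^\mu$ and invoking the boundedness assumption together with Cauchy--Schwarz, the rate conditions force this to be $o_p(1/n)$, so this contribution is $o_p(n^{-1/2})$.

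The main obstacle, as is standard for semiparametric efficiency proofs, is controlling the bias term $\E[\hat\xi^{(k)}-\xi_{\MDP}\mid \mathcal{L}_k]$. Here I would exploit the Neyman orthogonality baked into the EIF. Writing $\Delta_\mu^j = \hat\mu_j^{(k)} - \mu_j$, $\Delta_q^j = \hat q_j^{(k)} - q_j$, and similarly for $\Delta_{d^\mu}^j, \Delta_{d^q}^j$ (and the induced $\Delta_v^j,\Delta_{d^v}^j$), I would show by a term-by-term expansion that
\[
\E[\hat\xi^{(k)}\mid\mathcal{L}_k]-Z(\theta) \;=\; \sum_{j=0}^H \sum_{\text{pairs}} \E\bigl[\Delta^j_{a}\,\Delta^j_{b}\bigr],
\]
where every term linear in a single nuisance error cancels and the surviving remainders are cross products such as $\E[\Delta_{d^\mu}^j \Delta_q^j]$, $\E[\Delta_\mu^j \Delta_{d^q}^j]$, $\E[\Delta_{d^\mu}^j \Delta_v^j]$, $\E[\Delta_\mu^j \Delta_{d^v}^j]$. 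By Cauchy--Schwarz each such product is at most $\|\Delta_a\|_{L^2_b}\|\Delta_b\|_{L^2_b}$, so the hypothesis $\min(\alpha_1,\alpha_2)+\min(\alpha_3,\alpha_4)\geq 1/2$ (together with the $L^2$ control of $\Delta_v,\Delta_{d^v}$ via \cref{eq:vhat} and the bounded density ratios) exactly suffices to make the bias $o_p(n^{-1/2})$.

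The genuinely delicate step is performing this $j$-by-$j$ cancellation cleanly. The EIF interleaves contributions at index $j$ (namely $\mu_{j-1}d^v_j$ and $d^\mu_{j-1}v_j$) with contributions at index $j$ from $-\mu_j d^q_j$ and $d^\mu_j(r_j-q_j)$, and matching them requires the MDP Bellman identities
\[
\E[q_j\mid s_j] = v_j, \quad \E[d^q_j + q_j g_j \mid s_j] = d^v_j, \quad \E[r_j + v_{j+1}\mid s_j,a_j] = q_j,
\]
together with the identity obtained by differentiating the last one in $\theta$ (which produces a term involving $d^v_{j+1}$ that must be absorbed). Crucially, these identities use that transition and reward kernels do not depend on $\theta$, so taking $\nabla_\theta$ commutes with $\E_{p_{\pi^b}}[\,\cdot\mid s_j,a_j]$. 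Once the cancellation is verified, Slutsky combines the three decomposed pieces into the claimed asymptotic normal limit.
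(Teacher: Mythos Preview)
Your proposal is correct and matches the paper's proof essentially step for step: the same three-term per-fold decomposition, the same conditional-variance (cross-fitting) argument for the empirical-process piece, and the same second-order bias computation bounded by Cauchy--Schwarz under the rate condition $\min(\alpha_1,\alpha_2)+\min(\alpha_3,\alpha_4)\geq 1/2$. The only thing to watch when you carry out the linear-term cancellation is that, besides the $q$-side Bellman identities you list, you also need the $\mu$-side identities $\E[\mu_j f(s_j,a_j)]=\E[\mu_{j-1}\,\E_{\thpol}[f\mid s_j]]$ and its $\theta$-derivative $\E[d^{\mu}_j f]=\E[d^{\mu}_{j-1}\,\E_{\thpol}[f\mid s_j]+\mu_{j-1}\,\E_{\thpol}[f g_j\mid s_j]]$; the paper uses both pairs to make all first-order terms vanish.
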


An important feature of \cref{thm:db} is that the required nuisance convergence rates are nonparametric (slower than $n^{-1/2}$) and no metric entropy condition (\eg, Donsker) is needed. This allow many types of flexible machine-learning estimators to be used.

Importantly, we experience no variance inflation due to plugging-in estimates instead of true nuisances. While usually we can expect inflation due to nuisance variance (\eg, PG \cref{eq:pg_q} generally has MSE worse than $\Theta(n^{-1/2})$ if we use an estimate $\hat q$ with a nonparametric rate), we avoid this due to the special doubly robust structure of $\xi_\MDP$.

To establish this structure -- the key step of the proof -- we show that
$\hat Z^\DO(\theta)$ is equal to 
\begin{equation}\label{eq:DRstructure}
\ts \E_n[\xi_{\MDP}]+\ts K^{-1}\sum_{k=1}^{K}\sum_{j=0}^{H}\mathrm{Bias}_{k,j}+\op(n^{-1/2}),
\end{equation}
where 
$\|\mathrm{Bias}_{k,j}\|_{2}\lnapprox\ts\|\hat \mu^{(k)}_j-\mu_j \|_{L^2_b}\|\|\hat d^{q(k)}_j-d^{q}_j\|_{L^2_b}+\|\hat d^{\mu(k)}_j-d^{\mu}_j\|_{L^2_b}\|\hat q^{(k)}_j-q_j \|_{L^2_b}+\|\hat \mu^{(k)}_{j-1}-\mu_{j-1}\|_{L^2_b}\|\hat d^{v(k)}_j-d^{v}_j\|_{L^2_b}+\|\hat d^{\mu(k)}_{j-1}-d^{\mu}_{j-1}\|_{L^2_b}\|\hat v^{(k)}_j-v_j \|_{L^2_b}$.

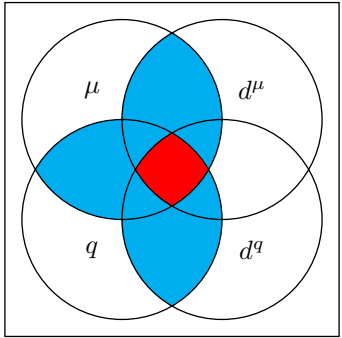
\begin{figure}
\centering
\def\mucirc{(-1/2,1/2) circle (1)}
\def\qcirc{(-1/2,-1/2) circle (1)}
\def\etacirc{(1/2,1/2) circle (1)}
\def\zetacirc{(1/2,-1/2) circle (1)}
\begin{tikzpicture}[x=1.33cm,y=1.33cm]
\begin{scope}
      \clip\mucirc;
      \fill[cyan]\etacirc;
\end{scope}
\begin{scope}
      \clip\qcirc;
      \fill[cyan]\zetacirc;
\end{scope}
\begin{scope}
      \clip\mucirc;
      \fill[cyan]\qcirc;
\end{scope}
\begin{scope}
      \clip\zetacirc;
      \clip\qcirc;
      \clip\mucirc;
      \fill[red]\etacirc;
\end{scope}
\draw[line width=.5] \qcirc (-4/5,-4/5) node [text=black] {$q$}
      \zetacirc (4/5,-4/5) node [text=black] {$d^{q}$}
      \mucirc (-4/5,4/5) node [text=black] {$\mu$}
      \etacirc (4/5,4/5) node [text=black] {$d^{\mu}$}
      (-1.67,-1.67) rectangle (1.67,1.67);
\end{tikzpicture}
\caption{Doubly robust and efficient structure of EOPPG. Every circle represents the event that the corresponding nuisance is well-specified. The cyan-shaded region represents the event that $\hat Z^{\DO}(\theta)$ is consistent. The red-shaded region represents the event that $\hat Z^{\DO}(\theta)$ is efficient (when nuisances are consistently estimated non-parametrically at slow rates).}
\label{fig:weak}
\end{figure}

After establishing this key result, \cref{eq:DRstructure}, \cref{thm:db} follows by showing that the bias term is $\op(n^{-1/2})$ and CLT. We also obtain the following from \cref{eq:DRstructure} via LLN.
\begin{theorem}[3-way double robustness]
\label{thm:db_weak}
Suppose $\forall k\leq K,\forall j\leq H,\,\|\hat \mu^{(k)}_j-\mu^{\dagger}_j \|_{L^2_b},\|\hat d^{q(k)}_j-d^{q\dagger}_j\|_{L^2_b},\|\hat q^{(k)}_j-q^{\dagger}_j\|_{L^2_b},\| \hat d^{\mu(k)}_j-d^{\mu\dagger}_j\|_{L^2_b}$ all converge to 0 in probability. Then $\hat Z^{\DO}(\theta)\to_pZ(\theta)$ if one the following hold:
$\mu^{\dagger}_j=\mu_j,d^{\mu\dagger}_j=d^{\mu}_j$; $q^{\dagger}_j=q_j,d^{q\dagger}_j=d^{q}_j$; or $\mu^{\dagger}_j=\mu_j,q^{\dagger}_j=q_j$.
\end{theorem}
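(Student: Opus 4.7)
My plan is to reuse the doubly robust decomposition exhibited in the proof of \cref{thm:db}, relaxing the conclusion from $\sqrt n$-efficiency to mere consistency. Writing $\hat\eta^{(k)}=(\hat\mu^{(k)},\hat q^{(k)},\hat d^{\mu(k)},\hat d^{q(k)})$, I would split
\begin{align*}
\hat Z^{\DO}(\theta)-Z(\theta)
&=\tfrac{1}{K}\sum_{k=1}^K\big(\E_{\cu_k}-\E[\cdot\mid\mathcal{L}_k]\big)\,\xi_{\MDP}(\traj;\hat\eta^{(k)})\\
&\qquad+\tfrac{1}{K}\sum_{k=1}^K\big\{\E[\xi_{\MDP}(\traj;\hat\eta^{(k)})\mid\mathcal{L}_k]-Z(\theta)\big\}.
\end{align*}
The first summand is a cross-fitted sampling error: conditional on $\mathcal{L}_k$, the nuisance estimates are frozen and the trajectories in $\cu_k$ are i.i.d.\ from $p_{\pi^b}$; under the boundedness in \cref{sec:notationanddefs} and Assumption~2, $\xi_{\MDP}(\traj;\hat\eta^{(k)})$ is uniformly bounded, so conditional Chebyshev gives $o_p(1)$.

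The second (bias) summand is controlled using the very bilinear bound stated right after \cref{eq:DRstructure}. Taking conditional expectation, each $\|\mathrm{Bias}_{k,j}\|_2$ is dominated by four pairwise $L^2_b$ products drawn from the pairs $(\mu,d^q)$, $(d^\mu,q)$, $(\mu_{j-1},d^v)$, $(d^\mu_{j-1},v)$. Under each hypothesized condition, every one of these products contains a factor that is $o_p(1)$. Case (i) kills each pair through its $\mu$- or $d^\mu$-factor. Case (iii) kills terms $(\mu,d^q)$ and $(\mu_{j-1},d^v)$ via the $\mu$-factor, term $(d^\mu,q)$ via $q$, and term $(d^\mu_{j-1},v)$ since $\hat v^{(k)}\to_p v$ whenever $\hat q^{(k)}\to_p q$ — this last implication is just Jensen applied to the definition $\hat v=\E_{\thpol}[\hat q\mid s]$ in \cref{eq:vhat}. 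Case (ii) similarly uses $\hat v^{(k)}\to_p v$ and additionally $\hat d^{v(k)}\to_p d^v$, the latter following from Jensen applied to $\hat d^v=\E_{\thpol}[\hat d^q+\hat q g\mid s]$ together with boundedness of the score $g$. Hence the bias summand is $o_p(1)$, and combining the two pieces yields $\hat Z^{\DO}(\theta)\to_p Z(\theta)$.

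\textbf{Main obstacle.} The substantive content is entirely Step 2: verifying that the population bias $\E[\xi_{\MDP}(\traj;\eta)\mid\mathcal{L}_k]-Z(\theta)$ decomposes exactly as a sum of second-order mixed terms in the four pairs above, not merely that it is bounded in $L^2_b$. This is essentially the Neyman-orthogonality/von~Mises expansion of the EIF, which is already carried out inside the proof of \cref{thm:db}; I would reuse that algebraic expansion verbatim rather than reproduce it. The only genuinely new technical point, beyond what is already done for the efficiency theorem, is the $L^2_b$-continuity of the maps $q\mapsto v$ and $(q,d^q)\mapsto d^v$ induced by \cref{eq:vhat}, which collapses condition (ii) to a statement about $(q^\dagger,d^{q\dagger})$ alone and is a one-line Jensen argument given boundedness of $\thpol$ and $g$.
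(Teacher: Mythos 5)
Your proposal is correct and follows essentially the same route as the paper: the heart of both arguments is the exact second-order product expansion of $\E[\xi_{\MDP}(\traj;\eta')]-Z(\theta)$ into the four pairs $(\mu,d^q)$, $(d^\mu,q)$, $(\mu_{j-1},d^v)$, $(d^\mu_{j-1},v)$ (the paper's Lemma accompanying \cref{thm:db}), combined with the Jensen/boundedness transfer of convergence from $(\hat q,\hat d^q)$ to $(\hat v,\hat d^v)$ and the observation that under each of the three conditions every product has one vanishing factor and one factor bounded by Assumption~2. The only (cosmetic) difference is that you absorb the stochastic fluctuation into a single conditional-Chebyshev term evaluated at $\hat\eta^{(k)}$, whereas the paper centers an empirical-process term at the limits $\eta^\dagger$ and separately invokes the LLN together with $\E[\xi_{\MDP}(\traj;\eta^\dagger)]=Z(\theta)$; both are valid.
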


That is, as long as (a) $\hat\mu,\hat d^{\mu}$ are correct, (b) $\hat q,\hat d^{q}$ are correct, or (c) $\hat \mu,\hat q$ are correct, EOPPG is still consistent. The reason the estimator is not consistent when only $\hat d^{q},\hat d^{\mu}$ are correct is because $\hat d^{v}$ is constructed using both $\hat q,\hat d^{q}$ (see \cref{eq:vhat}).

\subsection{Special Cases}\label{sec:comparison}

\begin{example}[On-policy case]
If $\bpol=\thpol$, then 
\begin{align*}
\xi_{\NMDP}&\ts=\sum_{j=0}^H((\sum_{i=j}^Hr_{i}+v_{i+1}-q_{i})g_j+d^v_j-d^q_j),\\
\xi_{\MDP}&\ts=
\sum_{j=0}^{H} (d^{\mu}_j (r_j-q_j)-d^{q}_j
      +d^{v}_j +d^{\mu}_{j-1}v_j),
\end{align*}
where $d^{\mu}_j=\E[\sum_{i=0}^{j}g_i(a_i\mid s_i) \mid a_j,s_j]$. (Recall that $q_j,d^q_j$ are functions of $\ch_{a_j}$ in NMDP but only of $(s_j,a_j)$ in MDP; similarly for $v_j,d^v_j$ and $\ch_{s_j}$ compared to just $s_j$.)

In the on-policy case, \citet{HuangJiawei2019FISt,ChengChing-An2019TCVf} propose estimators equivalent to estimating $q,d^q$ and plugging into the above equation for $\xi_{\NMDP}$. Using our results (establishing the efficiency bound and that $\xi_{\NMDP}$ is the EIF under NMDP) these estimators can then be shown to be efficient for NMDP (either under a Donsker condition or using cross-fitting instead of their in-sample estimation). These are not efficient under MDP, however, and $\xi_{\MDP}$ will still have lower variance. However, in the on-policy case, $C_1=1$, so the curse of horizon does not affect $\xi_{\NMDP}$ and since it requires fewer nuisances it might be preferable.
\end{example}

\begin{example}[Stationary infinite-horizon case]
Suppose the MDP transition and reward probabilities and the behavior and target policy ($\thpol$) are all stationary (\ie, time invariant so that $\pi=\pi_t$, $g=g_t$, $p_t=p$, \emph{etc.}).
Suppose moreover that, as $H\to\infty$ the Markov chain on the variables $\{(s_t,a_t,r_t):t=0,1,\dots\}$ is ergodic under either the behavior or target policy.
Consider estimating the derivative of the long-run average reward $Z^\infty(\theta)=\lim_{H\to\infty}Z(\theta)/H$. 
By taking the limit of $\xi_\MDP/H$ as $H\to\infty$, we obtain
\begin{align*}
\xi^\infty_\MDP\overset{\text{dist}}{=}&~d^{\mu}(s',a')(r'-q(s',a'))-\mu(s',a')d^{q}(s',a') \\
      &+\mu(s,a)d^{v}(s') +d^{\mu}(s,a)v(s'),
\end{align*}
where $(s,a,r,s',a')$ are distributed as the stationary distribution of $(s_t,a_t,r_t,s_{t+1},a_{t+1})$ under the behavior policy, $\mu(s,a)$ is the ratio of stationary distributions of $(s_t,a_t)$ under the target and behavior policies, $q(s,a)$ and $v(s)$ are the long-run average $q$- and $v$-functions under the target policy, and $d^\mu,\,d^q,\,d^v$ are the derivatives with respect to $\theta$.

It can be shown that under appropriate conditions, $\xi^\infty_\MDP$ is in fact the EIF for $Z^\infty(\theta)$ if our data were iid observations of $(s,a,r,s',a')$ from the stationary distribution under the behavior policy. If our data consists, as it does, of $n$ observations of $(H+1)$-long trajectories, then we can instead construct the estimator
\begin{align*} 
  &\ts\frac1{n(H+1)}\sum_{i=1}^n\sum_{j=0}^{H}\bigl( d^{\mu}(s^{(i)}_j,a^{(i)}_j)(r^{(i)}_j-q(s^{(i)}_j,a^{(i)}_j)) \\
      &\ts-\mu(s^{(i)}_j,a^{(i)}_j)d^{q}(s^{(i)}_j,a^{(i)}_j)
      +\mu(s^{(i)}_{j-1},a^{(i)}_{j-1})d^{v}(s^{(i)}_j)\\
      &\ts+d^{\mu}(s^{(i)}_{j-1},a^{(i)}_{j-1})v(s^{(i)}_j)\bigr),
\end{align*}
where the nuisances $\mu,d^{\mu},q,d^q$ are appropriately estimated in a cross-fold manner as in \cref{alg:EOPPG}. Following similar arguments as in \citet{KallusNathan2019EBtC}, which study infinite-horizon OPE, one can show that this extension of EOPPG maintains its efficiency and 3-way robustness guarantees as long as our data satisfies appropriate mixing conditions (which ensures it appropriately approximates observing draws from the stationary distribution). Fleshing out these details is beyond the scope of this paper.
\end{example}

\begin{example}[Logged bandit case]
If $H=0$ (one decision point), then $\xi_{\MDP}=\xi_{\NMDP}$ are both equal to
\begin{align*} \ts
  \tilde\nu_0(r_0-q_0) g_0+\E_{\thpol_0(a_0\mid s_0)}[ q_0g_0\mid s_0]. 
\end{align*}
We can construct an estimator by cross-fold estimation of $q_0$ (note the last expectation is just an integral over $\thpol(a\mid s)$ for a given $s$). While policy gradients are used in the logged bandit case in the counterfactual learning community \citep[\eg][which use the gradient $\tilde\nu_0r_0 g_0$]{swaminathan2015counterfactual}, as far as we know, no one uses this efficient estimator for the gradient even in the logged bandit case.  
\end{example}

\begin{example}
\label{ex:triply}
By \cref{thm:db_weak}, each of the following is a new policy gradient estimator that is consistent given consistent estimates of its respective nuisances:\\ 
\textbf{a) $\hat \mu=0,\hat d^{\mu}=0$}: $\E_n[\hat d^{v}_0]$,   \\
\textbf{b) $\hat q=0,\hat d^{q}=0$}:   $ \E_n[\sum_{j=0}^{H}\hat d^{\mu}_{j}r_j]$, \\
\textbf{c) $\hat d^{q}=0,\,\hat d^{\mu}=0$}: $\E_n[\sum_{j=0}^{H} \E_{\thpol}[\hat\mu_{j-1}\hat q_{j}g_j\mid s_j]]$,\\
where the inner expectation is only over $a_j\sim \thpol(a_j\mid s_j)$.
\end{example}

\subsection{Estimation of Nuisance Functions}\label{sec:algorithm2}
\label{sec:nuisance}

We next explain how to estimate the nuisances $d_j^\mu$ and $d_j^q$. The estimation of $q_j$ is covered by \citet{ChenJinglin2019ICiB,munos2008finite} and the estimation of $\mu_j$ by \citet{XieTengyang2019OOEf,KallusUehara2019}. 

\paragraph{Monte-Carlo approach.} First we explain a Monte-Carlo way to estimate $d^{q}_j, d^{\mu}_j$. We use the following theorem. 
\begin{theorem}[Monte Carlo representations of $d^{\mu}_j,\,d^{q}_j$]
\label{thm:monte_carlo}
\begin{align*}\ts
d^{q}_j&\ts=\E\left[\sum_{t=j+1}^{H}r_t\nu_{j+1:t} \sum_{\ell=j+1}^{t}g_\ell\mid a_{j},s_{j}\right],\\
\ts d^{\mu}_j&\ts=\E\left[\nu_{0:j}\sum_{\ell=0}^{j}g_\ell\mid a_j,s_j\right].
\end{align*}
\end{theorem}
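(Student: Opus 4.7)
The plan is to rewrite both nuisances as expectations under the behavior policy $p_{\pi^b}$ via change of measure, then exchange $\nabla_\theta$ with the (now $\theta$-free) conditional expectation, and finally apply the log-derivative identity $\nabla_\theta \tilde\nu_\ell = \tilde\nu_\ell\, g_\ell$ (which follows immediately from $\log\tilde\nu_\ell = \log\pi^\theta_\ell - \log\pi^b_\ell$ and $\nabla_\theta\log\pi^b_\ell=0$). Since the dynamics $p_t(s_{t+1}\mid s_t,a_t),\,p_t(r_t\mid s_t,a_t)$ and the behavior policy do not depend on $\theta$, differentiation need only be carried through the importance-ratio factors in the integrand.

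For $d^q_j$: I start from the definition $q_j=\E_{p_{\pi^\theta}}[\sum_{t=j}^H r_t\mid s_j,a_j]$ and change measure one policy factor at a time using the Markov structure. Because only the policy factors at times $j+1,\dots,t$ differ between $p_{\pi^\theta}$ and $p_{\pi^b}$ given $(s_j,a_j)$, this yields $q_j=\E[\sum_{t=j}^H r_t\,\nu_{j+1:t}\mid s_j,a_j]$ with the empty-product convention $\nu_{j+1:j}=1$. Applying $\nabla_\theta$, using $\nabla_\theta\nu_{j+1:t}=\nu_{j+1:t}\sum_{\ell=j+1}^t g_\ell$, and noting that the $t=j$ term vanishes because $\nabla_\theta 1=0$, I obtain the stated expression.

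For $d^\mu_j$: The key intermediate fact is the marginal-ratio identity $\mu_j(s_j,a_j)=\E[\nu_{0:j}\mid s_j,a_j]$. I would verify this by taking any bounded test function $f$ and writing $\E_{p_{\pi^\theta}}[f(s_j,a_j)]=\E[\nu_{0:j}f(s_j,a_j)]=\E[\E[\nu_{0:j}\mid s_j,a_j]f(s_j,a_j)]$, which by definition of $\mu_j$ as the Radon--Nikodym derivative of marginals identifies $\mu_j(s_j,a_j)$ with $\E[\nu_{0:j}\mid s_j,a_j]$. Differentiating through the conditional expectation and applying the log-derivative trick $\nabla_\theta\nu_{0:j}=\nu_{0:j}\sum_{\ell=0}^j g_\ell$ then gives the second formula.

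The only non-routine step will be justifying the interchange of $\nabla_\theta$ and the conditional expectation in both derivations. Under the boundedness assumptions in \cref{sec:notationanddefs} ($r_t\leq R_{\max}$, $\tilde\nu_t\leq C_1$, $\|g_t\|_{\oper}\leq G_{\max}$), both the integrand $r_t\,\nu_{j+1:t}$ (resp.\ $\nu_{0:j}$) and its $\theta$-gradient are uniformly bounded on $\Theta$ by constants depending only on $R_{\max},\,C_1,\,G_{\max},\,H$, so dominated convergence legitimizes the swap. I expect this to be the main technical point worth stating explicitly; the rest is bookkeeping with the product rule.
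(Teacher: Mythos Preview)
Your proposal is correct and follows essentially the same route as the paper: write $q_j$ and $\mu_j$ as importance-weighted conditional expectations under $p_{\pi^b}$, then differentiate using $\nabla_\theta\nu_{a:b}=\nu_{a:b}\sum_{\ell=a}^b g_\ell$ and drop the $t=j$ term in the $d^q_j$ sum. Your added care in justifying the marginal-ratio identity $\mu_j=\E[\nu_{0:j}\mid s_j,a_j]$ via test functions and the derivative--expectation interchange via dominated convergence goes slightly beyond what the paper writes explicitly, but the argument is otherwise identical.
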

Based on this result, we can simply learn $d^q_j,d^\mu_j$ using any regression algorithm. Specifically, we construct the response variables $y^{(i)}_{d^q_j}=\sum_{t=j+1}^{H}r^{(i)}_t\nu^{(i)}_{j+1:t} \sum_{\ell=j+1}^{t}g^{(i)}_\ell$, $y^{(i)}_{d^\mu_j}=\nu^{(i)}_{0:j}\sum_{\ell=0}^{j}g_\ell^{(i)}$, and we regress these on $(a^{(i)}_j,s^{(i)}_j)$.
For example, we can use empirical risk minimization:
$$\ts
\hat d^q_j=\argmin_{f\in\mathcal F}
\frac1n\sum_{i=1}^n\left(y_{d^q_j}^{(i)}-f(a_j,s_j)\right)^2,
$$
and similarly for $\hat d^\mu_j$. Examples for $\mathcal F$ include RKHS norm balls, an expanding subspace of $L_2$ (\ie, a sieve), and neural networks.
Rates for such estimators can, for example, be derived from \citet{WainwrightMartinJ2019HS:A,BartlettPeterL.2005LRc}.

A careful reader might wonder whether estimating nuisances in this way causes the final EOPPG estimator to suffer from the curse of horizon, since $\nu_{0:j}$ can be exponentially growing in $j$. However, as long as we have suitable nonparametric rates (in $n$) for the nuisances as in \cref{thm:db}, the asymptotic MSE of $\hat Z^{\DO}(\theta)$ does \emph{not} depend on the estimation error of the nuisances. These errors only appear in higher-order (in $n$) terms and therefore vanish. This is still an important concern in finite samples, which is why we next present an alternative nuisance estimation approach.

\paragraph{Recursive approach.} 
Next, we explain a recursive way to estimate $d^{q}_j, d^{\mu}_j$. This relies on the following result.
\begin{theorem}[Bellman equations of $d^{q}_j, d^{\mu}_j$ ]
\label{thm:Bellman}
\begin{align*}\ts
    d^{q}_j(s_j,a_j) &=  \E[d^{v}_{j+1}\mid s_j,a_j],\,d^{v}_j(s_j) =\E_{\thpol}[d^{q}_j+g_{j} q_{j}\mid s_{j}] \\
     d^{\mu}_j(s_j,a_j) &=\E[d^{\mu}_{j-1}\tilde\nu_{j}\mid s_j,a_j]+\mu_j g_j. 
\end{align*}
\end{theorem}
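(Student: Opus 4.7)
My plan is to derive each of the three identities by starting from the standard (value/density-ratio) Bellman equation and differentiating with respect to $\theta$, exploiting the fact that under $p_{\pi^\theta}$ the transition kernels $p_t(s_{t+1}\mid s_t,a_t)$ and $p_t(r_t\mid s_t,a_t)$ do not depend on $\theta$ — only the policy factors $\pi^\theta_t(a_t\mid s_t)$ do. Throughout, I will use the score identity $\nabla_\theta \pi^\theta_j(a_j\mid s_j) = \pi^\theta_j(a_j\mid s_j)\, g_j$.

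First I would handle $d^q_j$ and $d^v_j$ together. Since rewards and transitions are $\theta$-free, the classical Bellman equations $q_j(s_j,a_j)=\E[r_j\mid s_j,a_j]+\E[v_{j+1}(s_{j+1})\mid s_j,a_j]$ (with the inner expectations taken under the common kernel, hence equal under $p_{\pi^\theta}$ and $p_{\pi^b}$) give, upon differentiating under the integral sign, $d^q_j(s_j,a_j)=\E[d^v_{j+1}\mid s_j,a_j]$. For $v_j(s_j)=\int \pi^\theta_j(a_j\mid s_j)q_j(s_j,a_j)\,da_j$, the product rule together with the score identity yields $d^v_j(s_j)=\E_{\thpol}[d^q_j+g_j q_j\mid s_j]$, which is the second claim.

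The main obstacle — and the step that needs the most care — is the Bellman equation for $d^\mu_j$, because $\mu_j$ is a ratio of marginals rather than a conditional expectation under a $\theta$-free kernel. My approach will be to decompose $\mu_j=\tilde\mu_j\tilde\nu_j$ and first establish the auxiliary identity
\begin{equation*}
\tilde\mu_j(s_j)=\E[\mu_{j-1}\mid s_j,a_j],
\end{equation*}
where the expectation is under $p_{\pi^b}$. This is proved by writing out $p_{\pi^b}(s_{j-1},a_{j-1}\mid s_j,a_j)\propto p_{\pi^b}(s_{j-1},a_{j-1})\,p_{j-1}(s_j\mid s_{j-1},a_{j-1})\,\pi^b_j(a_j\mid s_j)$ (the Markov property lets us drop the dependence on $a_j$ from the conditioning), and checking that integrating $p_{\pi^\theta}(s_{j-1},a_{j-1})/p_{\pi^b}(s_{j-1},a_{j-1})$ against this conditional density collapses to $p_{\pi^\theta}(s_j)/p_{\pi^b}(s_j)$. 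The base case $\tilde\mu_0=1=\mu_{-1}$ follows from $p_0$ being $\theta$-free.

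Once the auxiliary identity is in hand, differentiating it (the outer expectation is under the $\theta$-free behavior measure, so the derivative passes inside) gives $d^{\tilde\mu}_j=\E[d^\mu_{j-1}\mid s_j,a_j]$. Then I apply the product rule to $\mu_j=\tilde\mu_j\tilde\nu_j$, using $\nabla_\theta \tilde\nu_j=\tilde\nu_j g_j$ from the score identity, to obtain
\begin{equation*}
d^\mu_j=d^{\tilde\mu}_j\,\tilde\nu_j+\tilde\mu_j\tilde\nu_j g_j=\E[d^\mu_{j-1}\tilde\nu_j\mid s_j,a_j]+\mu_j g_j,
\end{equation*}
where I pulled $\tilde\nu_j$ inside the conditional expectation since it is a function of $(s_j,a_j)$. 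This completes the third identity. The only technical subtlety worth flagging is justifying the interchange of differentiation and integration, which follows from the boundedness assumptions of \cref{sec:notationanddefs} (uniformly bounded policy scores and density ratios), standard arguments that I would only sketch.
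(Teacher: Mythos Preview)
Your proposal is correct and follows essentially the same approach as the paper: differentiate the Bellman equations $q_j=\E[r_j+v_{j+1}\mid s_j,a_j]$ and $\mu_j=\E[\mu_{j-1}\tilde\nu_j\mid s_j,a_j]$ with respect to $\theta$, using the product rule and the score identity $\nabla_\theta\tilde\nu_j=\tilde\nu_j g_j$. Your auxiliary identity $\tilde\mu_j=\E[\mu_{j-1}\mid s_j,a_j]$ is exactly the paper's Bellman equation for $\mu_j$ divided through by $\tilde\nu_j$, so the only difference is that you supply a derivation of that recursion (and flag the dominated-convergence step) whereas the paper simply asserts it.
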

This is derived by differentiating the Bellman equations: 
\begin{align*}\ts
    q_j(s_j,a_j) &= \E[r+ v_{j+1}(s_{j+1})\mid s_j,a_j],\\
    \mu_j(s_j,a_j)&= \E[\mu_{j-1}(s_{j-1},a_{j-1})\tilde\nu_{j}|s_j,a_j]. 
\end{align*}
Following \cref{thm:Bellman}, we propose the recursive \cref{alg:gradient_q,alg:gradient_mu} that estimate $d^{q}_j$ using backwards recursion and $d^{\mu}_j$ using forward recursion.

\begin{remark}
\citet{MorimuraTetsuro2010DoLS} discussed a way to estimate the gradient of the stationary distribution in an on-policy setting. In comparison, our setting is off-policy. 
\end{remark}

\begin{remark}
We have directly estimated $d^\mu_j$. Another way is using $d^\mu_j=\tilde\nu_{j}\nabla_\theta \tilde \mu_j +\tilde \mu_j g_j$ and estimating $\nabla_\theta \tilde \mu_j$ recursively based on a Bellman equation for $\nabla \tilde \mu_j$, derived in a similar way to that for $d^\mu_j$ in \cref{thm:Bellman}. 
\end{remark}

\section{Off-policy Optimization with EOPPG}\label{sec:optimization}

Next, we discuss how to use the EOPPG estimator presented in \cref{sec:EOPPG} for off-policy optimization using projected gradient ascent and the resulting guarantees. The algorithm is given in \cref{alg:gradient}. 

\begin{algorithm}[t!]
 \caption{Estimation of  $d^{q}_j$ (Recursive way)}
 \begin{algorithmic}
 \label{alg:gradient_q}
 \STATE \textbf{Input}: $q$-estimates $\hat q_j$, hypothesis classes $\Fcal^{d^{q}_j}$
 \STATE Set $\hat d^{v}_H=\hat d^{q}_H=0$
\FOR{$j=H-1,H-2,\cdots$}
 \STATE Set $\hat d^{q}_{j}\in\argmin\limits_{f\in\Fcal^{d^{q}_j}}\sum\limits_{i=1}^n\left( \hat d^{v}_{j+1}(s^{(i)}_{j+1}) -f(s^{(i)}_j,a^{(i)}_j) \right)^2$
 \STATE Set $\hat d^{v}_{j}(s_j)=\E_{\thpol_j}[\hat d^{q}_j +\hat q_j g_j \mid s_j]$\\(by integrating/summing w.r.t $a_j\sim\thpol_j(a_j\mid s_j))$
 \ENDFOR
\end{algorithmic}
\end{algorithm}
\begin{algorithm}[t!]
 \caption{Estimation of $d^{\mu}_j$ (Recursive way)}
 \begin{algorithmic}
 \label{alg:gradient_mu}
 \STATE \textbf{Input}:  $\mu$-estimates $\hat \mu_j$, hypothesis classes $\Fcal^{d^{\mu}_j}$
 \STATE Set $\hat d^{\mu}_{0}=\nu_0g_0$
\FOR{$j=1,2,\cdots$}
 \STATE  Set $\hat d^{\mu}_j=\argmin_{f\in \in \Fcal^{d^{\mu}_j}}
   \sum_{i=1}^n\bigl(f(s^{(i)}_{j},a^{(i)}_{j})$\\\qquad\qquad\qquad\quad$-\tilde\nu^{(i)}_{j}\hat d^{\mu}_{j-1}(s^{(i)}_{j-1},a^{(i)}_{j-1}) -\hat \mu^{(i)}_{j}g^{(i)}_j \bigr)^2$
 \ENDFOR
\end{algorithmic}
\end{algorithm}
\begin{algorithm}[t!]
 \caption{Off-policy projected gradient ascent}
 \begin{algorithmic}
 \label{alg:gradient}
 \STATE {\bfseries Input:} An initial point $\theta_1\in\Theta$ and step size schedule $\alpha_t$ 
\FOR{$t=1,2,\cdots$}
 \STATE $\tilde \theta_{t+1} =\theta_{t}+\alpha_t \hat Z^{\DO}(\theta_t)$
 \STATE $\theta_{t+1} = \operatorname{Proj}_{\Theta}(\tilde \theta_{t+1})$
 \ENDFOR
\end{algorithmic}
\end{algorithm}

Then, by defining an error $B_t=\hat Z^\DO(\theta_t)-Z(\theta_t)$, we have the following theorem. 
\begin{theorem}\label{thm:optimization}
Assume the function $J(\theta)$ is differentiable and $M$-smooth in $\theta$, $M<1/(4\alpha_t)$, and $\tilde \theta_{t}=\theta_{t}$.\footnote{This means all iterates remain in $\Theta$ so the projection is identity. This is a standard condition in the analysis of non-convex optimization method that can be guaranteed under certain assumptions; see \citet{khamaru18a,NesterovYurii2006CroN}.} Set $J^{*}=\max_{\theta \in \Theta} J(\theta)$. Then, $\{\theta_t\}_{t=1}^{T}$ in \cref{alg:gradient} satisfies
\begin{align}\ts\notag
\frac{1}{T}\ts\sum_{t=1}^{T} \alpha_t &\| Z(\theta_t)\|^2_{2} 
   \ts\leq\frac{4(J^{*}-J(\theta_1))}{T}+\frac{3}{T}\sum_{t=1}^{T}\alpha_t\|B_t\|^2_{2}.  \nonumber 
\end{align}
\end{theorem}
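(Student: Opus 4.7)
The plan is to apply the standard descent-lemma argument for inexact gradient ascent under smoothness, with the twist that the ``noise'' is the gradient-estimation error $B_t$. First, I would write out the smoothness inequality for $J$ at the iterates: since $J$ is $M$-smooth and (by the hypothesis $\tilde\theta_{t+1}=\theta_{t+1}$) the update is simply $\theta_{t+1}=\theta_t+\alpha_t \hat Z^{\DO}(\theta_t)=\theta_t+\alpha_t(Z(\theta_t)+B_t)$, smoothness gives
\[
J(\theta_{t+1})\geq J(\theta_t)+\alpha_t\langle Z(\theta_t),\,Z(\theta_t)+B_t\rangle-\tfrac{M\alpha_t^2}{2}\|Z(\theta_t)+B_t\|_2^2.
\]

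Next I would control the cross-term and the squared-norm term using elementary inequalities. For the cross-term, Cauchy--Schwarz together with $2ab\leq a^2+b^2$ yields $\langle Z(\theta_t),B_t\rangle\geq-\tfrac12\|Z(\theta_t)\|_2^2-\tfrac12\|B_t\|_2^2$, so $\langle Z(\theta_t),Z(\theta_t)+B_t\rangle\geq\tfrac12\|Z(\theta_t)\|_2^2-\tfrac12\|B_t\|_2^2$. For the squared norm, $\|Z(\theta_t)+B_t\|_2^2\leq 2\|Z(\theta_t)\|_2^2+2\|B_t\|_2^2$. Substituting these and using $M\alpha_t<1/4$ (so that $M\alpha_t^2<\alpha_t/4$) to absorb the second-order terms, one obtains
\[
J(\theta_{t+1})\geq J(\theta_t)+\tfrac{\alpha_t}{4}\|Z(\theta_t)\|_2^2-\tfrac{3\alpha_t}{4}\|B_t\|_2^2.
\]

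Finally I would telescope this inequality from $t=1$ to $T$. The left-hand side becomes $J(\theta_{T+1})-J(\theta_1)\leq J^{*}-J(\theta_1)$, which gives
\[
\tfrac14\sum_{t=1}^T\alpha_t\|Z(\theta_t)\|_2^2\leq J^{*}-J(\theta_1)+\tfrac34\sum_{t=1}^T\alpha_t\|B_t\|_2^2.
\]
Multiplying both sides by $4/T$ yields the stated bound.

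There is no real obstacle here: the argument is entirely routine once the hypothesis $\tilde\theta_{t+1}=\theta_{t+1}$ is invoked to identify the update with an unprojected inexact gradient step. The only point requiring care is the choice of the constant in $2ab\leq a^2+b^2$ and in the $\|a+b\|^2\leq 2\|a\|^2+2\|b\|^2$ bound, which must be compatible with the step-size condition $M\alpha_t<1/4$ in order to make the $\|Z(\theta_t)\|^2$ coefficient exactly $\alpha_t/4$ and the $\|B_t\|^2$ coefficient exactly $3\alpha_t/4$; these two constants match the $4$ and $3$ appearing in the statement.
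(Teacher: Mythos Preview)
Your proposal is correct and follows essentially the same approach as the paper: the paper applies the $M$-smoothness inequality to $f=-J$, bounds the cross term via $2|\langle a,b\rangle|\leq\|a\|_2^2+\|b\|_2^2$ and the squared norm via $\|a+b\|_2^2\leq 2\|a\|_2^2+2\|b\|_2^2$, uses $M\alpha_t<1/4$ to obtain $f(\theta_t)-f(\theta_{t+1})\geq \tfrac{\alpha_t}{4}\|\nabla f(\theta_t)\|_2^2-\tfrac{3\alpha_t}{4}\|B_t\|_2^2$, and then telescopes. The only cosmetic difference is that you work directly with $J$ (ascent) rather than passing to $f=-J$ (descent).
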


\Cref{thm:optimization} gives a bound on the average derivative. That is, if we let $\hat\theta$ be chosen at random from $\{\theta_t\}_{t=1}^T$ with weights $\alpha_t$, then
via Markov's inequality,
$$\ts
Z(\hat\theta)=\mathcal O_p(\frac{4}{T}(J^{*}-J(\theta_1))+\frac{3}{T}\sum_{t=1}^{T}\alpha_t\|B_t\|^2_{2}).
$$
So as long as we can bound the error term $\sum_t \alpha_t\|B_t\|^2_{2}/T$, we have that $\hat\theta$ becomes a near-stationary point.

This error term comes from the noise of the EOPPG estimator. A heuristic calculation based on \cref{thm:db} that ignores the fact that $\theta_t$ is actually random would suggest
\begin{align*}\ts 
   \|B_t\|^2_{2} &\lnapprox \operatorname{trace}(\var[\xi_{\MDP}])+\op(1/n) \nonumber \\
   &\lnapprox \frac{DC_2R^2_{\max}G_{\max}(H+1)^2(H+2)^2}{n}+\op(1/n). 
\end{align*}

To establish this formally, we recognize that $\theta_t$ is a random variable and bound the \emph{uniform} deviation of EOPPG over all $\theta\in\Theta$. We then obtain the following high probability bound on the cumulative errors.  
\begin{theorem}[Bound for cumulative errors]
\label{thm:error}
Suppose the assumptions of \cref{thm:db} hold, that $\theta \to \xi_{\MDP,j}$ is almost surely differentiable with derivatives bounded by $L$ for $j\in \{1,\cdots,D\}$, where $\xi_{\MDP,j}$ is a $j$-th component of $\xi_{\MDP}$, and that $\Theta$ is compact with diameter $\Upsilon$. 

Then, for any $\delta$, there exists $N_{\delta}$ such that $\forall n\geq N_{\delta}$, we have that, with probability at least $1-\delta$,
\begin{align*}\ts 
    &\ts\frac{1}{T}\sum_t\|B_t\|^2_{2}  \lnapprox U_{n,T,\delta},\\
    &\ts U_{n,T,\delta}=\frac{D(L^2 D\Upsilon^2+C_2 G_{\max}R^2_{\max}(H+1)^2(H+2)^2 \log(TD/\delta))}{n}. 
\end{align*}
\end{theorem}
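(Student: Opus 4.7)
My plan is to reduce Theorem~\ref{thm:error} to a uniform empirical-process bound. Because $\theta_t$ is data-dependent in \cref{alg:gradient}, the pointwise CLT of \cref{thm:db} does not directly apply at each iterate; however, the trivial deterministic inequality $\|B_t\|_2^2 \leq \sup_{\theta \in \Theta}\|\hat Z^\DO(\theta)-Z(\theta)\|_2^2$ valid for every $t$ shows that the cumulative average on the left-hand side is itself dominated by a single supremum, so it suffices to bound that supremum in high probability. The $\log T$ factor in the target bound will enter separately through a union bound across the $T$ iterates used to control the probability jointly.

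Next I would invoke the doubly-robust expansion \cref{eq:DRstructure} to write $\hat Z^\DO(\theta) - Z(\theta) = (\E_n - \E)[\xi_\MDP(\theta)] + R_n(\theta)$, where $R_n(\theta)$ collects the $K^{-1}\sum_k\sum_j \mathrm{Bias}_{k,j}$ terms and the $\op(n^{-1/2})$ residual. Under the nuisance-rate hypotheses inherited from \cref{thm:db}, the product bound on $\|\mathrm{Bias}_{k,j}\|_2$ already gives $\|R_n(\theta)\|_2^2 = \op(n^{-1})$ pointwise, and smoothness of the hypothesis classes used in \cref{alg:gradient_q,alg:gradient_mu} and of $g_t$ in $\theta$ lets this extend uniformly in $\theta$, so $\sup_\theta\|R_n(\theta)\|_2^2 = \op(n^{-1})$. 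For sufficiently large $n$ (justifying the ``$\exists N_\delta$'' quantifier in the statement) this piece is absorbed into the constants of the final bound.

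The main quantitative step is then to bound $\sup_{\theta \in \Theta}\|(\E_n-\E)[\xi_\MDP(\theta)]\|_2^2$ by a covering argument. I would (i) take an $\epsilon$-net of $\Theta$ of cardinality at most $(3\Upsilon/\epsilon)^D$; (ii) for each net point and each coordinate $j \in \{1,\ldots,D\}$, apply Bernstein's inequality using $\var[\xi_{\MDP,j}(\theta)] \leq \|\var[\xi_\MDP(\theta)]\|_\oper \leq C_2 R_{\max}^2 G_{\max}(H+1)^2(H+2)^2/4$ from \cref{thm:mdp}, together with the almost-sure boundedness of $\xi_\MDP$ implied by the assumptions in \cref{sec:notationanddefs}, and union-bound across net points, the $D$ coordinates, and the $T$ iterates at level $\delta$; (iii) extend the bound from the net to arbitrary $\theta$ using the $L$-Lipschitz assumption on each $\xi_{\MDP,j}$, which inflates the squared norm by at most $O(DL^2\epsilon^2)$. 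Choosing $\epsilon \asymp \Upsilon/\sqrt{n}$ balances the covering slack against the concentration error; summing the per-coordinate variance bound across the $D$ coordinates then yields the $\frac{D\,C_2 G_{\max} R_{\max}^2 (H+1)^2(H+2)^2 \log(TD/\delta)}{n}$ piece, while the Lipschitz extension contributes the $\frac{D^2 L^2 \Upsilon^2}{n}$ piece.

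The main obstacle I anticipate is upgrading the product-error bound on $\mathrm{Bias}_{k,j}$ from the pointwise statement in \cref{eq:DRstructure} to a uniform statement in $\theta$, because the nuisance estimators and the scores $g_t$ must be controlled jointly with their Lipschitz dependence in $\theta$; this is what forces the qualitative ``$n \geq N_\delta$'' condition. Secondary but routine issues include identifying an explicit a.s.\ envelope for $\xi_\MDP$ to justify Bernstein and absorbing the $D\log(\Upsilon/\epsilon) \sim D\log\sqrt{n}$ metric-entropy logarithm into the stated $\log(TD/\delta)$ via the universal constant hidden in $\lnapprox$ once $n$ is large enough.
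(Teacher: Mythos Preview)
Your proposal follows the same architecture as the paper's proof: use the doubly-robust expansion \cref{eq:DRstructure} to write $B_t=(\E_n-\E)[\xi_\MDP(\theta_t)]+R_n(\theta_t)$, absorb the remainder through the $N_\delta$ quantifier, bound the empirical-process term uniformly over $\theta\in\Theta$, and union-bound across the $D$ coordinates and the $T$ iterates. The paper even makes the same informal move you flag as your main obstacle: it writes $B_t=\E_n[\xi_\MDP(\theta_t)-Z(\theta_t)]+\op(n^{-1/2})$ pointwise from \cref{thm:db} and then takes a union bound over $t$ without making the uniformity of the remainder in $\theta$ explicit.

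The one substantive difference is the technique for the uniform deviation. The paper invokes a Talagrand-type concentration inequality together with a Rademacher-complexity bound for the Lipschitz-parameterized class $\{\xi_{\MDP,j}(\cdot\,;\theta):\theta\in\Theta\}$; via Dudley's entropy integral this gives the complexity contribution $L\sqrt{D/n}$ directly, with the logarithm in the covering number integrated out. Your $\epsilon$-net plus Bernstein route is more elementary but, exactly as you anticipate, leaves an additive $D\log(\Upsilon/\epsilon)\sim D\log\sqrt n$ from the net cardinality inside the concentration term. Absorbing that into $\log(TD/\delta)$ only works when $T$ grows polynomially in $n$, which the paper uses downstream but is not part of the hypotheses of \cref{thm:error} itself. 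Replacing the single net by chaining (equivalently, passing through Rademacher complexity as the paper does) removes this slack and recovers the stated bound exactly; otherwise your argument gives the right structure but a slightly weaker bound with an extra $\log n$ factor in the variance term.
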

This shows that, by letting $T=n^{\beta}$ ($\beta>1$) be sufficiently large, we can obtain $Z(\hat\theta)={\mathcal O}_p(H^4C_2\log(n)/n)$ for $\hat\theta$ chosen at random from $\{\theta_t\}_{t=1}^T$ as above. 
Note that if we had used other policy gradient estimators such as (step-wise) REINFORCE, PG as in \cref{eq:pg_q}, or off-policy variants of the estimators of \citet{HuangJiawei2019FISt,ChengChing-An2019TCVf}, then the term $C^{H}_1$ would have appeared in the bound and the curse of horizon would have meant that our learned policies would not be near-stationary for long-horizon problems.

\begin{remark}
Many much more sophisticated gradient-based optimization methods equipped with our EOPPG gradient estimator can be used in place of the vanilla projected gradient ascent in \cref{alg:gradient}. We refer the reader to \citet{JainPrateek2017NOfM} for a review of non-convex optimization methods.
\end{remark}

\paragraph{The concave case.}
The previous results study the guarantees of \cref{alg:gradient} in terms of convergence to a stationary point, which is the standard form of analysis for non-convex optimization. If we additionally assume that $J(\theta)$ is a \emph{concave} function then we can see how the efficiency of EOPPG translates to convergence to an optimal solution in terms of the \emph{regret} compared to the optimal policy. 
In this case we set $\hat \theta =\frac{1}{T}\sum_{t=1}^{T} \theta_t$, for which we can prove the following:
\begin{theorem}[Regret  bound]
\label{thm:convexity}
Suppose the assumptions of Theorem \ref{thm:error} hold, that $ J(\theta)$ is a concave function, and that $\Theta$ is convex. For a suitable choice of $\alpha_t$ we have that, for any $\delta$, there exists $N_{\delta}$ such that $\forall n\geq N_{\delta}$, we have that, with probability at least $1-\delta$, 
\begin{align*}\ts
&J^{*}-J(\hat \theta) \lnapprox \Upsilon{ \frac{\sup_{\theta \in \Theta} \|Z(\theta)\|_{2}+\sqrt{U_{n,T,\delta}}}{\sqrt T}}.
\end{align*}
\end{theorem}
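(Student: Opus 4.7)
The plan is to apply the classical regret analysis for projected gradient ascent on a concave objective, treating the EOPPG error $B_t = \hat Z^{\DO}(\theta_t) - Z(\theta_t)$ as a perturbation of the true gradient, and then to close the proof by invoking the uniform deviation bound of \cref{thm:error}. Non-expansiveness of the Euclidean projection onto the convex set $\Theta$ gives, for any $\theta^{\star}\in\argmax_{\theta\in\Theta} J(\theta)$,
\begin{equation*}
\|\theta_{t+1}-\theta^{\star}\|_2^2 \leq \|\theta_t+\alpha_t \hat Z^{\DO}(\theta_t)-\theta^{\star}\|_2^2.
\end{equation*}
Expanding the square, rearranging, and telescoping with a constant step size $\alpha_t\equiv \alpha$ (using $\|\theta_1-\theta^{\star}\|_2\leq \Upsilon$ from compactness) yields
\begin{equation*}
\alpha \sum_{t=1}^T \hat Z^{\DO}(\theta_t)^\top(\theta^{\star}-\theta_t) \leq \tfrac{1}{2}\Upsilon^2 + \tfrac{\alpha^2}{2}\sum_{t=1}^T \|\hat Z^{\DO}(\theta_t)\|_2^2.
\end{equation*}

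I would then decompose $\hat Z^{\DO}(\theta_t) = Z(\theta_t) + B_t$ on the left-hand side. For the $Z$ piece, concavity of $J$ supplies the subgradient inequality $Z(\theta_t)^\top(\theta^{\star}-\theta_t) \geq J^{\star}-J(\theta_t)$; for the $B_t$ piece, Cauchy--Schwarz together with the diameter bound gives $|B_t^\top(\theta^{\star}-\theta_t)| \leq \Upsilon \|B_t\|_2$. On the quadratic side I use $\|\hat Z^{\DO}(\theta_t)\|_2^2\leq 2G^2 + 2\|B_t\|_2^2$ where $G := \sup_{\theta\in\Theta}\|Z(\theta)\|_2$. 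Finally, concavity of $J$ and the definition of $\hat\theta$ as the Ces\`aro average of the iterates give Jensen's inequality $J(\hat\theta)\geq \tfrac{1}{T}\sum_t J(\theta_t)$. Combining these and dividing by $\alpha T$,
\begin{equation*}
J^{\star}-J(\hat\theta) \leq \frac{\Upsilon^2}{2\alpha T} + \alpha\bigl(G^2+\tfrac{1}{T}\textstyle\sum_t \|B_t\|_2^2\bigr) + \Upsilon\cdot\tfrac{1}{T}\textstyle\sum_t \|B_t\|_2.
\end{equation*}

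With this inequality in hand I invoke \cref{thm:error}, which on an event of probability at least $1-\delta$ supplies $\tfrac{1}{T}\sum_t\|B_t\|_2^2 \lnapprox U_{n,T,\delta}$; another application of Cauchy--Schwarz then gives $\tfrac{1}{T}\sum_t\|B_t\|_2 \leq \sqrt{U_{n,T,\delta}}$. Balancing the $\Upsilon^2/(\alpha T)$ and $\alpha(G^2+U_{n,T,\delta})$ terms via the optimal choice $\alpha = \Upsilon/\sqrt{T(G^2+U_{n,T,\delta})}$ produces a $\Upsilon(G+\sqrt{U_{n,T,\delta}})/\sqrt T$ contribution from the first two terms, which is the advertised rate.

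The main obstacle is the deterministic-bias character of $B_t$. In a classical stochastic-gradient-ascent analysis, $B_t$ would be a martingale difference that could be handled by Azuma/Freedman concentration, making the cross term $\sum_t B_t^\top(\theta^{\star}-\theta_t)$ of order $\sqrt T$ rather than $T$. Here $B_t$ is a deterministic function of both the off-policy sample and of the (random) iterate $\theta_t$, so the cross term must be absorbed via the crude Cauchy--Schwarz bound. The argument therefore hinges on having a \emph{uniform}-in-$\theta$ control of $\hat Z^{\DO}(\theta)-Z(\theta)$, which is exactly what \cref{thm:error} delivers using the assumed smoothness of $\xi_{\MDP}$ in $\theta$ together with compactness of $\Theta$; a pointwise guarantee along the lines of \cref{thm:db} would not suffice, since $\theta_1,\dots,\theta_T$ are not independent of the sample used to build $\hat Z^{\DO}$.
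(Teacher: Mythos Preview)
Your approach is essentially the same as the paper's: both use projection non-expansiveness, the concavity subgradient inequality, the decomposition $\hat Z^{\DO}(\theta_t)=Z(\theta_t)+B_t$, the crude Cauchy--Schwarz bound $|B_t^\top(\theta^\star-\theta_t)|\leq \Upsilon\|B_t\|_2$ on the cross term, and then close via \cref{thm:error}. The only real difference is cosmetic: you take a constant step size $\alpha_t\equiv \alpha=\Upsilon/\sqrt{T(G^2+U_{n,T,\delta})}$, while the paper runs the classical decreasing schedule $\alpha_t=\Upsilon/\sqrt{t\,(G^2+U)}$ and telescopes the $1/\alpha_t$ differences. Both choices land on the same bound.

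One point worth flagging: you correctly observe that the first two terms balance to $\Upsilon(G+\sqrt{U_{n,T,\delta}})/\sqrt T$, but you stop short of saying what happens to the third term $\Upsilon\cdot\tfrac1T\sum_t\|B_t\|_2\leq \Upsilon\sqrt{U_{n,T,\delta}}$, which does \emph{not} pick up a $1/\sqrt T$ factor. The paper's own proof has exactly the same residual; its final displayed line is $\Upsilon\{G/\sqrt T+\sqrt U(1+1/\sqrt T)\}$, so the bound it actually derives is $\Upsilon G/\sqrt T+\Upsilon\sqrt U$ rather than the statement's $\Upsilon(G+\sqrt U)/\sqrt T$. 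This is harmless for the downstream regret claim (since with $T=n^\beta$ the $\Upsilon\sqrt U$ term already gives the $H^2/\sqrt n$ rate), but strictly speaking neither your argument nor the paper's yields the inequality as literally stated.
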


Here, the first term is the optimization error if we knew the true gradient $Z(\theta)$. The second term is the approximation error due to the error in our estimated gradient $\hat Z^{\DO}(\theta)$. 
When we set $T=n^{\beta}$ ($\beta>1$), the final regret bound is 
\begin{align*}\ts
\bigO_{p}\left(\Upsilon R_{\max}H^2\sqrt{{D\beta G_{\max}C_2 \log(nD/\delta))}}/\sqrt{n}\right). 
\end{align*}
The regret's horizon dependence is $H^2$. This is a crucial result since the regret with polyomial horizon dependence is a desired result in RL \citep{pmlr-v75-jiang18a}. 
Again, if we had used other policy gradient methods, then an exponential dependence via $C^{H}_1$ would appear. Moreover, the regret depends on $C_2$, which corresponds to a concentrability coefficient \citep{antos2008learning}. 

\begin{remark} Recent work studies the global convergence of online-PG algorithms without concavity \citep{BhandariJalaj2019GOGF,AgarwalAlekh2019OaAw}. This may be applicable here, but our setting is completely off-policy and therefore different and requiring future work. Notably, the above focus on optimization rather than PG variance reduction. In a truly off-policy setting, the available data is limited and statistical efficiency is crucial and is our focus here.
\end{remark}

\begin{remark}[Comparison with other results for off-policy policy learning]
In the logged bandit case ($H=0$), the regret bound of off-policy learning via exhaustive search (non-convex) optimization is $\bigO_{p}(\sqrt{\tau(\Pi)\log(1/\delta)/n})$, where $\tau(\Pi)$ represents the complexity of the hypothesis class \citep{ZhouZhengyuan2018OMPL,FosterDylanJ.2019OSL}. In this bandit case, the nuisance functions of the EIF do not depend on the policy itself, making this analysis tractable. However, for our RL problem ($H>0$), nuisance functions depend on the policy; thus, these techniques do not extend directly. \citet{NieXinkun2019LWP} do extend these types of regret results to an RL problem but where the problem has a special when-to-treat structure, not the general MDP case.
\end{remark}

\begin{figure}[t!]
     \centering
         \centering
         \includegraphics[width=0.7\linewidth]{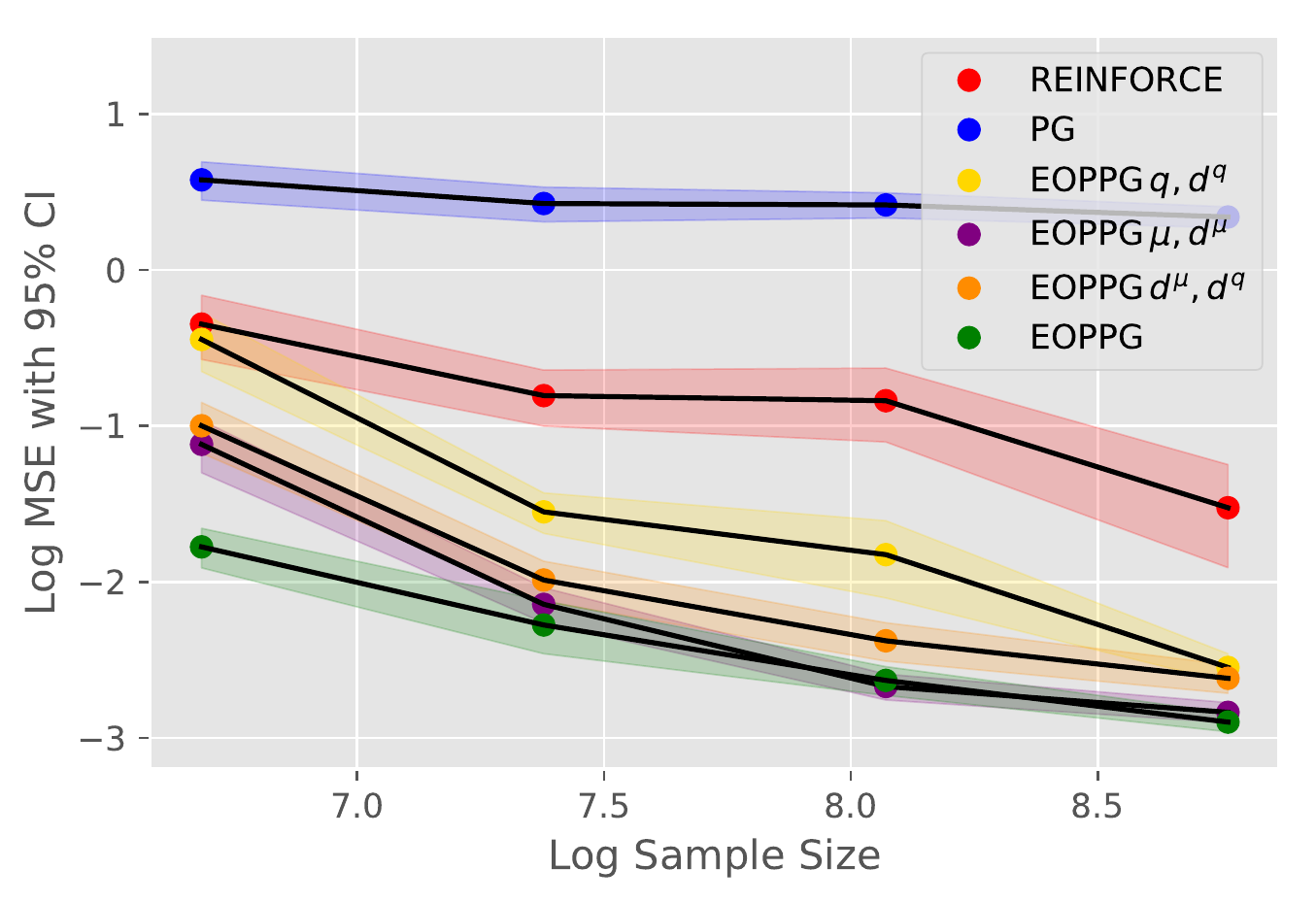}
          \vspace{-0.5cm}
         \caption{Comparison of MSE of gradient estimation}
         \label{fig:MSE}
\end{figure}

\begin{figure}[t!]
     \centering
         \centering
         \includegraphics[width=0.7\linewidth]{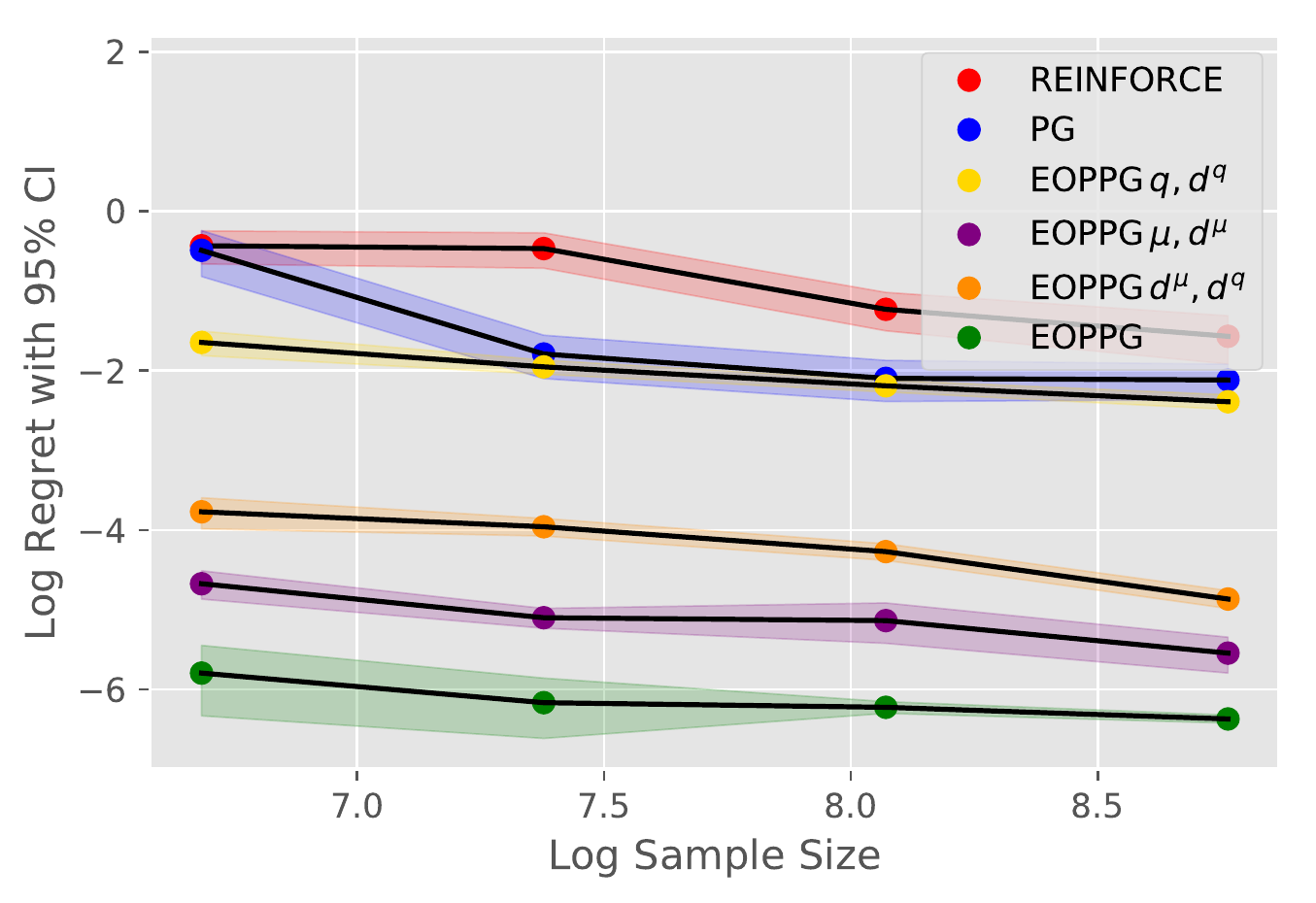}
             \vspace{-0.5cm}
         \caption{Comparison of regret after gradient ascent}
         \label{fig:regret}
\end{figure}

\section{Experiments}

We conducted an experiment in a simple environment to confirm the theoretical guarantees of the proposed estimator. More extensive experimentation remains future work. The setting is as follows. Set $\mathcal{S}_t=\Rl,\,\mathcal{A}_t=\Rl,s_0=0$. Then, set the transition dynamics as $s_t=a_{t-1}-s_{t-1}$, the reward as $r_{t}=-s_t^2$, the behavior policy as $\bpol_t(a\mid s)=\mathcal{N}(0.8s,0.2^2)$, the policy class as $\mathcal{N}(\theta s,0.2^2)$, and horizon as $H=50$. Then, $\theta^*=1$ with optimal value $J^*=-1.96$, obtained by analytical calculation. 
We compare REINFORCE (\cref{eq:step_reinforce}), PG (\cref{eq:pg_q}), and EOPPG with $K=2$. Nuisances functions $q,\,\mu,\,d^{q},\,d^{\mu}$ are estimated by polynomial sieve regressions \citep{ChenXiaohong2007C7LS}. Additionally, to investigate 3-way double robustness, we consider corrupting the nuisance models by adding noise $\mathcal{N}(0,1)$; we consider thus corrupting $(q,d^{q})$, $(\mu,d^{\mu})$, or $(d^{\mu},d^{q})$.

First, in \cref{fig:MSE}, we compare the MSE of these gradient estimators at $\theta=1.0$ using $100$ replications of the experiment for each of $n=800,1600,3200,6400$. As can be seen, the performance of EOPPG is superior to existing estimators in terms of MSE, validating our efficiency results. We can also see that the EOPPG with misspecified models still converges, validating our 3-way double robustness results. 

Second, in in \cref{fig:regret}, we apply a gradient ascent as in \cref{alg:gradient} with $\alpha_t=0.15$ and $T=40$. We compare the regret of the final policy, \ie, $J(\theta^{*})-J(\hat \theta_{40})$, using $60$ replications of the experiment for each of $n=200,400,800,1600$. Notice that the lines decrease roughly as $1/\sqrt{n}$ but because of the large differences in values, the lines only appear somewhat flat. This shows that the efficiency and 3-way double robustness translate to good regret performance, as predicted by our policy learning analysis.

\section{Conclusions}

We established an MSE efficiency bound of order $\mathcal{O}(H^4/n)$ for estimating a policy gradient in an MDP in an off-policy manner. We proposed an estimator, EOPPG, that achieves the bound, enjoys 3-way double robustness, and leads to regret dependence of order $H^2/\sqrt{n}$ when used for policy learning. Notably, this is much smaller than other approaches, which incur exponential-in-$H$ errors.

This paper is only a first step toward efficient and effective off-policy policy gradients in MDPs. Remaining questions include how to estimate $d^{q},\,d^{\mu}$ in a large-scale environments, the performance of more practical implementations that alternate in updating $\theta$ and nuisance estimates with only one gradient update, and extending our theory to the deterministic policy class as in \citet{silver14}. 

\bibliographystyle{chicago}
\bibliography{ic,pfi3}

\begin{thebibliography}{}

\bibitem[\protect\citeauthoryear{Agarwal, Kakade, Lee, and Mahajan}{Agarwal
  et~al.}{2019}]{AgarwalAlekh2019OaAw}
Agarwal, A., S.~Kakade, J.~Lee, and G.~Mahajan (2019).
\newblock Optimality and approximation with policy gradient methods in markov
  decision processes.
\newblock {\em arXiv preprint arXiv: 1908.00261\/}.

\bibitem[\protect\citeauthoryear{Antos, Szepesv{\'a}ri, and Munos}{Antos
  et~al.}{2008}]{antos2008learning}
Antos, A., C.~Szepesv{\'a}ri, and R.~Munos (2008).
\newblock Learning near-optimal policies with bellman-residual minimization
  based fitted policy iteration and a single sample path.
\newblock {\em Machine Learning\/}~{\em 71}, 89--129.

\bibitem[\protect\citeauthoryear{Athey and Wager}{Athey and
  Wager}{2017}]{AtheySusan2017EPL}
Athey, S. and S.~Wager (2017).
\newblock Efficient policy learning.
\newblock {\em arXiv preprint arXiv:1702.02896\/}.

\bibitem[\protect\citeauthoryear{Bartlett, Bousquet, and Mendelson}{Bartlett
  et~al.}{2005}]{BartlettPeterL.2005LRc}
Bartlett, P.~L., O.~Bousquet, and S.~Mendelson (2005).
\newblock Local rademacher complexities.
\newblock {\em Annals of Statistics\/}~{\em 33}, 1497--1537.

\bibitem[\protect\citeauthoryear{Baxter and Bartlett}{Baxter and
  Bartlett}{2001}]{baxter2001infinite}
Baxter, J. and P.~L. Bartlett (2001).
\newblock {Infinite-Horizon Policy-Gradient Estimation}.
\newblock {\em Journal of Artificial Intelligence Research\/}~{\em 15},
  319--350.

\bibitem[\protect\citeauthoryear{Bhandari and Russo}{Bhandari and
  Russo}{2019}]{BhandariJalaj2019GOGF}
Bhandari, J. and D.~Russo (2019).
\newblock Global optimality guarantees for policy gradient methods.
\newblock {\em arXiv preprint arXiv:1906.01786\/}.

\bibitem[\protect\citeauthoryear{Bickel, Klaassen, Ritov, and Wellner}{Bickel
  et~al.}{1993}]{bickel1993efficient}
Bickel, P.~J., C.~A. Klaassen, Y.~Ritov, and J.~A. Wellner (1993).
\newblock {\em Efficient and adaptive estimation for semiparametric models}.
\newblock Baltimore: Johns Hopkins University Press.

\bibitem[\protect\citeauthoryear{Chen and Jiang}{Chen and
  Jiang}{2019}]{ChenJinglin2019ICiB}
Chen, J. and N.~Jiang (2019).
\newblock Information-theoretic considerations in batch reinforcement learning.
\newblock In {\em Proceedings of the 36th International Conference on Machine
  Learning}, Volume~97, pp.\  1042--1051.

\bibitem[\protect\citeauthoryear{Chen, Beutel, Covington, Jain, Belletti, and
  Chi}{Chen et~al.}{2019}]{ChenMinmin2019TOCf}
Chen, M., A.~Beutel, P.~Covington, S.~Jain, F.~Belletti, and E.~Chi (2019).
\newblock Top-k off-policy correction for a reinforce recommender system.
\newblock In {\em Proceedings of the Twelfth ACM International Conference on
  web search and data mining}, WSDM '19, pp.\  456--464.

\bibitem[\protect\citeauthoryear{Chen}{Chen}{2007}]{ChenXiaohong2007C7LS}
Chen, X. (2007).
\newblock Chapter 76 large sample sieve estimation of semi-nonparametric
  models.
\newblock {\em Handbook of Econometrics\/}~{\em 6}, 5549--5632.

\bibitem[\protect\citeauthoryear{Cheng, Yan, and Boots}{Cheng
  et~al.}{2019}]{ChengChing-An2019TCVf}
Cheng, C.-A., X.~Yan, and B.~Boots (2019).
\newblock Trajectory-wise control variates for variance reduction in policy
  gradient methods.
\newblock {\em arXiv preprint arxiv:1908.03263\/}.

\bibitem[\protect\citeauthoryear{Chernozhukov, Chetverikov, Demirer, Duflo,
  Hansen, Newey, and Robins}{Chernozhukov
  et~al.}{2018}]{ChernozhukovVictor2018Dmlf}
Chernozhukov, V., D.~Chetverikov, M.~Demirer, E.~Duflo, C.~Hansen, W.~Newey,
  and J.~Robins (2018).
\newblock Double/debiased machine learning for treatment and structural
  parameters.
\newblock {\em Econometrics Journal\/}~{\em 21}, C1--C68.

\bibitem[\protect\citeauthoryear{Dai, Kostrikov, Chow, Li, and Schuurmans}{Dai
  et~al.}{2019}]{DaiBo2019APGf}
Dai, B., I.~Kostrikov, Y.~Chow, L.~Li, and D.~Schuurmans (2019).
\newblock Algaedice: Policy gradient from arbitrary experience.
\newblock {\em arXiv preprint arXiv:1912.02074\/}.

\bibitem[\protect\citeauthoryear{Degris, White, and Sutton}{Degris
  et~al.}{2012}]{DegrisThomas2013OA}
Degris, T., M.~White, and R.~Sutton (2012).
\newblock Off-policy actor-critic.
\newblock {\em Proceedings of the 29th International Coference on International
  Conference on Machine Learning\/}, 179--186.

\bibitem[\protect\citeauthoryear{Deisenroth, Neumann, and Peters}{Deisenroth
  et~al.}{2013}]{DeisenrothMarc2013ASoP}
Deisenroth, M., G.~Neumann, and J.~Peters (2013).
\newblock A survey on policy search for robotics.
\newblock {\em Foundations and Trends in Robotics - volume 1\/}~{\em 2},
  1--142.

\bibitem[\protect\citeauthoryear{Foster and Syrgkanis}{Foster and
  Syrgkanis}{2019}]{FosterDylanJ.2019OSL}
Foster, D.~J. and V.~Syrgkanis (2019).
\newblock Orthogonal statistical learning.
\newblock {\em arXiv preprint arXiv:1901.09036\/}.

\bibitem[\protect\citeauthoryear{Fujimoto, Meger, and Precup}{Fujimoto
  et~al.}{2019}]{fujimoto19a}
Fujimoto, S., D.~Meger, and D.~Precup (2019).
\newblock Off-policy deep reinforcement learning without exploration.
\newblock In {\em Proceedings of the 36th International Conference on Machine
  Learning}, Volume~97, pp.\  2052--2062.

\bibitem[\protect\citeauthoryear{Gottesman, Johansson, Komorowski, Faisal,
  Sontag, Doshi-Velez, and Celi}{Gottesman
  et~al.}{2019}]{gottesman2019guidelines}
Gottesman, O., F.~Johansson, M.~Komorowski, A.~Faisal, D.~Sontag,
  F.~Doshi-Velez, and L.~A. Celi (2019).
\newblock Guidelines for reinforcement learning in healthcare.
\newblock {\em Nat Med\/}~{\em 25}, 16--18.

\bibitem[\protect\citeauthoryear{Greensmith, Bartlett, and Baxter}{Greensmith
  et~al.}{2004}]{Greensmit2004}
Greensmith, E., P.~L. Bartlett, and J.~Baxter (2004).
\newblock Variance reduction techniques for gradient estimates in reinforcement
  learning.
\newblock {\em Journal of Machine Learning Research\/}, 1471--1530.

\bibitem[\protect\citeauthoryear{Gu, Lillicrap, Turner, Ghahramani,
  Sch\"{o}lkopf, and Levine}{Gu et~al.}{2017}]{gu2017}
Gu, S.~S., T.~Lillicrap, R.~E. Turner, Z.~Ghahramani, B.~Sch\"{o}lkopf, and
  S.~Levine (2017).
\newblock Interpolated policy gradient: Merging on-policy and off-policy
  gradient estimation for deep reinforcement learning.
\newblock In {\em Advances in Neural Information Processing Systems 30}, pp.\
  3846--3855.

\bibitem[\protect\citeauthoryear{H{\'a}jek}{H{\'a}jek}{1970}]{hajek1970characterization}
H{\'a}jek, J. (1970).
\newblock A characterization of limiting distributions of regular estimates.
\newblock {\em Zeitschrift f{\"u}r Wahrscheinlichkeitstheorie und verwandte
  Gebiete\/}~{\em 14}, 323--330.

\bibitem[\protect\citeauthoryear{Hanna and Stone}{Hanna and
  Stone}{2018}]{AAAISSS2018-Hanna}
Hanna, J. and P.~Stone (2018).
\newblock Towards a data efficient off-policy policy gradient.
\newblock In {\em AAAI Spring Symposium on Data Efficient Reinforcement
  Learning}.

\bibitem[\protect\citeauthoryear{Hazan}{Hazan}{2015}]{HazanElad2015ItOC}
Hazan, E. (2015).
\newblock Introduction to online convex optimization.
\newblock {\em Foundations and Trends® in Optimization\/}~{\em 2}, 157--325.

\bibitem[\protect\citeauthoryear{Huang and Jiang}{Huang and
  Jiang}{2019}]{HuangJiawei2019FISt}
Huang, J. and N.~Jiang (2019).
\newblock From importance sampling to doubly robust policy gradient.
\newblock {\em arXiv preprint arXiv:1910.09066\/}.

\bibitem[\protect\citeauthoryear{Imani, Graves, and White}{Imani
  et~al.}{2018}]{Imani2018}
Imani, E., E.~Graves, and M.~White (2018).
\newblock An off-policy policy gradient theorem using emphatic weightings.
\newblock In {\em Advances in Neural Information Processing Systems 31}, pp.\
  96--106.

\bibitem[\protect\citeauthoryear{Jain and Kar}{Jain and
  Kar}{2017}]{JainPrateek2017NOfM}
Jain, P. and P.~Kar (2017, December).
\newblock Non-convex optimization for machine learning.
\newblock {\em Found. Trends Mach. Learn.\/}~{\em 10}, 142–336.

\bibitem[\protect\citeauthoryear{Jiang and Agarwal}{Jiang and
  Agarwal}{2018}]{pmlr-v75-jiang18a}
Jiang, N. and A.~Agarwal (2018).
\newblock Open problem: The dependence of sample complexity lower bounds on
  planning horizon.
\newblock In {\em Proceedings of the 31st Conference On Learning Theory},
  Volume~75, pp.\  3395--3398.

\bibitem[\protect\citeauthoryear{Jiang and Li}{Jiang and Li}{2016}]{jiang}
Jiang, N. and L.~Li (2016).
\newblock Doubly robust off-policy value evaluation for reinforcement learning.
\newblock {\em In Proceedings of the 33rd International Conference on
  International Conference on Machine Learning-Volume\/}, 652--661.

\bibitem[\protect\citeauthoryear{Kallus}{Kallus}{2018}]{kallus2018balanced}
Kallus, N. (2018).
\newblock Balanced policy evaluation and learning.
\newblock In {\em Advances in Neural Information Processing Systems}, pp.\
  8895--8906.

\bibitem[\protect\citeauthoryear{Kallus and Uehara}{Kallus and
  Uehara}{2019a}]{KallusUehara2019}
Kallus, N. and M.~Uehara (2019a).
\newblock Double reinforcement learning for efficient off-policy evaluation in
  markov decision processes.
\newblock {\em arXiv preprint arXiv:1908.08526\/}.

\bibitem[\protect\citeauthoryear{Kallus and Uehara}{Kallus and
  Uehara}{2019b}]{KallusNathan2019EBtC}
Kallus, N. and M.~Uehara (2019b).
\newblock Efficiently breaking the curse of horizon: Double reinforcement
  learning in infinite-horizon processes.
\newblock {\em arXiv preprint arXiv:1909.05850\/}.

\bibitem[\protect\citeauthoryear{Kallus and Zhou}{Kallus and
  Zhou}{2018}]{kallus2018confounding}
Kallus, N. and A.~Zhou (2018).
\newblock Confounding-robust policy improvement.
\newblock In {\em Advances in neural information processing systems}, pp.\
  9269--9279.

\bibitem[\protect\citeauthoryear{Khamaru and Wainwright}{Khamaru and
  Wainwright}{2018}]{khamaru18a}
Khamaru, K. and M.~Wainwright (2018).
\newblock Convergence guarantees for a class of non-convex and non-smooth
  optimization problems.
\newblock In {\em Proceedings of the 35th International Conference on Machine
  Learning}, pp.\  2601--2610.

\bibitem[\protect\citeauthoryear{Klaassen}{Klaassen}{1987}]{klaassen1987consistent}
Klaassen, C.~A. (1987).
\newblock Consistent estimation of the influence function of locally
  asymptotically linear estimators.
\newblock {\em The Annals of Statistics\/}, 1548--1562.

\bibitem[\protect\citeauthoryear{Liu, Li, Tang, and Zhou}{Liu
  et~al.}{2018}]{liu2018breaking}
Liu, Q., L.~Li, Z.~Tang, and D.~Zhou (2018).
\newblock Breaking the curse of horizon: Infinite-horizon off-policy
  estimation.
\newblock In {\em Advances in Neural Information Processing Systems}, pp.\
  5361--5371.

\bibitem[\protect\citeauthoryear{Liu, Swaminathan, Agarwal, and Brunskill}{Liu
  et~al.}{2019}]{LiuYao2019OPGw}
Liu, Y., A.~Swaminathan, A.~Agarwal, and E.~Brunskill (2019).
\newblock Off-policy policy gradient with state distribution correction.
\newblock {\em In Proccedings of the Conference on Uncertainty in Artificial
  Intelligence (UAI 2019)\/}.

\bibitem[\protect\citeauthoryear{Metelli, Papini, Faccio, and Restelli}{Metelli
  et~al.}{2018}]{metelli2018}
Metelli, A.~M., M.~Papini, F.~Faccio, and M.~Restelli (2018).
\newblock Policy optimization via importance sampling.
\newblock In {\em Advances in Neural Information Processing Systems 31}, pp.\
  5442--5454.

\bibitem[\protect\citeauthoryear{Morimura, Uchibe, Yoshimoto, Peters, and
  Doya}{Morimura et~al.}{2010}]{MorimuraTetsuro2010DoLS}
Morimura, T., E.~Uchibe, J.~Yoshimoto, J.~Peters, and K.~Doya (2010).
\newblock Derivatives of logarithmic stationary distributions for policy
  gradient reinforcement learning.
\newblock {\em Neural Computation\/}~{\em 22}, 342--376.

\bibitem[\protect\citeauthoryear{Munos and Szepesv{\'a}ri}{Munos and
  Szepesv{\'a}ri}{2008}]{munos2008finite}
Munos, R. and C.~Szepesv{\'a}ri (2008).
\newblock Finite-time bounds for fitted value iteration.
\newblock {\em Journal of Machine Learning Research\/}~{\em 9}, 815--857.

\bibitem[\protect\citeauthoryear{Nesterov and Polyak}{Nesterov and
  Polyak}{2006}]{NesterovYurii2006CroN}
Nesterov, Y. and B.~Polyak (2006).
\newblock Cubic regularization of newton method and its global performance.
\newblock {\em Mathematical Programming\/}~{\em 108}, 177--205.

\bibitem[\protect\citeauthoryear{Nie, Brunskill, and Wager}{Nie
  et~al.}{2019}]{NieXinkun2019LWP}
Nie, X., E.~Brunskill, and S.~Wager (2019).
\newblock Learning when-to-treat policies.
\newblock {\em arXiv preprint arXiv:1905.09751\/}.

\bibitem[\protect\citeauthoryear{Papini, Binaghi, Canonaco, Pirotta, and
  Restelli}{Papini et~al.}{2018}]{papini2018stochastic}
Papini, M., D.~Binaghi, G.~Canonaco, M.~Pirotta, and M.~Restelli (2018).
\newblock Stochastic variance-reduced policy gradient.
\newblock In {\em International Conference on Machine Learning}, pp.\
  4026--4035.

\bibitem[\protect\citeauthoryear{Precup, Sutton, and Singh}{Precup
  et~al.}{2000}]{precup2000eligibility}
Precup, D., R.~S. Sutton, and S.~P. Singh (2000).
\newblock {Eligibility Traces for Off-Policy Policy Evaluation}.
\newblock In {\em Proceedings of the 17th International Conference on Machine
  Learning}, pp.\  759--766.

\bibitem[\protect\citeauthoryear{Schulman, Heess, Weber, and Abbeel}{Schulman
  et~al.}{2015}]{Schulmanetal_NIPS2015}
Schulman, J., N.~Heess, T.~Weber, and P.~Abbeel (2015).
\newblock Gradient estimation using stochastic computation graphs.
\newblock In {\em Neural Information Processing Systems (NIPS), Montreal,
  Canada}.

\bibitem[\protect\citeauthoryear{Schulman, Moritz, Levine, Jordan, and
  Abbeel}{Schulman et~al.}{2016}]{Schulmanetal_ICLR2016}
Schulman, J., P.~Moritz, S.~Levine, M.~Jordan, and P.~Abbeel (2016).
\newblock High-dimensional continuous control using generalized advantage
  estimation.
\newblock In {\em Proceedings of the International Conference on Learning
  Representations (ICLR)}.

\bibitem[\protect\citeauthoryear{Sen}{Sen}{2018}]{empirical}
Sen, B. (2018).
\newblock A gentle introduction to empirical process theoryand applications.
\newblock
  \url{http://www.stat.columbia.edu/~bodhi/Talks/Emp-Proc-Lecture-Notes.pdf}.
\newblock Accessed: 2020--1-1.

\bibitem[\protect\citeauthoryear{Silver, Lever, Heess, Degris, Wierstra, and
  Riedmiller}{Silver et~al.}{2014}]{silver14}
Silver, D., G.~Lever, N.~Heess, T.~Degris, D.~Wierstra, and M.~Riedmiller
  (2014).
\newblock Deterministic policy gradient algorithms.
\newblock In {\em Proceedings of the 31st International Conference on Machine
  Learning}, pp.\  387--395.

\bibitem[\protect\citeauthoryear{Sutton and Barto}{Sutton and
  Barto}{2018}]{sutton2018reinforcement}
Sutton, R.~S. and A.~G. Barto (2018).
\newblock {\em Reinforcement learning: An introduction}.
\newblock MIT press.

\bibitem[\protect\citeauthoryear{Sutton, Precup, and Singh}{Sutton
  et~al.}{1998}]{sutton1998intra}
Sutton, R.~S., D.~Precup, and S.~P. Singh (1998).
\newblock {Intra-option Learning about Temporally Abstract Actions}.
\newblock In {\em Proceedings of the 15th International Conference on Machine
  Learning}, pp.\  556--564.

\bibitem[\protect\citeauthoryear{Swaminathan and Joachims}{Swaminathan and
  Joachims}{2015}]{swaminathan2015counterfactual}
Swaminathan, A. and T.~Joachims (2015).
\newblock Counterfactual risk minimization: Learning from logged bandit
  feedback.
\newblock In {\em International Conference on Machine Learning}, pp.\
  814--823.

\bibitem[\protect\citeauthoryear{Tang and Abbeel}{Tang and
  Abbeel}{2010}]{jie2010}
Tang, J. and P.~Abbeel (2010).
\newblock On a connection between importance sampling and the likelihood ratio
  policy gradient.
\newblock In {\em Proceedings of the 23rd International Conference on Neural
  Information Processing Systems - Volume 1}, pp.\  1000–1008.

\bibitem[\protect\citeauthoryear{Thomas and Brunskill}{Thomas and
  Brunskill}{2016}]{thomas2016}
Thomas, P. and E.~Brunskill (2016).
\newblock Data-efficient off-policy policy evaluation for reinforcement
  learning.
\newblock {\em In Proceedings of the 33rd International Conference on Machine
  Learning\/}, 2139--2148.

\bibitem[\protect\citeauthoryear{Tosatto, Carvalho, Abdulsamad, and
  Peters}{Tosatto et~al.}{2020}]{TosattoSamuele2020ANOP}
Tosatto, S., J.~Carvalho, H.~Abdulsamad, and J.~Peters (2020).
\newblock A nonparametric offpolicy policy gradient.
\newblock {\em arXiv preprint arXiv:2001.02435\/}.

\bibitem[\protect\citeauthoryear{Tsiatis}{Tsiatis}{2006}]{TsiatisAnastasiosA2006STaM}
Tsiatis, A.~A. (2006).
\newblock {\em Semiparametric Theory and Missing Data}.
\newblock Springer Series in Statistics. New York, NY: Springer New York.

\bibitem[\protect\citeauthoryear{van Der~Laan and Robins}{van Der~Laan and
  Robins}{2003}]{LaanMarkJ.vanDer2003UMfC}
van Der~Laan, M.~J. and J.~M. Robins (2003).
\newblock {\em Unified Methods for Censored Longitudinal Data and Causality}.
\newblock Springer Series in Statistics,. New York, NY: Springer New York.

\bibitem[\protect\citeauthoryear{van~der Laan and Rose}{van~der Laan and
  Rose}{2018}]{vanderLaanMarkJ2011TLCI}
van~der Laan, M.~J. and S.~Rose (2018).
\newblock {\em Targeted Learning :Causal Inference for Observational and
  Experimental Data}.
\newblock Springer Series in Statistics. New York, NY: Springer New York :
  Imprint: Springer.

\bibitem[\protect\citeauthoryear{van~der Vaart}{van~der
  Vaart}{1998}]{VaartA.W.vander1998As}
van~der Vaart, A.~W. (1998).
\newblock {\em Asymptotic statistics}.
\newblock Cambridge, UK: Cambridge University Press.

\bibitem[\protect\citeauthoryear{Wainwright}{Wainwright}{2019}]{WainwrightMartinJ2019HS:A}
Wainwright, M.~J. (2019).
\newblock {\em High-Dimensional Statistics : A Non-Asymptotic Viewpoint}.
\newblock New York: Cambridge University Press.

\bibitem[\protect\citeauthoryear{Williams}{Williams}{1992}]{williams1992simple}
Williams, R.~J. (1992).
\newblock Simple statistical gradient-following algorithms for connectionist
  reinforcement learning.
\newblock {\em Machine learning\/}~{\em 8\/}(3-4), 229--256.

\bibitem[\protect\citeauthoryear{Wu, Rajeswaran, Duan, Kumar, Bayen, Kakade,
  Mordatch, and Abbeel}{Wu et~al.}{2018}]{WuCathy2018VRfP}
Wu, C., A.~Rajeswaran, Y.~Duan, V.~Kumar, A.~Bayen, S.~Kakade, I.~Mordatch, and
  P.~Abbeel (2018).
\newblock Variance reduction for policy gradient with action-dependent
  factorized baselines.
\newblock {\em Proceedings of the International Conference on Learning
  Representations (ICLR)\/}.

\bibitem[\protect\citeauthoryear{Xie, Ma, and Wang}{Xie
  et~al.}{2019}]{XieTengyang2019OOEf}
Xie, T., Y.~Ma, and Y.-X. Wang (2019).
\newblock Towards optimal off-policy evaluation for reinforcement learning with
  marginalized importance sampling.
\newblock In {\em Advances in Neural Information Processing Systems 32}, pp.\
  9665--9675.

\bibitem[\protect\citeauthoryear{Zhou, Athey, and Wager}{Zhou
  et~al.}{2018}]{ZhouZhengyuan2018OMPL}
Zhou, Z., S.~Athey, and S.~Wager (2018).
\newblock Offline multi-action policy learning: Generalization and
  optimization.
\newblock {\em arXiv preprint arXiv:1810.04778\/}.

\end{thebibliography}
\appendix

\section{Notation}
\label{sec:notation}
\begin{table}[H]
    \caption{Notation}
    \centering
    \begin{tabular}{l|l}
    $\bpol$,  $\thpol$     &  Behavior policy,\,Evaluation policy \\
    $H$ & Horizon \\ 
    $T$ & Final optimization step \\ 
    $\nabla,\,\nabla_{\theta} $ & Derivative w.r.t $\theta$ \\
    $\ch_{a_t}$ &   History up to $a_t$, $(s_0,a_0,\cdot,s_t,a_t)$ (Rewards are excluded)\\
    $\ch_{s_t}$ &   History up to $s_t$, $(s_0,a_0,\cdot,s_t)$ (Rewards are excluded)\\
    $\ck_{a_t}$ &   History up to $a_t$, $(s_0,a_0,r_0,\cdot,a_t)$ (Rewards are included) \\
    $\ck_{s_t}$ &   History up to $s_t$, $(s_0,a_0,r_0,\cdot,s_t,a_t)$ (Rewards are included) \\
    $\E_{\pi}[\cdot] $ & Expectation is taken w.r.t policy $\pi$ \\
     $\E_{n}[\cdot] $ & Empirical approximation \\
    $J(\theta)$ & Value of $\pi$ \\
    $Z(\theta)$ & $\nabla J(\theta) $\\
    MDP, NMDP & DAG MDP,\,Tree MDP\\ 
    $q_t(s,a)$ & State action value function at $t$ \\
    $v_t(s)$ & State Value function at $t$  \\
    $\nu_{0:t}(\ch_{a_t})$ & $\prod_{0=i}^{H}\thpol_i/\bpol_i$ \\
    $\nu_{a:b}$ & $\prod_{i=a}^{b}\thpol_i/\bpol_i$ \\
     $\tilde\nu_{t}$ & $\thpol_t/\bpol_t$ \\
     $\tilde\mu_{t}$ & $p_{\thpol}(s_t)/p_{\bpol}(s_t)$ \\
    $\mu_t(s,a)$ & Ratio of marginal distribution at $t$\\
    $d^{\nu}_t(s,a),d^{\mu}_t(s,a)$  &  $\nabla \mu_t(s,a),\nabla \mu_t(s,a$\\
    $d^{q}_t(s,a),d^{v}_t(s) $  &  $\nabla q_t(s,a),\nabla v_t(s,a)$\\
    $\otimes g$ &  $gg^{\top}$  \\
    $g_t$ & Score function of the policy: $\nabla \log \thpol_t(a_t\mid s_t)$ \\ 
    $C_1,\,C_2\,C'_1$ &  Distribution mismatch constants \\ 
    $R_{\max}$ & $0\leq r_t \leq R_{\max} $\\
    $G_{\max}$ & $\|g_t\|_{\oper}\leq G_{\max}$ \\
    $\mathcal{F}$ & Function class  \\ 
    $\|\cdot \|_{\mathrm{op}}$ & Operator norm  \\
    $\|\cdot \|_{L^2_b}$ & $L^2$-integral norm with respect to the behavior policy\\
    $\|\cdot \|_{2}$ & Euclidean norm  \\
    $\Theta$ & Parameter space \\
    $A \preceq B$ & $B-A$ is a semi-positive definite matrix  \\
    $\mI_{D\times D}$ & Identity matrix \\
    $\xi_{\MDP},\xi_{\NMDP}$ &  Efficient influence functions (IFs) of $Z(\theta)$ under MDP and NMDP  \\ 
    $\mathcal{U}_k$  &  $k$-th partitioned data \\  
    $\mathcal{Z}_k$  & Data except for $\mathcal{U}_k$\\ 
    $\Proj$ & projection  \\
    $\Theta$ & Parameter space \\ 
    $\Upsilon$ & Diameter of $\Theta$ \\
    $D$ & Dimension of $\theta$ \\
    $\langle \cdot , \cdot \rangle$ & Inner product $a^{\top}b$ \\
    $a \lnapprox b $ & Inequality up to absolute constant \\
    $\mathcal{I}$ & Identiry matrix 
    \end{tabular}
 \label{tab:notation}
\end{table}
\newpage

\section{Semiparametric Theory for Off-Policy RL: A General Conditioning Framework}

\label{sec:semipara}
Here, we summarize a general framework to obtain efficiency bounds and efficient influence functions for various quantities of interest under NMDP or MDP, which we then use in order to derive these for the policy gradient case. First, we present the framework in generality. Then, we show how to use this framework to re-derive the efficiency bounds and efficient influence functions for OPE of \citet{KallusUehara2019}, who derived it for the first time but from scratch. Our proofs for our policy gradient case in the subsequent sections use the observations from this section.

\subsection{General Conditioning Framework}

\subsubsection{General Semiparametric Inference}

Consider observing $n$ iid observations $O_i\sim P$ from some distribution $P$. We are interested in the estimand $R(P)$ where the unknown $P$ is assumed to live in some (nonparametric) model $P\in\mathcal M$ and $R:\mathcal M\to\R D$. Estimators of this estimand are functions of the data, $\hat R=\hat R(O^{(1)},\dots,O^{(n)})$.
Regular estimators are, roughly speaking, those for which the distribution of $\sqrt n(\hat R-R(P))$ converges to a limiting distribution in a locally uniform sense in $\mathcal M$ \citep[Chapter 25]{VaartA.W.vander1998As}. 
Under certain differentiability conditions on $R(\cdot)$, the efficiency bound is the smallest asymptotic MSE (the second moment of the distributional limit of $\sqrt n(\hat R-R(P))$) among all regular estimators $\hat R$ \citep[Theorem 25.20]{VaartA.W.vander1998As}, which also lower bounds the limit infimum of $n\E[(\hat R-R(P))^2]$ via Fatou's lemma. The efficiency bound even lower bounds the limit infimum of the MSE of \emph{any} estimator in a local asymptotic minimax sense \citep[Theorem 25.21]{VaartA.W.vander1998As}. In particular, the efficiency bound is given by $\var_P[\phi^*(O)]$ for some function $\phi^*(O)$.

Asymptotically linear estimators $\hat R$ are ones for which there exists a function $\phi(O)$ such that $\hat R=\E_n\phi+\op(n^{-1/2})$, $\E\phi=R(P)$.\footnote{Note that conventionally one restricts $\E\phi=0$ and writes $\hat R-R(P)=\E_n\phi+\op(n^{-1/2})$, but we deviate slightly here for clearer and more succinct presentation in the main text.} The function $\phi$ is known as the influence function of $\hat R$. Clearly, the asymptotic MSE of $\hat R$ is $\var_P[\phi(O)]$. Thus, an asymptotic linear estimator would be efficient if its influence function were $\phi^*$, which is called the \emph{efficient influence function}. In fact, under the same differentiability conditions on $R(\cdot)$, efficient (regular) estimators are exactly those with the influence function $\phi^*$ \citep[Theorem 25.23]{VaartA.W.vander1998As}. Under certain regularity, the set of influence functions (minus $R(P)$) is equal to the set of pathwise derivatives of $R(\cdot)$ at $P$, and the function $\phi^*$ is exactly given by that with minimal $L_2$ norm among this set \citep{klaassen1987consistent,bickel1993efficient}. Thus, $\phi^*$ can be gotten by a projection of \emph{any} influence function, which is a generic recipe for deriving the efficient influence function and the efficiency bound.

\subsubsection{A Conditioning Framework for Nonparametric Factorable Models}

We now summarize how additional graphical structure on the variable $O$ can further simplify the above recipe for deriving the efficient influence function in a particular class of models, which includes the MDP and NMDP models. Suppose each observation $O$ has $J$ component variables, $O=(O_1,\dots,O_J)$. Suppose moreover that we have some tree on the nodes $[J]=\{1,\dots,J\}$ described by the parentage relationship $\Pa:[J]\to 2^{[J]}$ mapping a node to its parents and such that $P$ satisfies the factorization
\begin{equation}\label{eq:nonparamfactorization}
P(O)=\prod_{j=1}^J P_j(O_j\mid O_{\Pa(j)}). 
\end{equation}
Consider the nonparametric model of all distributions that satisfy this factorization
$$
\mathcal M=\braces{Q\ :\ Q(O)=\prod_{j=1}^J Q_j(O_j\mid O_{\Pa(j)})
~~\text{for some conditional distributions $Q_j$}
}.
$$
Then, a standard result (see \citealp{LaanMarkJ.vanDer2003UMfC}, \citealp[\S A.7]{vanderLaanMarkJ2011TLCI}) is that, given any $\phi$ that is a valid influence function for $R(P)$ in $\mathcal M$, the efficient influence function for $R(P)$ is given by
$$
\phi^*(O)-R(P)=\sum_{j=1}^J\prns{
\E[\phi(O)\mid O_j,O_{\Pa(j)}]
-
\E[\phi(O)\mid O_{\Pa(j)}]
}.
$$
This arises due to the above-mentioned projection interpretation of the efficient influence function.

Now, suppose that the estimand only depends on a particular part of the factorization:
\begin{equation}\label{eq:factoredestimand}
R(Q)=R(Q')~~\text{whenever $Q_j=Q'_j$ for all $j\in J_0$},
\end{equation}
for some index set $J_0\subseteq[J]$.
That is, $R(Q)$ is only a function of $Q_{J_0}=(Q_j)_{j\in J_0}$ and is independent of $Q_{J_0^C}=(Q_j)_{j\notin J_0}$. 
Consider the model where we assume that $P_j$ is \emph{known} for every $j\notin J_0$,
$$
\mathcal M_0=\braces{Q\ :\ Q(O)=\prod_{j=1}^J Q_j(O_j\mid O_{\Pa(j)})
~~\text{for some $Q_{J_0}$ and $Q_{J^C_0}=P_{J^C_0}$}
}.
$$
Then, as long as $R(\cdot)$ satisfies \cref{eq:factoredestimand}, its efficient influence function under $\mathcal M$ and $\mathcal M_0$ must be the same (similarly for the efficiency bound).

Combining the above observations, we have that if (a) our model satisfies the nonparametric factorization as in \cref{eq:nonparamfactorization} and (b) our estimand only depends on some subset $J_0$ of the factorization as in \cref{eq:factoredestimand}, then given any $\phi$ that is a valid influence function for $R(P)$ in $\mathcal M_0$, the efficient influence function for $R(P)$ under $\cm$ is in fact also just given by
\begin{equation}\label{eq:factoredEIF}
\phi^*(O)-R(P)=\sum_{j\in J_0^C}\prns{
\E[\phi(O)\mid O_j,O_{\Pa(j)}]
-
\E[\phi(O)\mid O_{\Pa(j)}]
}.
\end{equation}

\subsection{Application to Off-Policy RL}

In off-policy RL, our data are observations of trajectories $\traj=(s_0,a_0,r_0,\dots,s_T,a_H,r_H,s_{H+1})$ generated by the behavior policy. Here $\traj$ stands for a single observation (above $O$ in the general case) and $s_t,a_t,r_t$ are individual components (above $O_j$ in the general case).
Moreover, in the MDP model, the data-generating distribution satisfies a factorization like \cref{eq:nonparamfactorization}:
$$p_{\bpol}(\traj)=p_0(s_0)\prod_{t=0}^H\bpol_t(a_t\mid s_t)p_t(r_t\mid s_t,a_t)p_t(s_{t+1}\mid s_t,a_t).$$
Finally, we have that off-policy quantities such as the policy value and policy gradient for $\thpol$ are independent of the behavior policy, that is, satisfy \cref{eq:factoredestimand} where $J_0^C$ corresponds to the $\bpol_t(a_t\mid s_t)$ part in the factorization above. Here, the model $\mathcal M_0$ would correspond to the model where the behavior policy is known (and indeed the efficiency bound is independent of whether it is known or not).

Similarly, in the NMDP model we have an alternative factorization, where each node's parent set is much larger:
$$
p_{\bpol}(\traj)=
p_0(s_0)\prod_{t=0}^H\bpol_t(a_t\mid \ch_{s_t})p_t(r_t\mid \ch_{a_t})p_t(s_{t+1}\mid \ch_{a_t}).
$$ 
Again, off-policy quantities of interest are independent of of the behavior policy.

These observations imply that in order to derive the efficient influence function (and hence the efficiency bound) for any appropriate off-policy quantity, we simply need to identify one valid influence function in $\mathcal M_0$ and then compute \cref{eq:factoredEIF}. This is exactly the approach we take in our proofs for the policy gradient.

Before proceeding to our proofs, which for the first time derive the efficiency bounds for off-policy gradients, as an illustrative case we first show how we can use this framework to derive the efficient influence functions and efficiency bounds for $J(\theta)$ under MDP and NMDP, which was first derived by \citet{KallusUehara2019}.

\begin{example}[Off-policy evaluation in MDP]
First we derive the efficient influence function. Under the model $\mathcal M_0$ where the behavior policy is known we know that $J(\theta)=\E\bracks{\sum_{t=0}^{H}\nu_{0:t} r_t}$ and therefore $\hat J(\theta)=\E_n\bracks{\sum_{t=0}^{H}\nu_{0:t} r_t}$ is a consistent linear estimator for $J(\theta)$. Hence, $\phi(\traj)=\bracks{\sum_{t=0}^{H}\nu_{0:t} r_t}$ must be a valid influence function.
Plugging into the right-hand side of \cref{eq:factoredEIF}, we obtain:
\begin{align*}\ts
       &\sum_{j=0}^{H}\left\{\E[ \sum_{t=0}^{H} \nu_{0:t} r_t\mid r_j,s_j,a_j]-\E[ \sum_{t=0}^{H} \nu_{0:t} r_t\mid s_j,a_j]+ \E[ \sum_{t=0}^{H} \nu_{0:t} r_t|s_j,a_{j-1},s_{j-1}]-\E[ \sum_{t=0}^{H} \nu_{0:t} r_t|a_{j-1},s_{j-1}]\right \}  \\
       &=\sum_{j=0}^{H}\{\E[\nu_{0:j}|s_j,a_j]r_j-\E[\nu_{0:j} r_j|s_j,a_j]+\E[\sum_{t=j}^{H}\nu_{0:t} r_t\mid s_j,a_j,s_{j-1}]- \E[\sum_{t=j}^{H}\nu_{0:t} r_t\mid s_{j-1},a_{j-1}]\}\\
          &=\sum_{j=0}^{H}\{\mu_j r_j+\E[\sum_{t=j}^{H}\nu_{0:t} r_t\mid s_j,a_j,s_{j-1}]- \E[\sum_{t=j-1}^{H}\nu_{0:t} r_t\mid s_{j-1},a_{j-1}]-\E[\nu_{0:H} r_H|s_T,a_H]\}\\
               &=\sum_{j=0}^{H}\{\mu_j r_j+\E[\sum_{t=j}^{H}\nu_{0:t} r_t\mid s_j,a_{j-1},s_{j-1}]- \E[\sum_{t=j}^{H}\nu_{0:t} r_t\mid s_{j},a_{j}]\} \\
               &=\sum_{j=0}^{H}\{\mu_j r_j+\E[\nu_{0:j-1}|s_j,a_{j-1},s_{j-1}]\E[\sum_{t=j}^{H}\nu_{j:t} r_t\mid s_j]- \E[\nu_{0:j} \mid s_{j},a_{j}]\E[\sum_{t=j}^{H}\nu_{j+1:t} r_t\mid s_{j},a_{j}]\} -J(\theta)\\
                  &=\sum_{j=0}^{H}\{\mu_j r_j+\E[\nu_{0:j-1}|s_j,a_{j-1},s_{j-1}]\E[\sum_{t=j}^{H}\nu_{j:t} r_t\mid s_j]- \E[\nu_{0:j} \mid s_{j},a_{j}]\E[\sum_{t=j}^{H}\nu_{j+1:t} r_t\mid s_{j},a_{j}]\} -J(\theta)\\
                 &=v_0(s_0) +\sum_{j=0}^{H}\mu_j(s_j,a_j)\{ r_j+ v_{j+1}(s_{j+1})-q_j(s_j,a_j)\}-J(\theta).
    \end{align*} 
And therefore the efficient influence function is
$$
\phi^*(\traj)=v_0(s_0) +\sum_{j=0}^{H}\mu_j(s_j,a_j)\{ r_j+ v_{j+1}(s_{j+1})-q_j(s_j,a_j)\}.
$$
The efficiency bound is given by its variance.
This matches \citet{KallusUehara2019}. 
\end{example}

\begin{example}[Off-policy evaluation in NMDP]
We repeat the above in the NMDP case. Again, we know that $\hat J(\theta)=\E_n\bracks{\sum_{t=0}^{H}\nu_{0:t} r_t}$ is still a consistent linear estimator for $J(\theta)$. Hence, $\phi(\traj)=\bracks{\sum_{t=0}^{H}\nu_{0:t} r_t}$ must be a valid influence function.
Plugging into the right-hand side of \cref{eq:factoredEIF}, we obtain:
    \begin{align*}\ts
       &\sum_{j=0}^{H}\left\{\E[ \sum_{t=0}^{H} \nu_{0:t} r_t\mid r_j,h_{a_j}]-\E[ \sum_{t=0}^{H} \nu_{0:t} r_t\mid \ch_{a_j}]+ \E[ \sum_{t=0}^{H} \nu_{0:t} r_t|s_j,\ch_{a_{j-1}}]-\E[\sum_{t=0}^{H} \nu_{0:t} r_t|\ch_{a_{j-1}}]\right \}  \\
       &=\sum_{j=0}^{H}\{\E[\nu_{0:j}|\ch_{a_j}]r_j-\E[\nu_{0:j} r_j|\ch_{a_j}]+\E[\sum_{t=j}^{H}\nu_{0:t} r_t\mid s_j,\ch_{a_{j-1}}]- \E[\sum_{t=j}^{H}\nu_{0:t} r_t\mid \ch_{a_{j-1}}]\}\\
          &=\sum_{j=0}^{H}\{\nu_{0:j} r_j+\E[\sum_{t=j}^{H}\nu_{0:t} r_t\mid \ch_{s_{j}}]- \E[\sum_{t=j-1}^{H}\nu_{0:t} r_t\mid \ch_{a_{j-1}}]-\E[\nu_{0:H} r_H|\ch_{a_H}]\}\\
        &=\sum_{j=0}^{H}\{\nu_{0:j} r_j+\E[\sum_{t=j}^{H}\nu_{0:t} r_t\mid \ch_{s_j}]- \E[\sum_{t=j}^{H}\nu_{0:t} r_t\mid \ch_{a_{j}}]\} -J(\theta)\\
               &=\sum_{j=0}^{H}\{\nu_{0:j} r_j+\E[\nu_{0:j-1}|\ch_{s_j}]\E[\sum_{t=j}^{H}\nu_{j:t} r_t\mid \ch_{s_j}]- \E[\nu_{0:j} \mid \ch_{a_{j}}]\E[\sum_{t=j}^{H}\nu_{j+1:t} r_t\mid \ch_{a_{j}}]\} -J(\theta)\\
                 &=v_0(s_0) +\sum_{j=0}^{H}\nu_{0:j}\{ r_j+ v_{j+1}(\ch_{s_{j+1}})-q_{j}(\ch_{a_j})\}-J(\theta).
    \end{align*}
And therefore the efficient influence function is
$$
\phi^*(\traj)=v_0(s_0) +\sum_{j=0}^{H}\nu_{0:j}\{ r_j+ v_{j+1}(\ch_{s_{j+1}})-q_{j}(\ch_{a_j})\}.
$$
The efficiency bound is given by its variance.
This matches \citet{jiang,thomas2016,KallusUehara2019}. 
\end{example}

\section{Proofs}
\label{sec:proof}

\begin{proof}[Proof of Theorem  \ref{thm:mdp}]

\textbf{Part 1: deriving the efficient influence function.} We use the general framework from \cref{sec:semipara}. 
Let $\overline g_t=\sum_{i=0}^{t}g_i$.
Noting that $Z(\theta)=\E\left[\sum_{t=0}^{H}r_t\nu_{0:t} \overline g_t\right]$, we see that $\sum_{t=0}^{H}r_t\nu_{0:t} \overline g_t$ is an influence function for $Z(\theta)$ in the model where the behavior policy is known. Plugging it into the right-hand-side of \cref{eq:factoredEIF}, we obtain
\begin{align*}\ts 
    & \E\left[\sum_{t=0}^{H}r_t\nu_{0:t} \overline g_t\right]  \\
    &=\sum_{j=0}^{H}\{ \E\left[\sum_{t=0}^{H}r_t\nu_{0:t} \overline g_t\mid r_j,s_j,a_j\right]-\E\left[\sum_{t=0}^{H}r_t\nu_{0:t} \overline g_t\mid s_j,a_j\right]+\E\left[\sum_{t=0}^{H}r_t\nu_{0:t} \overline g_t\mid s_j,a_{j-1},s_{j-1}\right] \\
    &-\E\left[\sum_{t=0}^{H}r_t\nu_{0:t} \overline g_t\mid a_{j-1},s_{j-1}\right] \} \\ 
     &=\sum_{j=0}^{H}\{ \E\left[\nu_{0:j} \overline g_j\mid s_j,a_j\right]r_j-\E\left[\nu_{0:j} \overline g_j r_j\mid s_j,a_j\right]+\E\left[\sum_{t=j}^{H}r_t\nu_{0:t} \overline g_t\mid s_j,a_{j-1},s_{j-1}\right] \\
    &-\E\left[\sum_{t=j}^{H}r_t\nu_{0:t} \overline g_t\mid a_{j-1},s_{j-1}\right] \} \\
    &=\sum_{j=0}^{H}\left\{ \E\left[\nu_{0:j} \overline g_j\mid s_j,a_j\right]r_j-\E\left[\sum_{t=j}^{H}r_t\nu_{0:t} \overline g_t\mid s_j,a_{j-1},s_{j-1}\right]+\E\left[\sum_{t=j}^{H}r_t\nu_{0:t} \overline g_t\mid a_{j},s_{j}\right] \right\}-Z(\theta). 
\end{align*}
Then, by substituting $\overline g_t=\sum_{i=0}^{t}g_i$, we obtain
\begin{align*}\ts 
& \E\left[\sum_{t=j}^{H}r_t\nu_{0:t} \left\{\sum_{i=0}^{t}g_i\right\}\mid a_{j},s_{j}\right] \\
&=\E\left[\sum_{t=j}^{H}r_t\nu_{0:t} \left\{\sum_{i=j+1}^{t}g_i\right\}\mid a_{j},s_{j}\right] +\E\left[\sum_{t=j}^{H}r_t\nu_{0:t} \left\{\sum_{i=0}^{j}g_i\right\}\mid a_{j},s_{j}\right] 
\\
&=\E[\nu_{0:j} |a_j,s_j] \E\left[\sum_{t=j}^{H}r_t\nu_{j+1:t} \left\{\sum_{i=j+1}^{t}g_i\right\}\mid a_{j},s_{j}\right] +\E\left[\nu_{0:j}\left\{\sum_{i=0}^{j}g_i\right\} \mid a_j,s_j\right]\E\left[\sum_{t=j}^{H}r_t\nu_{j+1:t} \mid a_{j},s_{j}\right]. 
\end{align*}
In addition, 
\begin{align*}\ts 
& \E\left[\sum_{t=j}^{H}r_t\nu_{0:t} \left\{\sum_{i=0}^{t}g_i\right\}\mid s_j,a_{j-1},s_{j-1}\right] \\
&= \E\left[\sum_{t=j}^{H}r_t\nu_{0:t} \left\{\sum_{i=j+1}^{t}g_i\right\}\mid s_j,a_{j-1},s_{j-1}\right] + \E\left[\sum_{t=j}^{H}r_t\nu_{0:t} \left\{\sum_{i=0}^{j-1}g_i\right\}\mid s_j,a_{j-1},s_{j-1}\right]\\
&+\E\left[\sum_{t=j}^{H}r_t\nu_{0:t}g_j \mid s_j,a_{j-1},s_{j-1}\right] \\
&= \E[\nu_{0:j-1}|a_{j-1},s_{j-1}]\E\left[\sum_{t=j}^{H}r_t\nu_{j:t} \left\{\sum_{i=j+1}^{t}g_i\right\}\mid s_j\right]+\E[\nu_{0:j-1}\left\{\sum_{i=0}^{j-1}g_i\right\}|a_{j-1},s_{j-1}] \E\left[\sum_{t=j}^{H}r_t\nu_{j:t} \mid s_j\right] \\
& +\E_{\thpol}[Q_j g_j \mid s_j]. 
\end{align*}
To sum up, the efficient influence function of $Z(\theta)$ under MDP is 
\begin{align}\ts 
\label{eq:eff_mdp_calculated}
      &\sum_{j=0}^{H}\{d^{\mu}_j(s_j,a_j) r_j-\mu_j(s_j,a_j)d^{q}_j(s_j,a_j)-d^{\mu}_j(s_j,a_j)q_j(s_j,a_j) \\
      &\qquad+\mu_{j-1}(s_{j-1},a_{j-1})d^{v}_j(s_j) +d^{\mu}_{j-1}(s_{j-1},a_{j-1})v_j(s_j)\},  \nonumber
\end{align}
where \begin{align*}\ts 
\E[\nu_{0:j} \mid s_j,a_j] &=\mu_j(s_j,a_j),  \\
\E\left[\sum_{t=j+1}^{H}r_t\nu_{j+1:t} \left\{\sum_{i=j+1}^{t}g_i\right\}\mid s_{j},a_{j}\right] & =d^{q}_j(s_j,a_j), \\
\E\left[\nu_{0:j}\left\{\sum_{i=0}^{j}g_i\right\} \mid s_j,a_j\right] &=d^{\mu}_j(s_j,a_j), \\
\E_{\thpol}[d^{q}_j(s_j,a_j)+q_j g_j \mid s_j] &=d^{v}(s_j). 
\end{align*}

\textbf{Part 2: calculating the variance.} 
Next, we calculate the variance of the efficient influence function using a law of total variance:
\begin{align*}\ts 
   & \var \left[d^{\mu}_0(s_0,a_0)+\sum_{j=0}^{H}d^{\mu}_j(s_j,a_j)\{r_j-q_j(s_j,a_j)+v_{j+1}(s_{j+1})\}+\mu_j(s_j,a_j)\{d^{v}_{j+1}(s_{j+1})-d^{q}_j(s_j,a_j)\} \right] \\
    &=\sum_{k=0}^{H+1}\E\left[\var \left[\E[d^{\mu}_0(s_0,a_0)+\sum_{j=0}^{H}d^{\mu}_j(s_j,a_j)\{r_j-q_j(s_j,a_j)+v_{j+1}(s_{j+1})\}+\mu_j(s_j,a_j)\{d^{v}_{j+1}(s_{j+1})-d^{q}_j(s_j,a_j)\}|\ck_{a_k}]|\ck_{a_{k-1}}\right] \right]\\
      &=\sum_{k=0}^{H+1}\E\left[\var \left[\E[\sum_{j=k-1}^{H}d^{\mu}_j(s_j,a_j)\{r_j-q_j(s_j,a_j)+v_{j+1}(s_{j+1})\}+\mu_j(s_j,a_j)\{d^{v}_{j+1}(s_{j+1})-d^{q}_j(s_j,a_j)\}|\ck_{a_k}]|\ck_{a_{k-1}}\right] \right]\\
 &=\sum_{k=0}^{H+1}\E\left[\var \left[d^{\mu}_{k-1}(s_{k-1},a_{k-1})\{r_{k-1}-q_{k-1}(s_{k-1},a_{k-1})+v_{k}(s_{k})\}+\mu_{k-1}(s_{k-1},a_{k-1})\{d^{v}_k(s_{k})-d^{q}_{k-1}(s_{k-1},a_{k-1})\}|\ck_{a_{k-1}}\right] \right] \\ 
  &=\sum_{k=0}^{H+1}\E\left[\var \left[d^{\mu}_{k-1}(s_{k-1},a_{k-1})\{r_{k-1}-q_{k-1}(s_{k-1},a_{k-1})+v_{k}(s_{k})\}+\mu_{k-1}(s_{k-1},a_{k-1})\{d^{v}_k(s_{k})-d^{q}_{k-1}(s_{k-1},a_{k-1})\}|s_{k-1},a_{k-1}\right] \right]. 
\end{align*}
From the third line to the fourth line, we have used the following Bellman equations:
\begin{align*}\ts 
    0 = \E[r_k-q_{k}+v_{k+1}\mid s_k,a_k],\quad0 = \E[-d^{q}_{k}+d^{v}_{k+1}\mid s_k,a_k]. 
\end{align*}
Next, note that
$$d^{\mu}_j(s,a)=\ts \mu_j(s,a)\nabla \log p^{\thpol}_j(s,a).$$
Therefore, the variance is written as 
\begin{align}\ts
\label{eq:variance_mdp}
  &\sum_{k=0}^{H}\E[\mu^2_{k-1}(s_{k-1},a_{k-1})\var [\nabla \log p^{\thpol}_{k-1}(s_{k-1},a_{k-1})\{r_{k-1}-q_{k-1}(s_{k-1},a_{k-1})+v_{k}(s_{k})\} \nonumber\\
  &\,\,\, +\{d^{v}_k(s_{k})-d^{q}_{k-1}(s_{k-1},a_{k-1})\}|s_{k-1},a_{k-1}]].  
\end{align}
\begin{remark}[More specific presentation of the variance]
Note that by covariance formula, the above efficiency bound is equal to
\begin{align*}\ts 
  & \sum_{k=0}^{H+1}\E\left[\mu^2_{k-1}(s_{k-1},a_{k-1})\{\otimes \nabla \log p^{\thpol}_{k-1}(s_{k-1},a_{k-1})\}\var \left[r_{k-1}|s_{k-1},a_{k-1}\right] \right]\\
 &+\sum_{k=0}^{H+1}\E\left[\mu^2_{k-1}(s_{k-1},a_{k-1})\nabla \log p^{\thpol}_{k-1}(s_{k-1},a_{k-1}) \E\left[\{r_{k-1}-q_{k-1}(s_{k-1},a_{k-1})+v_{k}(s_{k})\}d^{v}_k(s_{k})^{\top}|s_{k-1},a_{k-1}\right] \right]\\
 &+\sum_{k=0}^{H+1}\E\left[\mu^2_{k-1}(s_{k-1},a_{k-1})\var \left[d^{v}_k(s_{k})|s_{k-1},a_{k-1}\right] \right]. 
\end{align*}
\end{remark}

\textbf{Part 3: a simple bound for the variance.} 

Consider the on-policy case when $\mu_t=1$. Then, from \eqref{eq:variance_mdp}, the efficiency bound of $Z(\theta)$ under MDP is 
\begin{align}\ts
\label{eq:on_policy}
\sum_{k=0}^{H+1}\E_{p^{\thpol}}\left[\var \left[\nabla \log p^{\thpol}_{k-1}(s_{k-1},a_{k-1})\{r_{k-1}-q_{k-1}(s_{k-1},a_{k-1})+v_{k}(s_{k})\}+\{d^{v}_k(s_{k})-d^{q}_{k-1}(s_{k-1},a_{k-1})\}|s_{k-1},a_{k-1}\right] \right]. 
\end{align}
Since this is the lower bound regarding asymptotic MSE among regular estimators of $Z(\theta)$, it is smaller than the variance of 
\begin{align*}\ts
\sum_{t=0}^{H}r_t\sum_{k=0}^{t}g_k(s_k\mid a_k), 
\end{align*}
noting $\E_n[\sum_{t=0}^{H}r_t\sum_{k=0}^{t}g_k(s_k\mid a_k)]$ is an asymptotic linear estimator. 
The variance of this estimator is bounded by
\begin{align}\ts
\label{eq:spe}
  \var_{p^{\thpol}}\left[\sum_{t=0}^{H}r_t\sum_{k=0}^{t}g_k(s_k\mid a_k)\right] & \preceq \nonumber R^2_{\max}\var_{p^{\thpol}}\left[\sum_{t=0}^{H}\sum_{k=0}^{t}g_k(s_k\mid a_k)\right]  \nonumber\\ 
  &=R^2_{\max}\sum_{t=0}^{H}\sum_{k=0}^{t} \var_{p^{\thpol}}\left[g_k(s_k\mid a_k)\right] \nonumber\\
  &\preceq R^2_{\max}G_{\max}\left\{\frac{(H+1)(H+2)}{2}\right\}^2\mathcal{I}_{D\times D}. 
\end{align}
Here, from the first line to the second line, we use the fact that the covariance across the time is zero:
\begin{align*}
    \cov_{p^{\thpol}}[g_k(s_k\mid a_k),g_j(s_j\mid a_j)]= 0,\,(k\neq j), 
\end{align*}
since when $k< j$
\begin{align*}
      \cov_{p^{\thpol}}[g_k(s_k\mid a_k),g_j(s_j\mid a_j)]=\E_{p^{\thpol}}[g_k g_j]-\E_{p^{\thpol}}[g_k ]\E_{p^{\thpol}}[g_j ]=\E_{p^{\thpol}}[g_k\E_{p^{\thpol}}[g_j\mid s_j,a_j]]=0. 
\end{align*}
Therefore, the quantity \eqref{eq:on_policy} is also bounded by RHS of \eqref{eq:spe}. 

Let us go back to the general off--policy case. For any functions $k(s_t,a_t)$ taking a real number, by importance sampling, we have 
\begin{align*}\ts
    \E_{p^{\bpol}}[\mu^2_t(s_t,a_t)k(s_t,a_t)]=\E_{p^{\thpol}}[\mu_t(s_t,a_t)k(s_t,a_t)] \leq  \E_{p^{\thpol}}[k(s_t,a_t)]C_2, 
\end{align*}
since $\mu_t$ is upper bounded by $C_2$. Therefore, noting the difference of \eqref{eq:variance_mdp} and  \eqref{eq:on_policy}, the quantity $\|\var[\xi_{\MDP}]\|_{\oper}$ is upper-bounded by 
\begin{align*}\ts
    C_2R^2_{\max}G_{\max}\left\{\frac{(H+1)(H+2)}{2}\right\}^2. 
\end{align*}
\end{proof}

\begin{proof}[Proof of Theorem  \ref{thm:nmdp}]
We omit the proof of the first and second parts since it is almost the same as Theorem  \ref{thm:nmdp}, where we simply replace $\mu_t(s_t,a_t),q_t(s_t,a_t),d^{q}_t(s_t,a_t),d^{\mu}_t(s_t,a_t)$ with $\nu_{0:t}(\ch_{a_t}),q_t(\ch_{a_t}),d^{q}_t(\ch_{a_t}),d^{\nu}_t(\ch_{a_t})$. Then, based on \eqref{eq:eff_mdp_calculated}, the efficient influence function of $Z(\theta)$ under NMDP is 
\begin{align*}\ts 
     \xi_{\NMDP}=&\sum_{j=0}^{H}\{d^{\nu}_j(\ch_{a_j}) r_j-\nu_{0:j}(\ch_{a_j})d^{q}_j(\ch_{a_j})-d^{\nu}_j(\ch_{a_j})q_j(\ch_{a_j}) \\
      &+\nu_{0:j-1}(\ch_{a_{j-1}})d^{v}_j(\ch_{s_j}) +d^{\nu}_{j-1}(\ch_{a_{j-1}})v(\ch_{s_j})\}. 
\end{align*}
The efficiency bound of $Z(\theta)$ under NMDP is 
\begin{align}\ts
\label{eq:variance_nmdp}
  \sum_{k=0}^{H+1}\E[\nu^2_{k-1}(\ch_{a_{k-1}})\var [\nabla \log p^{\thpol}_{k-1}(\ch_{a_{k-1}})\{r_{k-1}-q_{k-1}(\ch_{a_{k-1}})+v_{k}(\ch_{s_{k}})\}  +\{d^{v}_k(\ch_{s_{k}})-d^{q}_{k-1}(\ch_{a_{k-1}})\}|\ch_{a_{k-1}}]],
\end{align}
where $\nabla \log p^{\thpol}_{k}(\ch_{a_{k}})=\sum_{j=0}^{k} g_j(\ch_{a_j})$. 
Again, consider the on-policy case where $\nu_{0:t}=1$. Then, the above is equal to 
\begin{align}\ts
  \sum_{k=0}^{H+1}\E_{p^{\thpol}}[\var [\nabla \log p^{\thpol}_{k-1}(\ch_{a_{k-1}})\{r_{k-1}-q_{k-1}(\ch_{a_{k-1}})+v_{k}(\ch_{s_{k}})\}  +\{d^{v}_k(\ch_{s_{k}})-d^{q}_{k-1}(\ch_{a_{k-1}})\}|\ch_{a_{k- 1}}]]. 
\end{align}
Again, this quantity is bounded by RHS of \eqref{eq:spe}. Go back to the general off-policy case. For any functions $k(s_t,a_t)$ taking a real number, by importance sampling, we have 
\begin{align*}\ts
    \E_{p^{\bpol}}[\nu^2_t(s_t,a_t)k(s_t,a_t)]=\E_{p^{\thpol}}[\nu_{0:t}(s_t,a_t)k(s_t,a_t)] \leq  \E_{p^{\thpol}}[k(s_t,a_t)]C^{t}_1, 
\end{align*}
noting $\nu_{0:t} \leq C^t_1$. Therefore, noting the difference of \eqref{eq:variance_mdp} and  \eqref{eq:on_policy}, the term $\|\var[\xi_{\MDP}]\|_{\oper}$ is upper-bounded by 
\begin{align*}\ts
    C^{H}_1R^2_{\max}G_{\max}\left\{\frac{(H+1)(H+2)}{2}\right\}^2.  
\end{align*}
\end{proof}

\begin{proof}[Proof of Theorem \ref{thm:lower_bound_nmdp}]

The efficiency bound of $Z(\theta)$ under NMDP is written as $
\sum_{k=0}^{H+1}\E\left[\nu^2_{k-1} \alpha_{k-1}(\ch_{k-1}) \right]$, 
where 
\begin{align*}
    \alpha_{k-1}(\ch_{k-1})=\var [\nabla \log p^{\thpol}_{k-1}(\ch_{a_{k-1}})\{r_{k-1}-q_{k-1}(\ch_{a_{k-1}})+v_{k}(\ch_{s_{k}})\}  +\{d^{v}_k(\ch_{s_{k}})-d^{q}_{k-1}(\ch_{a_{k-1}})\}|\ch_{a_{k-1}}]. 
\end{align*}
From the assumption, this efficiency bound is lower bounded:
\begin{align*}\ts
\sum_{k=0}^{H+1}\E_{p^{\bpol}}\left[\nu^2_{k-1} \alpha_{k-1}(\ch_{a_{k-1}}) \right] & \succeq \E_{p^{\bpol}}\left[\nu^2_{H} \alpha_H(\ch_{a_{H}} ) \right]\succeq C^{2H}_3\E_{p^{\bpol}}\left[\alpha_H(\ch_{a_{H}})\right]\succeq C^{2H}_3c. 
\end{align*}
Here, we also have used $\alpha_{k}(\ch_{a_k})$ for each $-1 \leq k \leq H-1$ is a semi-positive definite matrix, and 
\begin{align*}
     \alpha_{H}(\ch_{a_{H}})= \var[\nabla \log p^{\thpol}_{H}(\ch_{a_{H}})\{ r_H-q_H\} \mid \ch_{a_H}]. 
\end{align*}
\end{proof}

\begin{proof}[Proof of Theorem \ref{thm:step_wise}]
We have 
\begin{align*} \ts
\var[\sum_{t=0}^{H+1}\nu_{0:t}r_t\sum_{s=0}^t g_s] &=\sum_{k=0}^{H+1}\E[\var[\E[ \sum_{t=0}^{H}\nu_{0:t}r_t\sum_{s=0}^t g_s  \mid \ck_{a_k}]\mid \ck_{a_{k-1}}]]  \\  
 &=\sum_{k=0}^{H+1}\E[\var[\E[ \sum_{t=k-1}^H\nu_{0:t}r_t\sum_{s=k-1}^{t} g_s  \mid \ck_{a_k}]\mid \ck_{a_{k-1}}]]   \\ 
 &=\sum_{k=0}^{H+1}\E[\nu^2_{k-1}\var[\E[\sum_{t=k-1}^H\nu_{k:t}r_t\sum_{s=k-1}^{t} g_s  \mid \ck_{a_k}]\mid \ck_{a_{k-1}}]]. \\
 &=\sum_{k=0}^{H+1}\E[\nu^2_{k-1}\var[\E[\sum_{t=k-1}^H\nu_{k:t}r_t\sum_{s=k-1}^{t} g_s  \mid \ch_{a_k}]\mid \ch_{a_{k-1}}]]. 
\end{align*}
\end{proof}

\begin{proof}[Proof of Theorem \ref{thm:step_wise_lower}]
Based on \cref{thm:step_wise}, as in the proof of \cref{thm:lower_bound_nmdp}, when $c\mathcal{I}\preceq \var[r_Hg_H\mid \ch_{a_H}]$, this variance is lower bounded by $C^{2H}_3c$. 
\end{proof}

\begin{proof}[Proof of Theorem  \ref{thm:db}]

For the simplicity of the notation, we prove the case where $K=2$. Recall that the influence function of $\xi_{\MDP}$ is 
\begin{align}\ts 
      \xi_{\MDP}(\ck;q,\mu,d^{q},d^{\mu})&=\sum_{j=0}^{H}\{d^{\mu}_j(s_j,a_j) r_j-\mu_j(s_j,a_j)d^{q}_j(s_j,a_j)-d^{\mu}_j(s_j,a_j)q_j(s_j,a_j) \\
      &+\mu_{j-1}(s_{j-1},a_{j-1})d^{v}_j(s_j) +d^{\mu}_{j-1}(s_{j-1},a_{j-1})v_j(s_j)\}. 
\end{align}
Here, $\mu=\{\mu_j\},q=\{q_j\},\,d^{q}=\{d^{q}_j\},\,d^{\mu}=\{d^{\mu}_j\}$. 
Then, the estimator $\hat Z^{\DO}(\theta)$ is 
\begin{align*}\ts
    0.5\E_{\cu_1}[\xi_{\MDP}(\ck;\hat q^{(1)},\hat \mu^{(1)},\hat d^{q(1)},\hat d^{\mu(1)})]+0.5\E_{\cu_2}[\xi_{\MDP}(\ck;\hat q^{(2)},\hat \mu^{(2)},\hat d^{q(2)},\hat d^{\mu(2)})]. 
\end{align*}
Then, we have 
\begin{align}\ts
    &\sqrt{n}\{\E_{\cu_2}[\xi_{\MDP}(\ck;\hat q^{(2)},\hat \mu^{(2)},\hat d^{q(2)},\hat d^{\mu(2)})]-Z(\theta)\}  \nonumber \\ 
    &=\sqrt{n}\{\bG_{\cu_2}[ \xi_{\MDP}(\ck;\hat q^{(2)},\hat \mu^{(2)},\hat d^{q(2)},\hat d^{\mu(2)})-\xi_{\MDP}(\ck;q,\mu,d^q,d^\mu) ] \label{eq:first} \\
    &+\sqrt{n}\{\E[\xi_{\MDP}(\ck;\hat q^{(2)},\hat \mu^{(2)},\hat d^{q(2)},\hat d^{\mu(2)})\mid \hat q^{(2)},\hat \mu^{(2)},\hat d^{q(2)},\hat d^{\mu(2)}]-\E[\xi_{\MDP}(\ck;q,\mu,d^q,d^\mu)   ] \}\label{eq:second} \\
    &+\sqrt{n}\{\E_{\cu_2}[ \xi_{\MDP}(\ck;q,\mu,d^q,d^\mu) ] -Z(\theta)\}.  \nonumber
\end{align}
The first term \eqref{eq:first} is $\op(1)$ following the proof of Theorem 5 \citep{KallusUehara2019} (Also from doubly robust struture of EIF from the following lemma). The second term \eqref{eq:second} is following Lemma \ref{lem:second}. 

\begin{lemma}
\label{lem:second}
 The term $\eqref{eq:second}$ is $\op(1)$. 
\end{lemma}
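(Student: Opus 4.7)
The plan is to exploit the Neyman-orthogonal (doubly robust) structure of the efficient influence function $\xi_{\MDP}$. Although $\xi_{\MDP}$ depends on the nuisances $\eta=(q,\mu,d^q,d^\mu)$, the plug-in bias $\E[\xi_{\MDP}(\ck;\hat\eta)\mid\hat\eta]-Z(\theta)$ is second-order in the nuisance errors: it decomposes into a sum of products, each involving two \emph{distinct} nuisance errors rather than any single error linearly. This is exactly the structure claimed in the display \eqref{eq:DRstructure} of the main text, and Lemma \ref{lem:second} is its immediate probabilistic consequence. Concretely, because the cross-fitting construction makes $\hat\eta^{(2)}$ a function of $\mathcal{L}_2$ and hence independent of $\mathcal{U}_2$, conditioning on $\hat\eta^{(2)}$ reduces \eqref{eq:second} to $\sqrt{n}\,b(\hat\eta^{(2)})$ with $b(\hat\eta^{(2)}) := \E[\xi_{\MDP}(\ck;\hat\eta^{(2)})\mid\hat\eta^{(2)}] - Z(\theta)$, so the goal becomes $b(\hat\eta^{(2)})=\op(n^{-1/2})$.

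Next I would write $b=\sum_{j=0}^{H} b_j$ and, for each $j$, add and subtract true nuisances inside the four $j$-th summands $\hat d^\mu_j(r_j-\hat q_j) - \hat\mu_j\hat d^q_j + \hat\mu_{j-1}\hat d^v_j + \hat d^\mu_{j-1}\hat v_j$. The key step is to invoke the Bellman-type identities $\E[r_j+v_{j+1}\mid s_j,a_j]=q_j$, $\E[d^v_{j+1}\mid s_j,a_j]=d^q_j$, $\E[\mu_{j-1}\tilde\nu_j\mid s_j,a_j]=\mu_j$, $\E[d^\mu_{j-1}\tilde\nu_j\mid s_j,a_j]+\mu_j g_j=d^\mu_j$ from Theorem \ref{thm:Bellman}, together with the projections $\E_{\pi^\theta_j}[q_j\mid s_j]=v_j$ and $\E_{\pi^\theta_j}[d^q_j+q_jg_j\mid s_j]=d^v_j$ used in \cref{eq:vhat}, plus the tower property. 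These identities force every term that is linear in a single error $\hat\eta_\ell-\eta_\ell$ to cancel, leaving only quadratic cross-terms of the form $\E[(\hat\eta_\ell-\eta_\ell)(\hat\eta_m-\eta_m)]$ under $p_{\pi^b}$.

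Cauchy-Schwarz then converts each residual cross-term into a product of $L^2_b$-norms, reproducing exactly the bound listed below \eqref{eq:DRstructure}:
$$|b_j|\lnapprox \|\hat\mu_j-\mu_j\|_{L^2_b}\|\hat d^q_j-d^q_j\|_{L^2_b} + \|\hat d^\mu_j-d^\mu_j\|_{L^2_b}\|\hat q_j-q_j\|_{L^2_b} + \|\hat\mu_{j-1}-\mu_{j-1}\|_{L^2_b}\|\hat d^v_j-d^v_j\|_{L^2_b} + \|\hat d^\mu_{j-1}-d^\mu_{j-1}\|_{L^2_b}\|\hat v_j-v_j\|_{L^2_b}.$$
The $\hat v_j$ and $\hat d^v_j$ errors reduce to $\hat q_j$ and $\hat d^q_j$ errors via Jensen's inequality and the change-of-measure bound $\tilde\nu_j\leq C_1$, giving $\|\hat v_j-v_j\|_{L^2_b}\lnapprox C_1\|\hat q_j-q_j\|_{L^2_b}$ and $\|\hat d^v_j-d^v_j\|_{L^2_b}\lnapprox C_1(\|\hat d^q_j-d^q_j\|_{L^2_b}+G_{\max}\|\hat q_j-q_j\|_{L^2_b})$. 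Under the assumed rates each product is $\op(n^{-(\alpha_a+\alpha_b)})$ with $\alpha_a+\alpha_b \geq \min(\alpha_1,\alpha_2)+\min(\alpha_3,\alpha_4)\geq 1/2$, so summing over the finitely many $j\leq H$ yields $b(\hat\eta^{(2)})=\op(n^{-1/2})$ and hence \eqref{eq:second}$=\op(1)$.

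The main obstacle is the algebraic bookkeeping in the second step: each of the four summands in $\xi_{\MDP}$ produces many expansion terms after adding and subtracting true nuisances, and verifying that every linear-in-error contribution cancels requires careful orchestration of the tower property with the six Bellman-type identities listed above, including the change of measure from $\pi^\theta$ to $\pi^b$ that enters through $\tilde\nu_j$ and the forward/backward recursions for $d^\mu$ and $d^q$. Once those cancellations are verified and only the quadratic remainders survive, the Cauchy-Schwarz bound and the rate assumption deliver the conclusion essentially mechanically.
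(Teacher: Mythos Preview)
Your proposal is correct and follows essentially the same approach as the paper: expand $\E[\xi_{\MDP}(\ck;\hat\eta)\mid\hat\eta]-\E[\xi_{\MDP}(\ck;\eta)]$, use the Bellman-type moment identities for $q,d^q,\mu,d^\mu$ to cancel all first-order terms, bound the surviving cross-terms by Cauchy--Schwarz (the paper says \Holder, which is the same here), and control $\|\hat v_j-v_j\|_{L^2_b}$ and $\|\hat d^v_j-d^v_j\|_{L^2_b}$ via Jensen plus the change-of-measure bound $\tilde\nu_j\leq C_1$ exactly as you describe. The only cosmetic difference is that the paper writes the cancellation identities in integrated ``moment'' form $\E[\mu_k f - \mu_{k-1}\E_{\thpol}[f\mid s_k]]=0$ etc., whereas you cite the equivalent conditional Bellman equations from Theorem~\ref{thm:Bellman}; these are the same up to the tower property.
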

\begin{proof}
\begin{align*}\ts
&\E[\xi_{\MDP}(\ck;\hat q^{(2)},\hat \mu^{(2)},\hat d^{q(2)},\hat d^{\mu(2)})\mid \hat q^{(2)},\hat \mu^{(2)},\hat d^{q(2)},\hat d^{\mu(2)}]-\E[\xi_{\MDP}(\ck;q,\mu,d^q,d^\mu)]  \\
&=\E[\sum_{k=0}^{H}(\hat \mu^{(2)}_k-\mu_k)(-\hat d^{q(2)}_k+d^q_k)+(\hat d^{\mu(2)}_{k}-d^{\mu}_{k})(-\hat q^{(2)}_k+q_k)\mid \mathcal{L}_2] \\
&+\E[\sum_{k=0}^{H} (\hat \mu^{(2)}_{k-1}-\mu_{k-1}) (\hat d^{v(2)}_k-d^v_k)+(\hat d^{\mu(2)}_{k-1}-d^{\mu}_{k-1})(\hat v^{(2)}_k-v_k)\mid \mathcal{L}_2  ]   \\ 
&+\E[\sum_{k=0}^{H}d^{\mu}_k(q_k-\hat q^{(2)}_k)+\mu_k(d^{q}_k-\hat d^{q(2)}_k)\mid \mathcal{L}_2] \\
&+\E[\sum_{k=0}^{H}d^{\mu}_{k-1}(\hat v^{(2)}_k-v_k)+\mu_{k-1}(\E_{\epol}[\hat d^{q(2)}_k+\hat q^{(2)}_kg_k\mid s_k ]-\E_{\epol}[d^{q}_k+q_kg_k\mid s_k ]) \mid \mathcal{L}_2] \\
&+\E[\sum_{k=0}^{H}(\hat \mu^{(2)}_k -\mu_k)(- d^{q}_k+d^{v}_{k+1})+(\hat d^{\mu(2)}_k-d^{\mu}_k)(r_k- q_k+v_{k+1}) \mid \mathcal{L}_2]  \\
&=\E[\sum_{k=0}^{H}(\hat \mu^{(2)}_k-\mu_k)(-\hat d^{q(2)}_k+d^q_k)+(\hat d^{\mu(2)}_{k}-d^{\mu}_{k})(-\hat q^{(2)}_k+q_k)\mid \mathcal{L}_2] \\
&+\E[\sum_{k=0}^{H} (\hat \mu^{(2)}_{k-1}-\mu_{k-1}) (\hat d^{v(2)}_k-d^v_k)+(\hat d^{\mu(2)}_{k-1}-d^{\mu}_{k-1})(\hat v^{(2)}_k-v_k)\mid \mathcal{L}_2  ] .
\end{align*}
Here, we use 
\begin{align*}\ts 
0&= \E[\mu_k f(s_k,a_k)-\mu_{k-1}\E_{\thpol}[f(s_k,a_k)\mid s_k ]], \\ 
0&= \E[d^{\mu}_k f(s_k,a_k)-d^{\mu}_{k-1}\E_{\thpol}[f(s_k,a_k)\mid s_k ]-\mu^{k-1}\E_{\thpol}[f(s_k,a_k)g_k \mid s_k ]],\\ 
    0 &= \E[f(s_k,a_k)(r_k-q_{k}+v_{k+1})],  \\ 
   0 &= \E[f(s_k,a_k)(-d^{q}_{k}+d^{v}_{k+1})]. 
\end{align*}
Then, from \Holder's inequality, the Euclidean norm of the above is upper bounded by up to some absolute constant: 
\begin{align*}\ts
&\sum_{k=0}^{H}\|\hat \mu^{(2)}_k-\mu_k \|_{L^2_b}\|\|\hat d^{q(2)}_k-d^{q}_k \|_{2}\|_{L^2_b}+\|\|\hat d^{\mu(2)}_k-d^{\mu}_k\|_{2}\|_{L^2_b}\|\hat q^{(2)}_k-q_k \|_{L^2_b} \\ 
&+\sum_{k=0}^{H} \|\hat \mu^{(2)}_{k-1}-\mu_{k-1} \|_{L^2_b}\|\|\hat d^{v(2)}_k-d^{v}_k \|_{2}\|_{L^2_b}+\|\|\hat d^{\mu(2)}_{k-1}-d^{\mu}_{k-1}\|_{2}\|_{L^2_b}\|\hat v^{(2)}_k-v_k \|_{L^2_b}\\
&=\op(n^{-\alpha_1})\op(n^{-\alpha_4})+\op(n^{-\alpha_2})\op(n^{-\alpha_3})+\op(n^{-\alpha_1})\op(n^{-\min(\alpha_3,\alpha_4)})+\op(n^{-\alpha_2})\op(n^{-\alpha_3})\\
&=\op(n^{-\min\{\alpha_1,\alpha_2\}})\op(n^{-\min\{\alpha_3,\alpha_4\}})=\op(n^{-1/2}). 
\end{align*}
Here, the convergence rates of $\hat v,\,\hat d^{v}$ are proved as follows:
\begin{align*}\ts
    \|\hat v_k-v_k\|^2_{L^2_b} &=\E_{\bpol}[\{\E_{\thpol}[\hat q_k(s_k,a_k)\mid s_k]-\E_{\thpol}[q_k(s_k,a_k)\mid s_k]\}^2\mid \hat q_k] \\ 
    &\leq \E_{\bpol}[\E_{\thpol}[\{\hat q_k(s_k,a_k)-q_k(s_k,a_k)\}^2\mid s_k]\mid \hat q_k] \\ 
    & \leq C_1\E_{\bpol}[\E_{\bpol}[\{\hat q_k(s_k,a_k)-q_k(s_k,a_k)\}^2\mid s_k]\mid \hat q_k] \\ 
      & \leq C_1\E_{\bpol}[\{\hat q_k(s_k,a_k)-q_k(s_k,a_k)\}^2 \mid \hat q_k]=\op(n^{-\alpha_3})\\ 
\end{align*}
The first line to the second line is proved by conditional Jensen's inequality. In the same way, by defining $q_{k,i}$ as a $i$-th component of $q_{k}$,  
\begin{align*}\ts
      \|\hat d^{v}_{k,i}-d^{v}_{k,i}\|^2_{L^2_b} &< \E_{\bpol}[\{\E_{\thpol}[(\hat d^{q}_{k,i}-d^{q}_{k,i})+(\hat q_k-q_k)g_{k,i} \mid s_k]\}^2  \mid \hat q_k,\hat d^{q}_k] \\
      &\leq \E_{\bpol}[\E_{\thpol}[\{(\hat d^{q}_{k,i}-d^{q}_{k,i})+(\hat q_k-q_k)g_{k,i}\}^2 \mid s_k]  \mid \hat q_k,\hat d^{q}_k] \\
      &\leq C_1 \E_{\bpol}[\E_{\bpol}[\{(\hat d^{q}_{k,i}-d^{q}_{k,i})+(\hat q_k-q_k)g_{k,i}\}^2 \mid s_k]  \mid \hat q_k,\hat d^{q}_k] \\
    &\leq C_1 \E_{\bpol}[\{(\hat d^{q}_{k,i}-d^{q}_{k,i})+(\hat q_k-q_k)g_{k,i}\}^2  \mid \hat q_k,\hat d^{q}_k]= \op( n^{-\min(\alpha_3,\alpha_4)}).
\end{align*}
\end{proof}
Finally, combining everything, we have 
\begin{align*}\ts
    & 0.5\E_{\cu_1}[\xi_{\MDP}(\ck;\hat q^{(1)},\hat \mu^{(1)},\hat d^{q(1)},\hat d^{\mu(1)})]+0.5\E_{\cu_2}[\xi_{\MDP}(\ck;\hat q^{(2)},\hat \mu^{(2)},\hat d^{q(2)},\hat d^{\mu(2)})] \\ 
    &=0.5 \E_{\cu_1}[ \xi_{\MDP}(\ck;q,\mu,d^q,d^\mu) ] +0.5 \E_{\cu_2}[ \xi_{\MDP}(\ck;q,\mu,d^q,d^\mu) ] +\op(n^{-1/2}) \\ 
    &= \E_{n}[ \xi_{\MDP}(\ck;q,\mu,d^q,d^\mu) ]+\op(n^{-1/2}) . 
\end{align*}
Finally, CLT concludes the proof. 
\end{proof}

\begin{proof}[Proof of Theorem  \ref{thm:db_weak}]

For the simplicity of the notation, we prove the case where $K=2$. Recall that the influence function of $\xi_{\MDP}$ is 
\begin{align}\ts 
      &\xi_{\MDP}(\ck;q,\mu,d^{q},d^{\mu})=\sum_{j=0}^{H}\{d^{\mu}_j(s_j,a_j) r_j-\mu_j(s_j,a_j)d^{q}_j(s_j,a_j)-d^{\mu}_j(s_j,a_j)q_j(s_j,a_j) \\
      &+\mu_{j-1}(s_{j-1},a_{j-1})\E[d^{q}_j(s_j,a_j)|s_j] +d^{\mu}_{j-1}(s_{j-1},a_{j-1})\E[q_j(s_j,a_j)|s_j]\}. 
\end{align}
Here, $\mu=\{\mu_j\},q=\{q_j\},\,d^{q}=\{d^{q}_j\},\,d^{\mu}=\{d^{\mu}_j\}$. 
Then, the estimator $\hat Z^{\DO}(\theta)$ is 
\begin{align*}\ts
    0.5\E_{\cu_1}[\xi_{\MDP}(\ck;\hat q^{(1)},\hat \mu^{(1)},\hat d^{q(1)},\hat d^{\mu(1)})]+0.5\E_{\cu_2}[\xi_{\MDP}(\ck;\hat q^{(2)},\hat \mu^{(2)},\hat d^{q(2)},\hat d^{\mu(2)})]. 
\end{align*}
Then, we have 
\begin{align}\ts
    &\{\E_{\cu_2}[\xi_{\MDP}(\ck;\hat q^{(2)},\hat \mu^{(2)},\hat d^{q(2)},\hat d^{\mu(2)})]-Z(\theta)\}  \nonumber \\ 
    &\{\bG_{\cu_2}[ \xi_{\MDP}(\ck;\hat q^{(2)},\hat \mu^{(2)},\hat d^{q(2)},\hat d^{\mu(2)})-\xi_{\MDP}(\ck;q^{\dagger},\mu^{\dagger},d^{\mu\dagger},d^{q\dagger}) ] \label{eq:first_} \\
    &+\{\E[\xi_{\MDP}(\ck;\hat q^{(2)},\hat \mu^{(2)},\hat d^{q(2)},\hat d^{\mu(2)})\mid \hat q^{(2)},\hat \mu^{(2)},\hat d^{q(2)},\hat d^{\mu(2)}]-\E[\xi_{\MDP}(\ck;q^{\dagger},\mu^{\dagger},d^{\mu\dagger},d^{q\dagger})   ] \}\label{eq:second_} \\
    &+\{\E_{\cu_2}[ \xi_{\MDP}(\ck;q^{\dagger},\mu^{\dagger},d^{\mu\dagger},d^{q\dagger}) ] -Z(\theta)\}.  \nonumber
\end{align}
The first term \eqref{eq:first_} is $\op(1/\sqrt{n})$ following the proof of Theorem 5 \citep{KallusUehara2019}. The second term \eqref{eq:second_} is $0$ following Lemma \ref{lem:second}.

Finally, 
\begin{align*}\ts
    & 0.5\E_{\cu_1}[\xi_{\MDP}(\ck;\hat q^{(1)},\hat \mu^{(1)},\hat d^{q(1)},\hat d^{\mu(1)})]+0.5\E_{\cu_2}[\xi_{\MDP}(\ck;\hat q^{(2)},\hat \mu^{(2)},\hat d^{q(2)},\hat d^{\mu(2)})] \\ 
    &=0.5 \E_{\cu_1}[ \xi_{\MDP}(\ck;q^{\dagger},\mu^{\dagger},d^{\mu\dagger},d^{q\dagger}) ] +0.5 \E_{\cu_2}[ \xi_{\MDP}(\ck;q^{\dagger},\mu^{\dagger},d^{\mu\dagger},d^{q\dagger}) ] +\op(1) \\ 
    &= \E_{n}[ \xi_{\MDP}(\ck;q^{\dagger},\mu^{\dagger},d^{\mu\dagger},d^{q\dagger}) ]+\op(1) . 
\end{align*}
Finally, the law of large number concludes the proof since the mean is $Z(\theta)$ under the condition in the theorem. We use  $\E[\xi_{\MDP}(\ck;q^{\dagger},\mu^{\dagger},d^{\mu\dagger},d^{q\dagger})]=Z(\theta)$. 

\begin{lemma}
$\E[\xi_{\MDP}(\ck;q^{\dagger},\mu^{\dagger},d^{\mu\dagger},d^{q\dagger})]=Z(\theta).$
\end{lemma}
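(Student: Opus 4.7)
The plan is to compute the difference
\[\E[\xi_{\MDP}(\ck;q^{\dagger},\mu^{\dagger},d^{\mu\dagger},d^{q\dagger})]-\E[\xi_{\MDP}(\ck;q,\mu,d^{\mu},d^{q})]\]
and show it vanishes under any of the three scenarios. Since Theorem \ref{thm:mdp} established that $\xi_{\MDP}(\ck;q,\mu,d^{\mu},d^{q})$ has expectation $Z(\theta)$ (recall we shifted the EIF so that its mean is the estimand), this will give the claim.

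The first step is to reuse the algebraic expansion already carried out in the proof of Lemma \ref{lem:second}. That argument added and subtracted the same four expressions:
\begin{align*}
0&=\E[\mu_k f(s_k,a_k)-\mu_{k-1}\E_{\thpol}[f(s_k,a_k)\mid s_k]],\\
0&=\E[d^{\mu}_k f(s_k,a_k)-d^{\mu}_{k-1}\E_{\thpol}[f(s_k,a_k)\mid s_k]-\mu_{k-1}\E_{\thpol}[f(s_k,a_k)g_k\mid s_k]],\\
0&=\E[f(s_k,a_k)(r_k-q_{k}+v_{k+1})],\\
0&=\E[f(s_k,a_k)(-d^{q}_{k}+d^{v}_{k+1})],
\end{align*}
and reduced the mean-difference to a sum of products of nuisance errors. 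Applying exactly the same manipulations with $\mu^{\dagger},q^{\dagger},d^{\mu\dagger},d^{q\dagger}$ in place of the hatted nuisances, one finds
\begin{align*}
\E[\xi_{\MDP}(\ck;q^{\dagger},\mu^{\dagger},d^{\mu\dagger},d^{q\dagger})]-Z(\theta)=\sum_{k=0}^{H}\bigl\{&\E[(\mu^{\dagger}_k-\mu_k)(-d^{q\dagger}_k+d^{q}_k)]\\
&+\E[(d^{\mu\dagger}_k-d^{\mu}_k)(-q^{\dagger}_k+q_k)]\\
&+\E[(\mu^{\dagger}_{k-1}-\mu_{k-1})(d^{v\dagger}_k-d^{v}_k)]\\
&+\E[(d^{\mu\dagger}_{k-1}-d^{\mu}_{k-1})(v^{\dagger}_k-v_k)]\bigr\},
\end{align*}
where $v^{\dagger}_k=\E_{\thpol}[q^{\dagger}_k\mid s_k]$ and $d^{v\dagger}_k=\E_{\thpol}[d^{q\dagger}_k+q^{\dagger}_k g_k\mid s_k]$ by construction, just as in \eqref{eq:vhat}.

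The second step is a case check on the four product terms. In case (a) $\mu^{\dagger}=\mu,\,d^{\mu\dagger}=d^{\mu}$, the first factors $\mu^{\dagger}_k-\mu_k,\,d^{\mu\dagger}_k-d^{\mu}_k,\,\mu^{\dagger}_{k-1}-\mu_{k-1},\,d^{\mu\dagger}_{k-1}-d^{\mu}_{k-1}$ are each identically zero, killing all four terms. In case (b) $q^{\dagger}=q,\,d^{q\dagger}=d^{q}$, the second factors in the first two terms vanish; moreover the construction of $v,d^{v}$ as specific conditional averages under $\thpol$ forces $v^{\dagger}=v$ and $d^{v\dagger}=d^{v}$, so the last two terms also vanish. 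In case (c) $\mu^{\dagger}=\mu,\,q^{\dagger}=q$, the first factors in terms 1 and 3 vanish, the second factor in term 2 vanishes, and term 4 vanishes because $v^{\dagger}=\E_{\thpol}[q^{\dagger}\mid s]=\E_{\thpol}[q\mid s]=v$. In all three scenarios the entire bias expression collapses to zero, giving $\E[\xi_{\MDP}(\ck;q^{\dagger},\mu^{\dagger},d^{\mu\dagger},d^{q\dagger})]=Z(\theta)$ as required.

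There is essentially no obstacle here once the expansion from Lemma \ref{lem:second} is imported: the proof is pure bookkeeping. The one subtle point worth flagging (the near-obstacle) is case (c), where neither $d^{q\dagger}$ nor $d^{\mu\dagger}$ need agree with the truth; the cancellation works only because the $d^{v}$-type terms are multiplied by $\mu^{\dagger}_{k-1}-\mu_{k-1}=0$, and the $v$-type terms benefit from the tower identity $v^{\dagger}=\E_{\thpol}[q^{\dagger}\mid s]$ that already bakes in consistency whenever $q^{\dagger}=q$. This is exactly the asymmetry hinted at after Theorem \ref{thm:db_weak} (why knowing only $d^{q},d^{\mu}$ is not enough).
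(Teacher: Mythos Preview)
Your proposal is correct and follows essentially the same approach as the paper: both compute the difference $\E[\xi_{\MDP}(\ck;q^{\dagger},\mu^{\dagger},d^{\mu\dagger},d^{q\dagger})]-Z(\theta)$ by reusing the algebraic expansion from Lemma~\ref{lem:second} and the four Bellman-type identities to reduce it to the same sum of products of nuisance errors, and then verify vanishing under each of the three scenarios. Your discussion of the subtle point in case~(c) and why correctness of $d^{q},d^{\mu}$ alone does not suffice also mirrors the paper's own remark following the lemma.
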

\begin{proof}
\begin{align*}\ts
&\E[\xi_{\MDP}(\ck;q^{\dagger},\mu^{\dagger},d^{\mu\dagger},d^{\mu \dagger})\mid q^{\dagger},\mu^{\dagger},d^{\mu\dagger}, d^{\mu \dagger}]-\E[\xi_{\MDP}(\ck;q,\mu,d^q,d^\mu)]  \\
&=\E[\sum_{k=0}^{H}(\mu^{\dagger}_k-\mu_k)(-d^{q\dagger}_k+d^q_k)+(d^{\mu \dagger}_{k}-d^{\mu}_{k})(-q^{\dagger}_k+q_k)] \\
&+\E[\sum_{k=0}^{H} (\mu^{\dagger}_{k-1}-\mu_{k-1}) (d^{v\dagger}_k-d^v_k)+( d^{\mu\dagger}_{k-1}-d^{\mu}_{k-1})(v^{\dagger}_k-v_k)  ]   \\ 
&+\E[\sum_{k=0}^{H}d^{\mu}_k(q_k-q^{\dagger}_k)+\mu_k(d^{q}_k-d^{q\dagger}_k) ]\\
&+\E[\sum_{k=0}^{H}d^{\mu}_{k-1}(v^{\dagger}_k-v_k)+\mu_{k-1}(\E_{\thpol}[(q^{\dagger}_k-q_k)g_k\mid s_k] )+\mu_{k-1}(\E_{\thpol}[d^{q\dagger}_k\mid s_k]-\E_{\thpol}[d^{q}_k\mid s_k] ) ] \\
&+\E[\sum_{k=0}^{H}(\mu^{\dagger}_k-\mu_k)(- d^{q}_k+d^{v}_k)+ (d^{\mu \dagger}_k-d^{\mu}_k)(r_k-q_k+v_{k+1})]\\ 
&=\E[\sum_{k=0}^{H}(\mu^{\dagger}_k-\mu_k)(-d^{q\dagger}_k+d^q_k)+(d^{\mu \dagger}_{k}-d^{\mu}_{k})(q^{\dagger}_k-q_k)] \\
&+\E[\sum_{k=0}^{H} (\mu^{\dagger}_{k-1}-\mu_{k-1}) (- d^{v\dagger}_k+d^v_{k+1})+( d^{\mu\dagger}_{k-1}-d^{\mu}_{k-1})(v^{\dagger}_k-v_k)  ]. 
\end{align*}
Here, we use the relations for $\forall f(s,a)$: 
\begin{align*}\ts 
0&= \E[\mu_k f(s_k,a_k)-\mu_{k-1}\E_{\thpol}[f(s_k,a_k)\mid s_k ]], \\ 
0&= \E[d^{\mu}_k f(s_k,a_k)-d^{\mu}_{k-1}\E_{\thpol}[f(s_k,a_k)\mid s_k]-\mu^{k-1}\E_{\thpol}[f(s_k,a_k)g_k \mid s_k ] ],\\  
    0 &= \E[f(s_k,a_k)(r_k-q_{k}+v_{k+1})],  \\ 
   0 &= \E[f(s_k,a_k)(-d^{q}_{k}+d^{v}_{k+1})]. 
\end{align*}
Then, when $\mu=\mu^{\dagger},d^{\mu}=d^{\mu\dagger}$ or $q=q^{\dagger},\,d^{q}=d^{q\dagger}$ or $\mu=\mu^{\dagger},\,q=q^{\dagger}$, we have 
\begin{align*}\ts
&\E[\sum_{k=0}^{H}(\mu^{\dagger}_k-\mu_k)(d^q_k-d^{q\dagger}_k)+(d^{\mu \dagger}_{k}-d^{\mu}_{k})(q_k-q^{\dagger}_k)]+\E[\sum_{k=0}^{H} (\mu^{\dagger}_{k-1}-\mu_{k-1}) (d^{v\dagger}_k-d^v_k)+( d^{\mu\dagger}_{k-1}-d^{\mu}_{k-1})(v^{\dagger}_k-v_k)  ] \\
&=0+0+0+0=0. 
\end{align*}
This concludes the proof. 
\begin{remark}
The above is not equal to $0$ when  $d^\mu=d^{\mu\dagger},\,d^q=d^{q\dagger}$. The reason is in that case: 
\begin{align*}\ts
    \E[\sum_{k=0}^{H} (\mu^{\dagger}_{k-1}-\mu_{k-1}) (d^{v\dagger}_k-d^v_k)]\neq 0. 
\end{align*}
since $d^{v\dagger}_k\neq d^{v}_k$. 
\end{remark}
\end{proof}

\end{proof}

\begin{proof}[Proof of Theorem \ref{thm:monte_carlo}]
First, we have 
\begin{align*}\ts
    q_j(s_j,s_j)=\E[\sum_{t=j}^{H} r_t \nu_{j+1:t}\mid a_j,s_j]. 
\end{align*}
By differentiating w.r.t $\theta$, we have 
\begin{align*}\ts
d^{q}_j(s_j,a_j) &= \E\left[\sum_{t=j}^{H}r_t\nu_{j+1:t} \left\{\sum_{i=j+1}^{t}g_i(a_i|s_i)\right\}\mid a_{j},s_{j}\right]= \E\left[\sum_{t=j+1}^{H}r_t\nu_{j+1:t} \left\{\sum_{i=j+1}^{t}g_i(a_i|s_i)\right\}\mid a_{j},s_{j}\right], 
\end{align*}
noting $$\ts
 \nabla \nu_{j+1:t}=\nu_{j+1:t} \nabla \log \nu_{j+1:t}=\nu_{j+1:t} \{\sum_{i=j+1}^{t}   g_i(a_i|s_i)\}. $$
Second, we have 
\begin{align*}\ts
    \mu_j(s_j,a_j)= \E\left[ \nu_{0:j} \mid a_{j},s_{j}\right]. 
\end{align*}
By differentiating w.r.t $\theta$, we have 
\begin{align*}\ts
   d^{\mu}_j(a_j,s_j)= \E\left[\nu_{0:j}\left\{\sum_{i=0}^{j}g_i(a_i|s_i)\right\} \mid a_j,s_j\right], 
\end{align*}
noting  $$\ts
 \nabla \nu_{0:j}=\nu_{0:j} \nabla \log \nu_{0:j}=\nu_{0:j} \{\sum_{i=0}^{j}g_i(a_i|s_i)\}. $$
\end{proof}

\begin{proof}[Proof of Theorem \ref{thm:Bellman}]
The following recursive equations (Bellman equations) hold:
\begin{align*}\ts
    q_j(s_j,a_j) &= \E[r+ q_{j+1}(s_{j+1},\thpol)\mid s_j,a_j],\\
    \mu_j(s_j,a_j)&= \E[\mu_{j-1}(s_{j-1},a_{j-1})\tilde\nu_{j}|s_j,a_j]. 
\end{align*}
Then, by differentiating w.r.t $\theta$, we have 
\begin{align*}\ts
    d^{q}_j(s_j,a_j) &=  \E[ \E_{\thpol}[d^{q}_{j+1}(s_{j+1},a_{j+1})+g_{j+1}(s_{j+1},a_{j+1})q_{j+1}(s_{j+1},a_{j+1})\mid s_{j+1}] |s_j,a_j], \\
     d^{\mu}_j(s_j,a_j) &=\E[ d^{\mu}_{j-1}(s_{j-1},a_{j-1})|s_j,a_j]\tilde\nu_{j}+\E[\mu_{j-1}(s_{j-1},a_{j-1})|s_j,a_j]g_j(a_j,s_j)\tilde\nu_{j} \\
     &=\E[ d^{\mu}_{j-1}(s_{j-1},a_{j-1})|s_j,a_j]\tilde\nu_{j}+\mu_j(s_j,a_j)g_j(a_j,s_j). 
\end{align*}
\end{proof}

\begin{proof}[Proof of Theorem \ref{thm:optimization} ]
We modify the proof of Theorem 1 \citep{khamaru18a} so that we can deal with the noise gradient. In this proof, define $f(\theta)=-J(\theta)$. Then, by $M$-smoothness,
\begin{align*}\ts
    f(\theta)\leq f(\theta_k)+\langle \nabla f(\theta_k),x-\theta_k \rangle +\frac{M}{2}\|x-\theta_k\|_2. 
\end{align*}
Then, by replacing $\theta$ with $\theta_{k+1}=\theta_k-\alpha_k\nabla f(\theta_k)-\alpha_kB_k $, 
\begin{align*}\ts
    f(\theta_{k+1})\leq f(\theta_k)+\langle \nabla f(\theta_k),\theta_{k+1}-\theta_k \rangle +\frac{M}{2}\|\theta_{k+1}-\theta_k\|_2. 
\end{align*}
Thus, 
\begin{align*}\ts
    f(\theta_k)-f(\theta_{k+1}) & \geq -\langle \nabla f(\theta_k),\theta_{k+1}-\theta_k \rangle -\frac{M}{2}\|\theta_{k+1}-\theta_k\|^2_2 \\ 
     & = \alpha_k \langle \nabla f(\theta_k),\nabla f(\theta_k)+B_k \rangle -\frac{M\alpha_k^2}{2}\|\nabla f(\theta_k)+B_k\|^2_2 \\ 
    & = \alpha_k \|\nabla f(\theta_k)\|^2_2+\alpha_k \langle \nabla f(\theta_k), B_k\rangle -\frac{M\alpha_k^2}{2}\|\nabla f(\theta_k)+B_k\|^2_2 \\ 
     & = \alpha_k \|\nabla f(\theta_k)\|^2_2-\alpha_k |\langle \nabla f(\theta_k), B_k\rangle| -\frac{M\alpha_k^2}{2}\|\nabla f(\theta_k)+B_k\|^2_2 \\   
  & \geq \alpha_k \|\nabla f(\theta_k)\|^2_2-0.5\alpha_k(\|\nabla f(\theta_k)\|^2+\|B_k\|^2_2) -M\alpha_k^2(\|\nabla f(\theta_k)\|^2_2+\|B_k\|^2_2) \\ 
  & \geq 0.25\alpha_k \|\nabla f(\theta_k)\|^2_2-0.5\alpha_k \|B_k\|^2_2-0.25\alpha_k \|B_k\|^2_2.
\end{align*}
Here, from the fourth line to the fifth line, we use inequalities parallelogram law:
\begin{align*}\ts
    2|\langle a,b \rangle| < \|a\|^2_2+\|b\|^2_2,\,
     \|a+b\|^2_2 \leq 2\|a\|^2_2+2\|b\|^2_2. 
\end{align*}
From the fifth line to the sixth line, we use a condition regarding $M$. This yields, 
\begin{align*}\ts
      f(\theta_k)-f(\theta_{k+1})+0.75\alpha_k\|B_k\|^2_2 \geq 0.25\alpha_k \|\nabla f(\theta_k)\|^2_2.
\end{align*}
Then, by telescoping sum,  
\begin{align*}\ts
   \frac{1}{T}\{f(\theta_1)-f^{*}\}+\frac{1}{T}\sum_t 0.75\alpha_t\|B_t\|^2_2 \geq \frac{1}{T}\sum_t 0.25\alpha_t \|\nabla f(x_t)\|^2_2.
\end{align*}
Noting $f(\theta)=-J(\theta)$, 
\begin{align*}\ts
   \frac{1}{T}\{J^{*}-J(\theta_1)\}+\frac{1}{T}\sum_t 0.75\alpha_t\|B_t\|^2_2 \geq \frac{1}{T}\sum_t 0.25\alpha_t \|\nabla J(x_t)\|^2_2.
\end{align*}
Expanding by $4$ yields the result.
\end{proof}

\begin{proof}[Proof of Theorem \ref{thm:error}]
Here,  
\begin{align*}\ts
    B_t=\E_n[\xi_{\MDP}(\theta_t)-Z(\theta_t)]+\op(n^{-1/2}). 
\end{align*}
from the proof of Theorem \ref{thm:db}. Then, the $j^{\text{th}}$ component of $B^{2}_{t}$ is 
\begin{align}\ts
\label{eq:component}
    B^{2}_{t,j}=(\E_n[\xi_{\MDP,j}(\theta_t)-Z_j(\theta_t)]+\op(n^{-1/2}))^2= \E_n[\xi_{\MDP,j}(\theta_t)-Z_j(\theta_t)]^2+\op(n^{-1}),
\end{align}
where $\xi_{\MDP,j}$ is a $j^{\text{th}}$ component of IF and $Z_{j}$ is a $j$-th term of $Z(\theta)$, $\xi_{\MDP}(\theta)$ is $\xi_{\MDP}$ at $\theta$. Here, we use $O_p(n^{-1/2})\op(n^{-1/2})=\op(n^{-1}),\,\op(n^{-1/2})\op(n^{-1/2})=\op(n^{-1})$. 
Here, noting $\theta_t$ is a random variable, we have to bound the main term uniformly as 
\begin{align*}\ts
   \E_n[\xi_{\MDP,j}(\theta_t)-Z_j(\theta_t)]^2\leq (\sup_{\theta \in \Theta}\E_n[\xi_{\MDP,j}(\theta)-Z_j(\theta)])^2. 
\end{align*}

By following Theorem 8.5 \citep{empirical} based on a standard empirical process theory combining Rademacher complexity and Talagland inequality, with probability $1-\delta$, 
\begin{align*}\ts
    & \sup_{\theta \in \Theta}\E_n[\xi_{\MDP,j}(\theta)-Z_j(\theta)] \\
    &\lnapprox  \E[\sup_{\theta \in \Theta}|\frac{1}{n}\sum_{i=1}^{n} \epsilon^{(i)} \xi^{(i)}_{\MDP,j}(\theta)| ]+\sqrt{\frac{\sup_{\theta \in \Theta} \var[\xi_{\MDP,j}] \log(1/\delta) }{n}}+\frac{\log (1/\delta)}{n} \\
    &\lnapprox L\sqrt{D/n}+ \sqrt{ \frac{C_2 G_{\max} R^2_{\max}(H+1)^2(H+2)^2 \log(1/\delta)}{n}}+\frac{\log (1/\delta)}{n}. 
\end{align*}
Then, with probability $1-\delta$, 
\begin{align*}\ts
    & \{\sup_{\theta \in \Theta}\E_n[\xi_{\MDP,j}(\theta)-Z_j(\theta)]\}^2 \\
    &\lnapprox \left \{L\sqrt{D/n} + \sqrt{ \frac{C_2 G_{\max}R^2_{\max}(H+1)^2(H+2)^2 \log(1/\delta)}{n}}+\frac{\log (1/\delta)}{n}\right\}^2. 
\end{align*}
Here, we also use the Rademacher complexity of the Lipschitz class on Euclidean ball is bounded by $L\sqrt{D/n}$ based on the assumption $\Theta$ is a compact space (Example 4.6 \citep{empirical}). 

Considering an error term $\op(1/n)$ in \eqref{eq:component} and taking an union bound over $t\in [1,\cdots,T],j\in [1,\cdots,D]$, we conclude that there exists $N_{\delta}$ such that with probability at least $1-\delta$, $\forall n\geq N_{\delta}$ 
\begin{align*}\ts
     \frac{1}{T}\sum_t \|B_t\|^2_{2}  &=    \frac{1}{T}\sum_{t=1}^T \sum_{j=1}^D B^2_{t,j}  \\
     &\lnapprox D\left \{L\sqrt{\frac{D}{n}} + \sqrt{ \frac{C_2  G_{\max}R^2_{\max}(H+1)^2(H+2)^2 \log(TD/\delta)}{n}}+\frac{\log (TD/\delta)}{n}\right\}^2 \\ 
         &\lnapprox D \frac{L^2 D+C_2 G_{\max}R^2_{\max}(H+1)^2(H+2)^2 \log(TD/\delta)}{n}. 
\end{align*}
\end{proof}

\begin{proof}[Proof of \cref{thm:convexity}]

We modify the proof of Theorem 3.1 \citep{HazanElad2015ItOC} so that we can deal with a noise gradient. Define $-J(\theta)$ as $f(\theta)$ and redefine $Z(\theta),\hat Z(\theta)$ as $-Z(\theta),-\hat Z^{\DO}(\theta)$. Then, the algorithm is redefined as 
\begin{itemize}
    \item $\tilde \theta_t=\theta_t-\alpha_t \hat Z(\theta_t)$
    \item $\theta_t=\Proj_{\Theta} \tilde \theta_t$
\end{itemize}

Then, from convexity assumption for $-J(\theta)$, 
\begin{align*}\ts
    f(\theta_t)-f(\theta^{*})\leq Z(\theta_t) (\theta_t-\theta^{*}). 
\end{align*}
In addition, from Theorem 2.1 \citep{HazanElad2015ItOC}, 
\begin{align*}\ts
    \|\theta_{t+1}-\theta^{*} \|_{2} = \|\Proj_{\Theta}(\theta_t-\alpha_t \hat Z(\theta_t))-\theta^{*}   \|_{2}\leq \|\theta_t-\alpha_t \hat Z(\theta_t)-\theta^{*} \|_{2}.  
\end{align*}
Hence,
\begin{align}\ts \label{eq:inner_inner}
    2 \langle \hat Z(\theta_t), (\theta_t-\theta^{*})\rangle \leq \frac{\|\theta_t-\theta^{*} \|^2_{2}-\| \theta_{t+1}-\theta^{*}\|^2_{2} }{\alpha_t}+\alpha_t \|\hat Z(\theta_t) \|^2_{2}. 
\end{align}
Noting 
\begin{align*}\ts
    \|\hat Z(\theta_t) \|^2_{2} =\|B_t+Z(\theta_t)\|^2_{2} \leq 2\|B_t\|^2_{2}+ 2\|Z(\theta_t)\|^2_{2},  
\end{align*}
as in the proof of Theorem \ref{thm:error}, there exists $N_{\delta}$ such that $n\geq N_{\delta}$ with probability at least $1-\delta$, 
\begin{align*}\ts
    2 \langle Z(\theta_t), (\theta_t-\theta^{*})\rangle \lnapprox  \frac{\|\theta_t-\theta^{*} \|^2_{2}-\| \theta_{t+1}-\theta^{*}\|^2_{2} }{\alpha_t}+(\sup_{\theta \in \Theta } \|Z(\theta)\|^2+\tilde U)\alpha_t,
\end{align*}
where $\tilde U= D \frac{L^2 D+C_2 G_{\max}R^2_{\max}(H+1)^2(H+2)^2 \log(D/\delta)}{n}.$ 
Then, based on 
\begin{align}\ts
\label{eq:target}
   \sum_{t=1}^{T} f(\theta_t)-f(\theta^{*}) & \leq \sum_{t=1}^{T} \langle Z(\theta),(\theta_t-\theta^{*}) \rangle \nonumber \\ 
   &\leq \sum_{t=1}^{T}\langle \hat Z^{\DO}(\theta), (\theta_t-\theta^{*}) \rangle +\sum_{t=1}^{T} \langle Z(\theta)-\hat Z^{\DO}(\theta), (\theta_t-\theta^{*}) \rangle, 
\end{align}
we analyze the first term and second term of \eqref{eq:target}.

\textbf{First term of \eqref{eq:target}}

From \eqref{eq:inner_inner}, there exists $N_{\delta}$ such that $n> N_{\delta}$ with at least $1-\delta$,
\begin{align*}\ts
    \sum_{t=1}^{T} \langle \hat Z^{\DO}(\theta), (\theta_t-\theta^{*})\rangle \lnapprox   \sum_{t=1}^{T} \frac{\|\theta_t-\theta^{*} \|^2_{2}-\| \theta_{t+1}-\theta^{*}\|^2_{2} }{\alpha_t}+(U+\sup_{\theta \in \Theta}\| Z(\theta)\|_2)\alpha_t. 
\end{align*}
where $U= D \frac{L^2 D+C_2 G_{\max}R^2_{\max}(H+1)^2(H+2)^2 \log(TD/\delta)}{n}.$  (We also take an union bound over $t$). Then, under this event, 
\begin{align*}\ts
    & \sum_{t} \frac{\|\theta_t-\theta^{*} \|^2_{2}-\| \theta_{t+1}-\theta^{*}\|^2_{2} }{\alpha_t}+ \sum_t \{ \sup \|Z(\theta)\|^2_2+U\} \alpha_t \\
    &\leq  \sum_t \|\theta_t-\theta^{*} \|^2_{2}(\frac{1}{\alpha_t}-\frac{1}{\alpha_{t+1}})+ \sum_t \{ \sup \|Z(\theta)\|^2_2+U\} \alpha_t  \\ 
    &\leq  \sum_{t=1}^{T} \Upsilon^2(\frac{1}{\alpha_t}-\frac{1}{\alpha_{t+1}})+ \sum_t \{ \|\sup Z(\theta)\|^2_2+U\} \alpha_t  \\ 
    &\leq  \Upsilon^2\frac{1}{\alpha_T}+\{ \sup Z(\theta)^2+U\}  \sum_t \alpha_t \lnapprox \Upsilon\sqrt{\{ \sup \|Z(\theta)\|^2_2+U\} T},
\end{align*}
Here, we take $\alpha_t=\Upsilon/\sqrt{t \{ \sup_{\theta\in\Theta} \|Z(\theta)\|^2+U\}}$. 
The last inequality follows since $\sum 1/\sqrt{t}\leq  \sqrt{T}$. 

\textbf{Second term  of \eqref{eq:target}}

We have 
\begin{align*}\ts 
\sum_{t=1}^{T} \{Z(\theta)-\hat Z^{\DO}(\theta)\}^{\top}(\theta_t-\theta^{*}) 
\leq \sum_{t=1}^{T} \{Z(\theta)-\hat Z^{\DO}(\theta)\}^{\top}(\theta_t-\theta^{*}) \leq \sum_{t=1}^{T} \|B_t\|_{2} \times  \Upsilon. 
\end{align*} 
Then, as in the proof of Theorem \ref{thm:error}, with probability $1-\delta$, this is bounded by 
\begin{align*}\ts
\sum_{t=1}^{T} \|B_t\|_{2} \times \Upsilon \lnapprox T\times \sqrt{U}\times \Upsilon. 
\end{align*}

\textbf{Combining the first term and second term of \eqref{eq:target}}

We combine the first term and second term of \eqref{eq:target}. Then, we have
\begin{align*}
     f\left( \frac{1}{T}\sum_{t=1}^{T}\theta_t\right)-f(\theta^{*}) &<\frac{1}{T}\left (\sum_{t=1}^{T} f(\theta_t)-f(\theta^{*})\right)\\
     &\lnapprox \sqrt{U}\Upsilon +\Upsilon\sqrt{ \frac{\sup\|Z(\theta)\|^2_2+U }{T}} \\ 
     &\lnapprox \Upsilon \left\{\sqrt{ \frac{\sup\|Z(\theta)\|^2_2 }{T}} + \sqrt{U}\left(1+\frac{1}{\sqrt{T}}\right)\right\}. 
\end{align*}
Here, we use an inequality $\sqrt{x+y}\leq \sqrt{x}+\sqrt{y}$ for $x>0,y>0$. 
\end{proof}

\end{document}